
\documentclass[final]{MIT8X10M}
\slugtoks{Hazan, Papandreou, Tarlow: Perturbations, Optimization, and Statistics}


\usepackage{fleqn}    
\usepackage{booktabs} 
\usepackage{multirow} 

\usepackage{subfig}
\usepackage{subfloat}
\newcounter{subfigure@save}[figure]  

\usepackage{color}
\usepackage{ifthen}
\usepackage{eucal}

\usepackage{graphicx}

\usepackage[sectionbib]{chapterbib_numbered_bibliography}
\usepackage[sectionbib,round]{natbib}


\usepackage[fleqn]{amsmath}
\usepackage{amssymb}
\usepackage{amsbsy}

\usepackage[chapter]{algorithm}
\usepackage{algorithmicx}
\usepackage{algpseudocode} 
\usepackage[ruled,vlined]{algorithm2e} 


\usepackage{amsthm}
\newtheorem{theorem}{Theorem}[chapter]

\newtheorem{proposition}[theorem]{Proposition}
\newtheorem{corollary}[theorem]{Corollary}

\usepackage{url}


\usepackage{smallcaptions}

\usepackage{bm}

\usepackage{pdfpages}

\usepackage{titletoc}











\DeclareMathOperator*{\argmax}{argmax}



\newcommand{\Algo}[1]{Algorithm~\protect\ref{#1}}


\newcommand{\Fig}[1]{Figure~\protect\ref{#1}}

\newcommand{\tab}[1]{Table~\protect\ref{#1}}

\newcommand{\Sec}[1]{Section~\protect\ref{#1}}



\mathchardef\ordinarycolon\mathcode`\:
\mathcode`\:=\string"8000
\begingroup \catcode`\:=\active
  \gdef:{\mathrel{\mathop\ordinarycolon}}
\endgroup

\newenvironment{abstract}
 {\begin{list}{}
    {\setlength{\leftmargin}{0em}
     \setlength{\rightmargin}{0em}
     \setlength{\itemsep}{0pt}
     \setlength{\topsep}{0pt}} 
    \item[] \em }
 {\end{list}\par\vspace{8ex}\par}

\newenvironment{authors}
 {\vspace{-4ex}\begin{list}{}
    {\setlength{\leftmargin}{-.5\marginpartotal}
     \setlength{\rightmargin}{0pt}
     \setlength{\itemsep}{0pt}
     \setlength{\topsep}{0pt}} 
    \item[] }
 {\end{list}\par\vspace{8ex}\par}

\def\AUname#1{\par\makebox[2.8in][l]{\bf #1}}
\def\AUemail#1{{\tt #1}}
\def\AUweb#1{}
\def\AUaffiliation#1{\\\emph{#1}}
\def\AUaddress#1{\\\emph{#1}\par\medskip}

\makeatletter
\renewcommand{\small}{\@setfontsize\normalsize\@xpt\@xipt}
\renewcommand\appendix[1]{\par%
  \addtocounter{section}{1}%
  \setcounter{subsection}{0}%
  \section*{Appendix: #1}%
}
\makeatother

\numberwithin{equation}{chapter}



\begin{document}

\raggedbottom

%
%
%
%
%
%
%
%
%
%
%




\newcommand{\dataset}{{\cal D}}
\newcommand{\fracpartial}[2]{\frac{\partial #1}{\partial  #2}}
\newcommand{\bx}{{\bf x}}
\newcommand{\hx}{\hat{\bf x}}
\newcommand{\hv}{\hat{\bf v}}
\newcommand{\hpi}{\hat{\pi}}
\newcommand{\hphi}{\hat{\phi}}
\newcommand{\hta}{\hat{\theta}}
\newcommand{\tx}{\tilde{\bf x}}
\newcommand{\by}{{\bf y}}
\newcommand{\bw}{{\bf w}}
\newcommand{\hw}{{\bf \hat{w}}}
\newcommand{\bff}{{\bf f}}
\newcommand{\bgg}{{\bf g}}
\newcommand{\bv}{{\bf v}}
\newcommand{\bp}{{\bf p}}
\newcommand{\tp}{\tilde{p}}
\newcommand{\bu}{{\bf u}}
\newcommand{\bs}{{\bf s}}
\newcommand{\btt}{{\bf t}}
\newcommand{\bz}{{\bf z}}
\newcommand{\bhb}{{\bf \hbar}}
\newcommand{\bhh}{{\bf \hat{h}}}
\newcommand{\tih}{{\tilde h}}
\newcommand{\bd}{{\bf d}}
\newcommand{\bfe}{{\bf e}}
\newcommand{\bn}{{\bf n}}
\newcommand{\bE}{{\bf E}}
\newcommand{\bM}{{\bf M}}
\newcommand{\bN}{{\bf N}}
\newcommand{\bW}{{\bf W}}
\newcommand{\hW}{{\bf \hat{W}}}
\newcommand{\hV}{{\bf \hat{V}}}
\newcommand{\hp}{{\tilde{p}}}
\newcommand{\pcd}{{p_{\tiny\bf CD}}}
\newcommand{\bP}{{\bf P}}
\newcommand{\bU}{{\bf U}}
\newcommand{\bV}{{\bf V}}
\newcommand{\bH}{{\bf H}}
\newcommand{\bS}{{\bf S}}
\newcommand{\bX}{{\bf X}}
\newcommand{\bL}{{\bf L}}
\newcommand{\baa}{{\bf a}}
\newcommand{\bbb}{{\bf b}}
\newcommand{\bee}{{\bf e}}
\newcommand{\bi}{{\bf i}}
\newcommand{\bj}{{\bf j}}
\newcommand{\bo}{{\bf o}}
\newcommand{\bzero}{{\bf 0}}
\newcommand{\bcc}{{\bf c}}
\newcommand{\bI}{{\bf I}}
\newcommand{\bG}{{\bf G}}
\newcommand{\bJ}{{\bf J}}
\newcommand{\bA}{{\bf A}}
\newcommand{\bB}{{\bf B}}
\newcommand{\bC}{{\bf C}}
\newcommand{\bK}{{\bf K}}
\newcommand{\bq}{{\bf q}}
\newcommand{\br}{{\bf r}}
\newcommand{\bxi}{\boldsymbol{\xi}}
\newcommand{\bze}{\boldsymbol{\zeta}}
\newcommand{\bnu}{\boldsymbol{\nu}}
\newcommand{\boeta}{\boldsymbol{\eta}}
\newcommand{\bR}{{\bf R}}
\newcommand{\bQ}{{\bf Q}}
\newcommand{\bD}{{\bf D}}
\newcommand{\bg}{{\bf g}}
\newcommand{\bGam}{{\bf \Gamma}}
\newcommand{\bbt}{\boldsymbol{\beta}}
\newcommand{\bLa}{{\bf \Lambda}}
\newcommand{\bOm}{{\bf \Omega}}
\newcommand{\Om}{{\Omega}}
\newcommand{\bpsi}{\boldsymbol{\psi}}
\newcommand{\bF}{{\bf F}}
\newcommand{\eE}{\mathbb{E}}
\newcommand{\eI}{\mathbb{I}}
\newcommand{\eV}{\mathbb{V}}
\newcommand{\eG}{\mathbb{G}}
\newcommand{\eP}{\mathbb{P}}
\newcommand{\eS}{\mathbb{S}}
\newcommand{\bmu}{\boldsymbol{\mu}}
\newcommand{\bal}{\boldsymbol{\alpha}}
\newcommand{\btau}{\boldsymbol{\tau}}
\newcommand{\brho}{\boldsymbol{\rho}}
\newcommand{\bbet}{\boldsymbol{\beta}}
\newcommand{\bgam}{\boldsymbol{\gamma}}
\newcommand{\bka}{\boldsymbol{\kappa}}
\newcommand{\bde}{\boldsymbol{\delta}}
\newcommand{\bPhi}{\boldsymbol{\Phi}}
\newcommand{\bzeta}{\boldsymbol{\zeta}}
\newcommand{\bb}{{\bf b}}
\newcommand{\bsg}{\boldsymbol{\sigma}}
\newcommand{\bSig}{{\bf \Sigma}}
\newcommand{\Sig}{{\Sigma}}
\newcommand{\eps}{\varepsilon}
\newcommand{\Lam}{{\Lambda}}
\newcommand{\bLam}{{\bf \Lambda}}
\newcommand{\al}{\alpha}
\newcommand{\ga}{\gamma}
\newcommand{\bga}{\boldsymbol{\gamma}}
\newcommand{\la}{\lambda}
\newcommand{\bla}{\boldsymbol{\lambda}}
\newcommand{\bta}{\boldsymbol{\theta}}
\newcommand{\bphi}{\boldsymbol{\phi}}
\newcommand{\bvphi}{\boldsymbol{\varphi}}
\newcommand{\pol}{\boldsymbol{\varphi}}
\newcommand{\bpi}{\boldsymbol{\pi}}
\newcommand{\ta}{\theta}
\newcommand{\sg}{\sigma}
\newcommand{\ka}{\kappa}
\newcommand{\beps}{\boldsymbol{\varepsilon}}
\newcommand{\law}{\leftarrow}
\newcommand{\ra}{\rightarrow}
\newcommand{\Ra}{\Rightarrow}
\newcommand{\de}{\delta}
\newcommand{\bt}{\beta}
\newcommand{\ha}{\frac{1}{2}}
\newcommand{\bone}{{\bf 1}}
\newcommand{\cC}{{\cal C}}
\newcommand{\cF}{{\cal F}}
\newcommand{\cG}{{\cal G}}
\newcommand{\cA}{{\cal A}}
\newcommand{\cM}{{\cal M}}
\newcommand{\cJ}{{\cal J}}
\newcommand{\cN}{{\cal N}}
\newcommand{\cP}{{\cal P}}
\newcommand{\cD}{{\cal D}}
\newcommand{\cE}{{\cal E}}
\newcommand{\cL}{{\cal L}}
\newcommand{\cT}{{\cal T}}
\newcommand{\cX}{{\cal X}}
\newcommand{\cY}{{\cal Y}}
\newcommand{\cB}{{\cal B}}
\newcommand{\cS}{{\cal S}}
\newcommand{\cQ}{{\cal Q}}
\newcommand{\cH}{{\cal H}}
\newcommand{\cZ}{{\cal Z}}
\newcommand{\cR}{{\cal R}}
\newcommand{\cO}{{\cal O}}
\newcommand{\cW}{{\cal W}}
\newcommand{\cU}{{\cal U}}
\newcommand{\cV}{{\cal V}}
\newcommand{\cch}{{\cal c}}
\newcommand{\cNIW}{{\cal NIW}}
\newcommand{\tA}{{\tilde A}}
\newcommand{\tS}{{\tilde S}}
\newcommand{\tpi}{{\tilde \pi}}
\newcommand{\tbt}{{\tilde \bt}}
\newcommand{\tmu}{{\tilde \mu}}
\newcommand{\tta}{{\tilde \theta}}
\newcommand{\tit}{{\tilde t}}
\newcommand{\tf}{{\tilde f}}
\newcommand{\tsg}{{\tilde \sigma}}
\newcommand{\tP}{{\tilde P}}
\newcommand{\bwp}{{\bf\wp}}
\newcommand{\hN}{{\hat{N}}}
\newcommand{\barN}{{\bar{N}}}
\newcommand{\blamp}{{\bla^{\tt MP}}}
\newcommand{\map}{{\bf\tiny MAP}}
\newcommand{\nN}{\frac{1}{N}\sum_{n=1}^N}
\newcommand{\sN}{\sum_{n=1}^N}
\newcommand{\sK}{\sum_{i=1}^K}
\newcommand{\sjK}{\sum_{j=1}^K}
\newcommand{\tr}{{\bf tr}}
\newcommand{\CF}{{\bf C\!F}}
\newcommand{\bra}{\left\langle}
\newcommand{\ket}{\right\rangle}
\newcommand{\dd}{\partial}
\newcommand{\nn}{\nonumber}
\newcommand{\grad}{\nabla}
\newcommand{\KL}{\mbox{KL}}
\newcommand{\bks}{\backslash}
\newcommand{\equil}{{\mbox{\tiny EQ}}}
\newcommand{\bethe}{{\mathtt{BP}}}
\newcommand{\tDP}{{\text{DP}}}
\newcommand{\td}{\text{d}}
\newcommand{\Pa}{{\mbox{pa}}}
\newcommand{\Ch}{{\mbox{ch}}}
\newcommand{\Sd}{{\Sig_\text{dyn}}}
\newcommand{\Su}{\Sig_\text{u}}
\newcommand{\Sg}{\Sig_\text{g}}
\newcommand{\Ska}{\Sig_{\ka}}
\newcommand{\Cw}{C_\text{wall}}
\newcommand{\tR}{\tilde{R}}
\newcommand{\hP}{{\hat{P}}}
\newcommand{\hJ}{{\hat{J}}}
\newcommand{\hQ}{{\hat{Q}}}

\newcommand{\be}{\begin{equation}}
\newcommand{\ee}{\end{equation}}

\newcommand{\defeq}{\overset{\textnormal{def}}=}


\chapter[Herding as a Learning System with Edge-of-Chaos Dynamics\\%
{\normalsize\rm\emph{Y.~Chen, and M.~Welling}}]%
{Herding as a Learning System with Edge-of-Chaos Dynamics}

\label{chapter:welling}

\markboth{Herding as a Learning System with Edge-of-Chaos Dynamics}{}

\begin{authors}
\AUname{Yutian Chen}
\AUemail{yutianc@google.com}
\AUaffiliation{Google DeepMind}
\AUaddress{London, UK}
\AUname{Max Welling}
\AUemail{m.welling@uva.nl}
\AUaffiliation{University of Amsterdam}
\AUaddress{Amsterdam, Netherlands}
\end{authors}


\begin{abstract}
Herding defines a deterministic dynamical system at the edge of chaos. 
It generates a sequence of model states and parameters by alternating parameter perturbations with state maximizations,
where the sequence of states can be interpreted as ``samples'' from an associated MRF model.  
Herding differs from maximum likelihood estimation in that the sequence of parameters does not converge to a fixed point 
and differs from an MCMC posterior sampling approach in that the sequence of states is generated deterministically. 
Herding may be interpreted as  a``perturb and map" method where the parameter perturbations are generated using a deterministic nonlinear dynamical system
rather than randomly from a Gumbel distribution. This chapter studies the distinct statistical
characteristics of the herding algorithm and shows that the fast convergence rate of the controlled moments may 
be attributed to edge of chaos dynamics. The herding algorithm can also be generalized to
models with latent variables and to a discriminative learning setting. The perceptron cycling theorem
ensures that the fast moment matching property is preserved in the more general framework.
\end{abstract}

\section{Introduction}
The traditional view of a learning system is one where an initial parameter vector $\bw_0$ is updated until some convergence criterion is met: $\bw_0,\bw_1,..,\bw_T$ with (in theory) $T\ra\infty$ and $\bw_\infty=\bw^*$ a fixed point of the updates. These updates usually maximize some objective such as the log-likelihood of the data. We can view this process as a dynamical system with a contractive map $\bw_{t+1}=F_t(\bw_t)$ which is designed to iterate to a fixed point. The map $F_t$ can be either deterministic or stochastic. For instance, batch gradient descent is an example of a deterministic map while stochastic gradient descent is an example of a stochastic map. A natural question is whether the existence of a fixed point $\bw^*$ is important, and whether meaningful learning systems can exist that do not converge to any fixed point but traverse an attractor set. To answer this question we can draw inspiration from Markov chain Monte Carlo (MCMC) procedures which generate samples from a posterior distribution $P(\bw|\cD)$ (with $\cD$ indicating the data). MCMC also generates a sequence of parameter values $\bw_0,..,\bw_T$ but one that does not converge to a fixed point. Rather the samples form an attractor set with a measure (density) equal to the posterior distribution. One can make meaningful predictions with MCMC chains by making predictions for every sampled model $\bw_t$ separately and subsequently averaging the predictions. There is also evidence that learning in the brain is a dynamical process. For instance, \cite{aihara1982temporally} have described chaotic dynamics in the Hodgkin-Huxley equations for membrane dynamics and studied them experimentally in squid giant axons. Also, much evidence has now been accumulated that synapses are subject to fast dynamical processes such as postsynaptic depression and facilitation \citep{TsodyksPawelzikMarkram98}.

Herding \citep{Welling09A} is perhaps the first learning dynamical system based on a deterministic map and with a nontrivial attractor (i.e. not a single fixed point). It emerged from taking the limit of infinite stepsize in the usual (maximum likelihood) updates for a Markov random field (MRF) model. It can be observed that in this limit the parameters will not converge to a fixed point but rather traverse a usually non-periodic trajectory in weight space. The information contained in the data is now stored in the trajectories (or the attractor) of this dynamical system, rather than in a point estimate of a collection of parameters. In fact it can be shown that this dynamical system is neither periodic (under some conditions) nor chaotic, a state which is associated with ``edge of chaos" dynamics. As illustrated in this chapter, by slowly increasing the stepsize (or equivalently lowering the temperature) we will move from a standard MRF maximum likelihood learning system with a single fixed point, through a series of period doublings to a system on the edge of chaos. One can show that the attractor is sometimes fractal, and that the Lyapunov exponents of this system are equal to $0$ implying that two nearby trajectories will eventually separate but only polynomially fast (and not exponentially fast as with chaotic systems). Many of the dynamical properties of this system are described by the theory of ``piecewise isometries" \citep{goetz2000dynamics}.  

Herding can thus be viewed as a dynamical system that generates state-space samples $\bs_1,..,\bs_T$ that are highly similar to the samples that would be generated by a learned MRF model with the same features. The state-space samples satisfy the usual moment matching constraints that defines an MRF and can be used for making meaningful predictions. In a way, herding combines learning and inference in one dynamical system. However, the distribution from which herding generates samples is not identical to the associated MRF because while the same moment matching constraints are satisfied, the entropy of the herding samples is usually somewhat lower than the (maximal) entropy of the MRF. The sequence of samples in state space $\bs_1,..,\bs_T$ has very interesting properties. First, it forms an infinite memory sequence as every sample depends on all the previous samples and not just the most recent sample as in Markov sequences. It can be shown that the number of distinct subsequences of length $T$ grows as $\cO(\log(T))$ implying that their (topological) entropy vanishes. For simple systems these sequences can be identified with ``low discrepancy sequences'' and Sturmian sequences \citep{MorseHedlund40}. Probably related to this is the fact that Monte Carlo averages based on these sequences converge as $\cO(1/T)$. This should be contrasted with random independent samples from the associated MRF distribution for which the convergence follows the usual $\cO(1/\sqrt{T})$ rate. Herding sequences thus exhibit strong negative auto-correlations leading to the faster convergence of Monte Carlo averages. It is conjectured that this property is related to the edge of chaos characterization of herding, and that both stochastic systems (such as samplers) as well as fully chaotic systems will always generate samples that can at most result in $\cO(1/\sqrt{T})$ convergence of Monte Carlo averages. 

Similar to ``perturb and map'' \citep{PaYu11a}, the execution of the herding map requires one to compute the maximum a posteriori (MAP) state defined by the current parameter setting. While maximization is sometimes easier than computing the expectations required to update the parameters of an MRF, for complex models maximization can also be NP hard. A natural question is therefore if one can relax the requirement of finding the MAP state and get away with partial maximization to, say, a local maximum instead of the global maximum. The answer to this question comes from a theorem that was proven a long time ago in the context of Rosenblatt's perceptron \citep{rosenblatt1958perceptron} and is known as the ``perceptron cycling theorem" (PCT) \citep{minsky1969perceptrons}. This theorem states precisely which conditions need to be fulfilled by herding at every iteration in order for the algorithm to satisfy the moment constraints. The PCT therefore allows us to relax the condition of finding the MAP state at every iteration, and as a side effect also allows us to run herding in an online setting or with stochastic minibatches instead of the entire dataset. A further relaxation of the herding conditions was described in \cite{chen2014herdingbookchapter} where it was shown that herding with \emph{inconsistent} moments as input (moments that can not be generated by a single joint probability distribution) still makes sense and generates the Euclidean projections of these moments on the marginal polytope. 

Like MRF models can be extended to models with hidden variables and to discriminative models such as the conditional Markov random field (CRF) models, herding can also be generalized along these same dimensions. Herding with hidden variables was described in \cite{Welling09B} and shown to increase the ability of this dynamical system to represent complex dependencies. Conditional herding was described in \cite{GelfandMaatenChenWelling10} and shown to be equivalent to the voted perceptron algorithm \cite{freund1999large} and to Collins' ``voted HMM" \cite{collins2002discriminative} in certain special cases. The herding view allowed the extension of these discriminative models to include hidden variables.

Herding is related to (or has been connected to) a number of optimization, learning and inference methods. Herding has obvious similarities to the concept of ``fast weights'' introduced by \cite{TielemanHinton09}. Fast weights follow a dynamics that is designed to make the Markov chain embedded in a MRF learning process mix fast. A similar idea was used in \cite{breuleux2011quickly} to speed up the mixing rate of an (approximate) sampling procedure. By applying herding dynamics conditionally w.r.t. its parent-states for every variable in a graphical model yet another fast mixing sampling algorithm was developed, called ``herded Gibbs'' \cite{bornn2013herded}. Herding was extended in \cite{ChenSmolaWelling10} to a deterministic sampling algorithm in continuous state spaces (known as ``kernel herding''). The view espoused in that paper led to an analysis of herding as a conditional gradient optimization algorithm (or Franke-Wolfe algorithm) in \cite{Bach2012herding} from which an improved convergence analysis emerged as well generalizations to versions of herding with non-uniform weights. In related work of \cite{Huszar12} it was shown that an optimally weighted version of (kernel) herding is equivalent to Bayesian quadrature, again resulting in faster convergence. \cite{harvey2014near} focused on the convergence rate of herding with respect to the dimensionality of the feature vector and proposed a new algorithm that scaled near-optimally with the dimensionality.

Perhaps the method closest related to herding is ``perturb and map" estimation, where the parameters of a MRF model are perturbed by sampling from a Gumbel distribution followed by maximization over the states. Like in herded Gibbs, the procedure is only ``exact'' if exponentially many parameters are perturbed. Herding is however different from perturb and map in that the perturbations are generated sequentially and deterministically. 

This chapter is built on the results reported earlier in a series of conference papers \cite{Welling09A,Welling09B,WellingChen10,ChenSmolaWelling10,GelfandMaatenChenWelling10}. Our current understanding of herding is far from comprehensive but rather represents a first attempt to connect learning systems with the theory of nonlinear dynamical systems and chaos. We believe that it opens the door to many new directions of research with potentially surprising and exciting discoveries.

The chapter is organized as follows. In \Sec{sec:property} we introduce the herding algorithm and study its statistical property as both a learning algorithm and a dynamical system. In \Sec{sec:extensions} we provide a general condition for herding to satisfy the fast moment matching properties, under which the algorithm is extended for partially observed models and discriminative models. We evaluate the performance of the introduced algorithms empirically in \Sec{sec:experiments}. The chapter is concluded with a summary in \Sec{sec:summary} and a conclusion in \Sec{sec:conclusion}.

\section{Herding Model Parameters}\label{sec:property}

\subsection{The Maximum Entropy Problem and Markov Random Fields}

Define $\bx\in \mathcal{X}$ to be a random variable in the domain $\mathcal{X}$, and $\bphi=\{\phi_\al(\bx)\}$ to be a set of feature functions of $\bx$, indexed by $\al$. In the maximum entropy problem (MaxEnt), given a data set of $D$ observations $\mathcal{D}=\{\bx_i\}_{i=1}^D$, we want to learn a probability distribution over $\bx$, $P(\bx)$, such that the expected features, a.k.a. moments, match the average value observed in the data set, denoted by $\bar{\phi}_\al$. For the remaining degrees of freedom in the distribution we assume maximum ignorance which is expressed as maximum entropy. Mathematically, the problem is to find a distribution $P$ such that:
\be
P = \arg\max_{\mathcal{P}} \mathcal{H}(\mathcal{P})\quad\textrm{s.t.}~\mathbb{E}_{\bx\sim\mathcal{P}}[\phi_\al(\bx)]=\bar{\phi}_\al,~\forall \al \label{eqn:maxent}
\ee

The dual form of the MaxEnt problem is known to be equivalent to finding the maximum likelihood estimate (MLE) of the parameters $\bw=\{w_\al\}$ of a Markov Random Field (MRF) defined on $\bx$, each parameter associated with one feature $\phi_\al$:
\begin{align}
& \bw_{\textrm{MLE}} = \arg\max_{\bw} P(\mathcal{D}; \bw) = \arg\max_{\bw} \prod_{i=1}^D P(\bx_i; \bw), \\
& P(\bx; \bw) = \frac{1}{Z(\bw)} \exp\left(\sum_\al w_\al \phi_\al(\bx)\right), \label{eqn:mle-mrf}
\end{align}
where the normalization term $Z(\bw)=\sum_\bx\exp(\sum_\al w_\al \phi_\al(\bx))$ is also called the partition function. The parameters $\{w_\al\}$ act as Lagrange multipliers to enforce the constraints in the primal form \ref{eqn:maxent}. Since they assign different weights to the features in the dual form, we will also called them ``weights" below.

It is generally intractable to obtain the MLE of parameters because the partition function involves computing the sum of potentially exponentially many states. Take the gradient descent optimization algorithm for example. Denote the average log-likelihood per data item by 
\be
\ell(\bw) \defeq \frac{1}{D}\sum_{i=1}^D\log P(\bx_i;\bw)=\bw^T \bar\bphi - \log Z(\bw) \label{eqn:llh}
\ee
The gradient descent algorithm searches for the maximum of $\ell$ with the following update step:
\be
\bw_{t+1} = \bw_t + \eta (\bar{\bphi} - \mathbb{E}_{\bx\sim P(\bx;\bw_t)}[\bphi(\bx)]) \label{eqn:gd}
\ee
Notice however that the second term in the gradient that averages over the model distribution, $\mathbb{E}_{P(\bx;\bw)}[\bphi(\bx)]$, is derived from the partition function and cannot be computed efficiently in general. A common solution is to approximate that quantity by drawing samples using Markov chain Monte Carlo (MCMC) at each gradient descent step. However, MCMC is known to suffer from slow mixing when the state distribution has multiple modes or variables are strongly correlated \citep{neal1993probabilistic}. Furthermore, we can usually afford to run MCMC for only a few iterations in the nested loop for the sake of efficiency \citep{Neal92,Tieleman08}, which makes it even harder to obtain an accurate estimate of the gradient.

Even when the MRF is well trained, it is usually difficult to apply the model to regular tasks such as inference, density estimation, and model selection, because all of those tasks require the computation of the partition function. One has to once more resort to running MCMC or other approximate inference methods during the prediction phase to obtain an approximation.

Is there a method to speed up the inference step that exists in both the training and test phases? The herding algorithm was proposed to address the slow mixing problem of MCMC and combine the execution of MCMC in both training and prediction phases into a single process.

\subsection{Learning MRFs with Herding}

When there exist multiple local modes in a model distribution, an MCMC sampler is prone to getting stuck in local modes and it becomes difficult to explore the state space efficiently. However, that is not a serious issue at the beginning of the MRF learning procedure as observed by, for example, \cite{TielemanHinton09}. This is because the parameters keep being updated with a large learning rate $\eta$ at the beginning. Specifically, when the expected feature vector is approximated by a set of samples $\mathbb{E}_{P(\bx;\bw)}[\bphi(\bx)]\approx \frac{1}{M} \sum_{m=1}^M \bphi(\bx_m)$ in the MCMC approach, after each update in Equation \ref{eqn:gd}, the parameter $\bw$ is translated along the direction that tends to reduce the inner product of $\bw^T\bphi(\bx_m)$, and thereby reduces the state probability around the region of the current samples. This change in the state distribution helps the MCMC sampler escape local optima and mix faster.

This observation suggests that we can speed up the MCMC algorithm by updating the target distribution itself with a large learning rate. However, in order to converge to a point estimate of a model, $\eta$ needs to be decreased using some suitable annealing schedule. But one may ask if we are necessarily interested in a fixed value for the model parameters? As discussed in the previous subsection, for many applications one needs to compute averages over the (converged) model which are intractable anyway. In that case, a sequence of samples to approximate the averages is all we need. It then becomes a waste of resources and time to nail down a single point estimate of the parameters by decreasing $\eta$ when a sequence of samples is already available. We will actually kill two birds with one stone by obtaining samples during the training phase and reuse them for making predictions. The idea of the herding algorithm originates from this observation.

The herding algorithm proposed in \cite{Welling09A} can be considered as an algorithm that runs a gradient descent algorithm with a constant learning rate on an MRF in the zero-temperature limit. Define the distribution of an MRF with a temperature by replacing $\bw$ with $\bw / T$, where $T$ is an artificial temperature variable. The log-likelihood of a model (multiplied by $T$) then becomes:
\be
\ell_T(\bw) = \bw^T \bar\bphi - T\log\left(\sum_\bx\exp\left(\sum_\al \frac{w_\al}{T} \phi_\al(\bx)\right)\right) \label{eqn:llh_T}
\ee

When $T$ approaches $0$, all the probability is absorbed into the most probable state, denoted as $\bs$, and the expectation of the feature vector, $\bar\bphi$, equals that of state $\bs$. The herding algorithm then consists of the iterative gradient descent updates in the limit, $T\rightarrow 0$, with a constant learning rate, $\eta$:
\begin{align}
\bs_t &= \arg\max_\bx \sum_\al w_{\al,t-1} \phi_\al (\bx) \label{eqn:herding_1} \\
\bw_{t} &= \bw_{t-1} + \eta (\bar\bphi - \bphi(\bs_t)) \label{eqn:herding_2}
\end{align}
We usually set $\eta=1$ except when mentioned explicitly because the herding dynamics is invariant to the learning rate as explained in \Sec{sec:basic_properties}. We treat the sequence of most probable states, $\{\bs_t\}$, as a set of ``samples" for herding and use it for inference tasks. At each iteration, we find the most probable state in the current model distribution deterministically, and update the parameter towards the average feature vector from the training data subtracted by the feature vector of the current sample. Compared to maintaining a set of random samples in the MCMC approach \citep[see e.g.][]{Tieleman08}, updating $\bw$ with a single sample state facilitates updating the distribution at an even rate. 

If we divide both sides of Equation \ref{eqn:herding_2} by $T$ and redefine $\frac{\bw}{T} \rightarrow \bw'$ in both Equations \ref{eqn:herding_1}-\ref{eqn:herding_2}, 
\be
\frac{\bw_{t+1}}{T} = \frac{\bw_t}{T} + \frac{\eta}{T} (\bar{\bphi} - \mathbb{E}_{\bx\sim P(\bx;\frac{\bw_t}{T})}[\bphi(\bx)]) \label{eqn:gd_T}
\ee
we see that, after taking the limit $T\rightarrow\infty$, we can interpret herding as maximum likelihood learning with infinitely large stepsize and rescaled weights. The surprising observation is that the state sequence $\{\bs_t\}$ generated by this process is still meaningful and can be interpreted as approximate samples from an MRF model with the correct moment constraints on the features $\bphi(\bx)$. 

One can obtain an intuitive impression of the dynamics of herding by looking at the change in the asymptotic behavior of the gradient descent algorithm as we decrease $T$ in Equation \ref{eqn:gd_T} from a large value towards $0$. Assume that we can compute the expected feature vector w.r.t. the model exactly. Given an initial value of $\bw$, the gradient descent update equation \ref{eqn:gd_T} with a constant learning rate is a deterministic mapping in the parameter space. When $T$ is large enough ($\eta/T$ is small enough), the optimization process will converge and $\bw/T$ will approach a single point which is the MLE. As $T$ decreases below some threshold ($\eta/T$ is above some threshold), the convergence condition is violated and the trajectory of $\bw_t$ will move asymptotically into an oscillation between two points, that is, the attractor set splits from a single point into two points. As $T$ decreases further, the asymptotic oscillation period doubles from two to four, four to eight, etc, and eventually the process approaches an infinite period at another temperature threshold. \Fig{fig:bif} shows an example of the attractor bifurcation phenomenon. The example model has 4 discrete states and each state is associated with 2 real valued features which are randomly sampled from $\cN(0,1)$. Starting from that second threshold, the trajectory of $\bw$ is still bounded in a finite region as shown shortly in \Sec{sec:pct} but will not be periodic any more. Instead, we observe that the dynamics often converges to a fractal attractor set as shown in the right plot of \Fig{fig:bif}. The bifurcation process is observed very often in simulated models although it is not clear to us if it always happens for any discrete MRF. We discuss the dynamics related to this phenomenon in more detail in \Sec{sec:weak_chaos}.

\begin{figure}[tb!]
  \centering
    \includegraphics[width=0.48\textwidth]{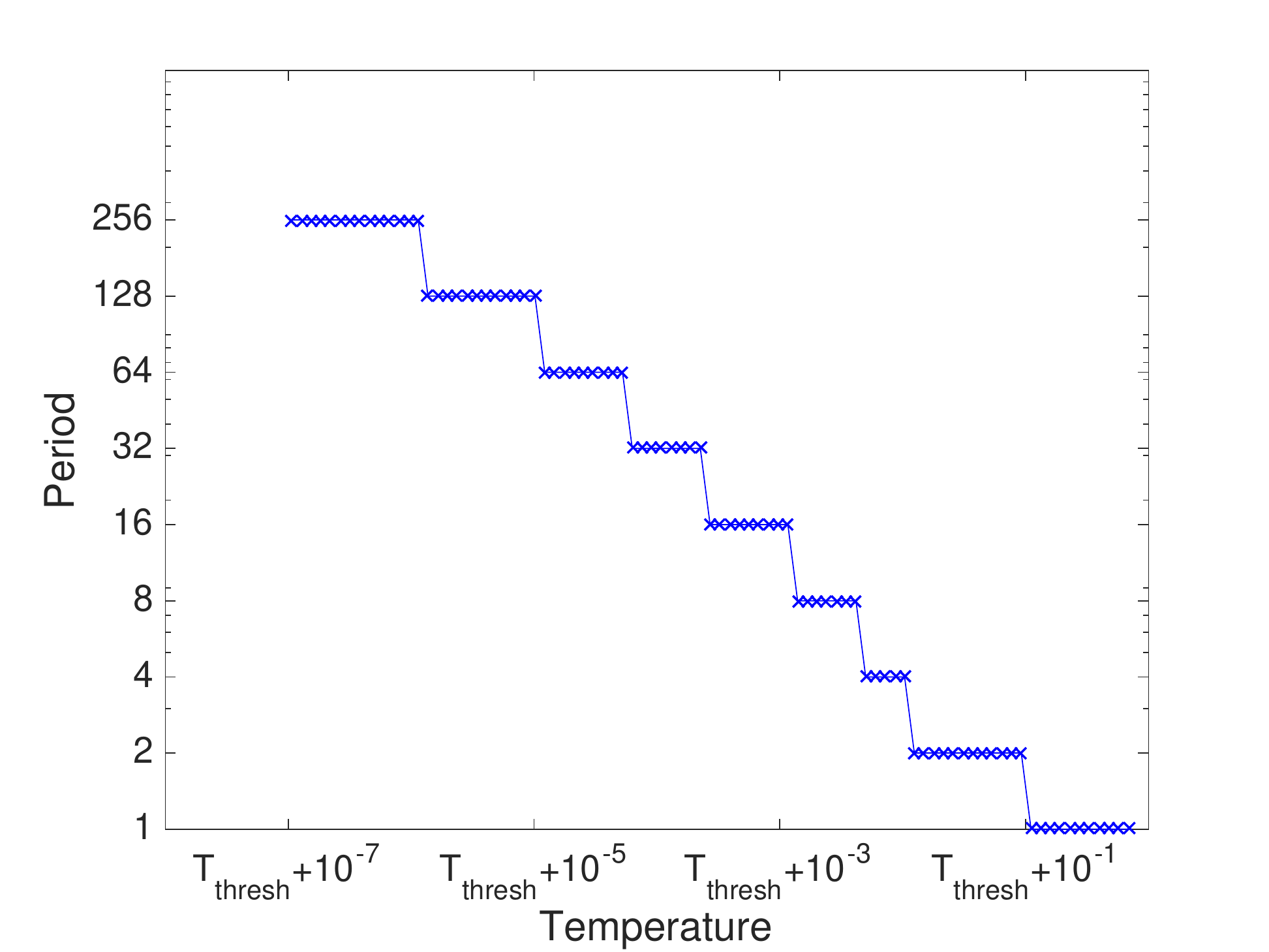}
  \hfill
    \includegraphics[width=0.48\textwidth]{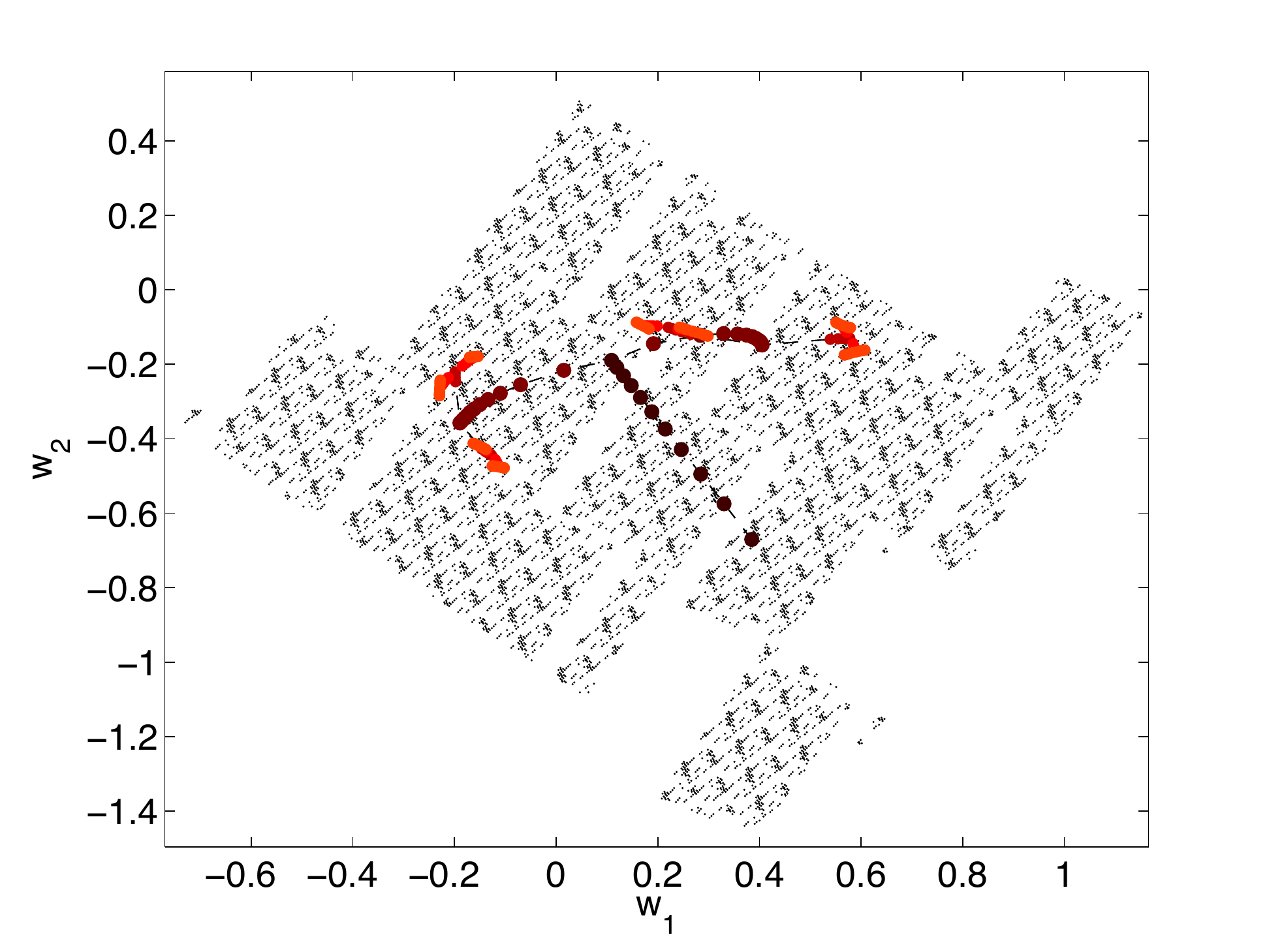} 
  \caption{Attractor bifurcation for a model with $4$ states and $2$-dimensional feature vectors. Left: Asymptotic period of the weight sequence (i.e. size of the attractor set) repeatedly doubles as the temperature decreases towards a threshold value (right to left). $T_{thresh}\approx 0.116$ in this example. The dynamics transits from periodic to aperiodic at that threshold. Right: The evolution of the attractor set of the weight sequence. As the temperature decreases (from dark to light colors), the attractor set split from a single point to two points, then to four, to eight, etc. The black dot cloud in the background is the attractor set at $T=0$.}
  \label{fig:bif}
\end{figure}

\subsection{Tipi Function and Basic Properties of Herding}\label{sec:basic_properties}
We will discuss a few distinguishing properties of the herding algorithm in this subsection. When we take the zero temperature limit in Equation \ref{eqn:llh_T}, the log-likelihood function becomes
\be
\ell_0(\bw) = \bw^T \bar\bphi - \max_{\bx}\left[\bw^T \bphi(\bx)\right] \label{eqn:llh_0}
\ee
This function has a number of interesting properties that justify the name ``Tipi function"\footnote{A Tipi is a traditional native Indian dwelling.} (see \Fig{fig:tipi}) \citep{Welling09A}:

\begin{figure}[tb!]
  \centering
  \includegraphics[width=0.5\textwidth]{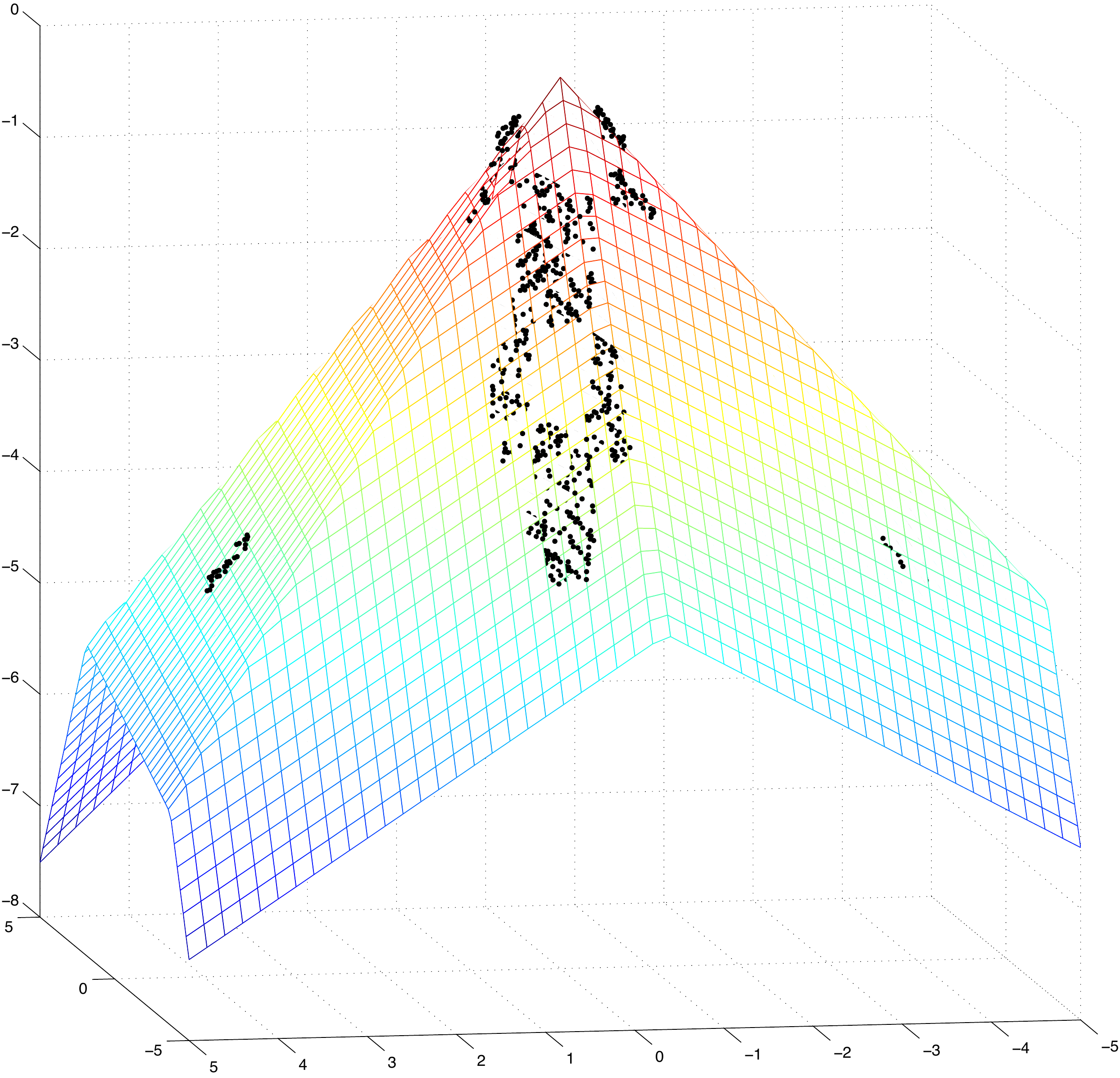}
  \caption{``Tipi function"\citep{Welling09A}: the log-likelihood function at the zero temperature limit. The black dots show the attractor set of the sequence of $\bw_t$.}
  \label{fig:tipi}
\end{figure}

\begin{enumerate}
\item \emph{$\ell_0$ is continuous piecewise linear ($C^0$ but not $C^1$).} It is clearly linear in $\bw$ as long as the maximizing state $\bs$ does not change. However, changing $\bw$ may in fact change the maximizing state in which case the gradient changes discontinuously.

\item \emph{$\ell_0$ is a concave, non-positive function of $\bw$ with a maximum at $\ell_0(\mathbf{0})=0$.} This is true because the first term represents the average $\mathbb{E}_P[\bw^T \bphi(\bx)]$ over some distribution P, while the second term is its maximum. Therefore, $\ell \leqq 0$. If we furthermore assume that $\bphi$ is not constant on the support of $P$ then $\ell_0<0$ and the maximum at $\bw=0$ is unique. Concavity follows because the first term is linear and the second maximization term is convex.
\item \emph{$\ell_0$ is scale free.} This follows because $\ell_0(\bt \bw)=\bt \ell_0(\bw),\forall \bt\geq 0$ as can be easily checked. This means that the function has exactly the same structure at any scale of $\bw$.
\end{enumerate}

Herding runs gradient descent optimization on this Tipi function. There is no need to search for the maximum as $\bw=0$ is the trivial solution. However, the fixed learning rate will always result in a perpetual overshooting of the maximum and thus the sequence of weights will never converge to a fixed point. Every flat face of the Tipi function is associated with a state. An important property of herding is that the state sequence visited by the gradient descent procedure satisfies the moment matching constraints in Equation \ref{eqn:maxent}, which will be discussed in details in Section \ref{sec:moment_matching}. There are a few more properties of this procedure that are worth noticing.

\subsubsection*{Deterministic Nonlinear Dynamics}
Herding is a deterministic nonlinear dynamical system. In contrast to the stochastic MLE learning algorithm based on MCMC, the two update steps in Equation \ref{eqn:herding_1} and \ref{eqn:herding_2} consist of a nonlinear deterministic mapping of the weights as illustrated in \Fig{fig:dynamic_system}. In particular it is not an MCMC procedure and it does not require random number generation.

The dynamics thus produces pseudo-samples that look random, but should not be interpreted as random samples. Although reminiscent of the Bayesian approach, the weights generated during this dynamics should not be interpreted as samples from some Bayesian posterior distribution. We will discuss the weakly chaotic behavior of the herding dynamics in detail in \Sec{sec:weak_chaos}.

\begin{figure}[htb!]
  \centering
  \includegraphics[width=0.5\textwidth]{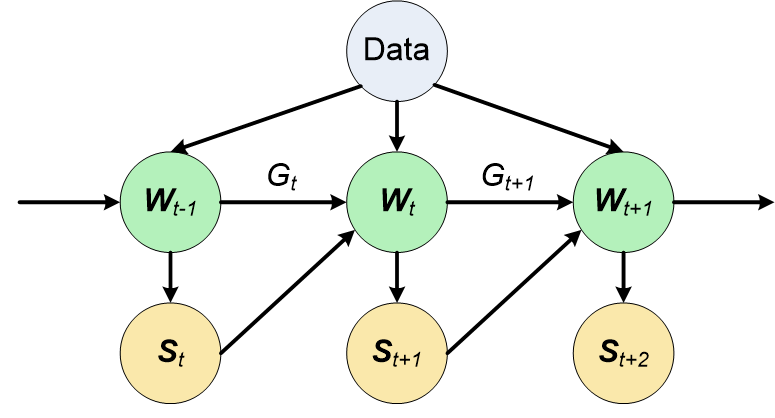}
  \caption{Herding as a nonlinear dynamical system.}
  \label{fig:dynamic_system}
\end{figure}

\subsubsection*{Invariance to the Learning Rate}
Varying the learning rate $\eta$ does not change the behavior of the herding dynamics. 
The only effect is to change the scale of the invariant attractor set of the sequence $\bw_t$. This actually follows naturally from the scale-free property of the Tipi function. More precisely, denote with $\bv_t$ the standard herding sequence with $\eta=1$ and $\bw_t$ the sequence with an arbitrary learning rate. It is easy to see that if we initialize $\bv_{t=0} = \frac{1}{\eta}\bw_{t=0}$ and apply the respective herding updates for $\bw_t$ and $\bv_t$ afterwards, the relation $\bv_t=\frac{1}{\eta} \bw_t$ will remain true for all $t>0$. In particular, the states $\bs_t$ will be the same for both sequences. Therefore we simply set $\eta=1$ in the herding algorithm.

Of course, if one initializes both sequences with arbitrary different values, then the state sequences will not be identical. However, if one accepts the conjecture that there is a unique invariant attractor set, then this difference can be interpreted as a difference in initialization which only affects the transient behavior (or ``burn-in" behavior) but not the (marginal) distribution $P(\bs)$ from which the states $\bs_t$ will be sampled.

Notice however that if we assign different learning rates $\{\eta_\alpha\}$ across the dimensions of the weight vector $\{w_\al\}$, it will change the distribution $P(\bs)$. While the moment matching constraints are still satisfied, we notice that the entropy of the sample distribution varies as a function of $\{\eta_\al\}$. In fact, changing the relative ratio of learning rates among feature dimensions is equivalent to scaling features with different factors in the greedy moment matching algorithm interpretation of \Sec{sec:greedy_alg}. How to choose an optimal set of learning rates is still an open problem.

\subsubsection*{Negative Auto-correlation}
A key advantage of the herding algorithm we observed in practice over sampling using a Markov chain is that the dynamical system mixes very rapidly over the attractor set. This is attributed to the fact that maximizations are performed on an ever changing model distribution as briefly mentioned at the beginning of this subsection. Let $\pi(\bx)$ be the distribution of training data $\mathcal{D}$, and $\bs_t$ be the maximizing state at time $t$. The distribution of an MRF at time $t$ with a regular temperature $T=1$ is
\be
P(\bx;\bw_{t-1})\propto \exp(\bw_{t-1}^T \bphi(\bx)) \label{eqn:old_px}
\ee
After the weights are updated with Equation \ref{eqn:herding_2}, the probability of the new model becomes
\begin{align}
& P(\bx;\bw_t) \propto \exp(\bw_{t}^T\bphi(\bx)) = \exp((\bw_{t-1}+\bar\bphi-\bphi(\bs_t))^T\bphi(\bx)) \nn\\
 &= \exp\left(\bw_{t-1}^T \bphi(\bx) +\sum_{\by\neq \bs_t}\pi(\by)\bphi(\by)^T\bphi(\bx) - (1 - \pi(\bs_t)) \bphi(\bs_t)^T\bphi(\bx)\right) \label{eqn:new_px}
\end{align}
Comparing Equation \ref{eqn:new_px} with \ref{eqn:old_px} we see that probable states (with large $\pi(\bx)$) are rewarded with an extra positive term $\pi(\bx)\bphi(\bx)^T\bphi(\bx)$, \emph{except} the most recently sampled state $\bs_t$. This will have the effect (after normalization) that state $\bs_t$ will have a smaller probability of being selected again. Imagine for instance that the sampler is stuck at a local mode. After drawing samples at that mode for a while, weights are updated to gradually reduce that mode and help the sampler escape it. The resulting negative auto-correlation would help mitigate the notorious problem of positive auto-correlation in most MCMC methods.

We illustrate the negative auto-correlation using a synthetic MRF with $10$ discrete states, each associated with a $7$-dimensional feature vector. The parameters of the MRF model are randomly generated from which the expected feature values are then computed analytically and fed into the herding algorithm to draw $T=10^5$ samples. We define the auto-correlation of the sample sequence of discrete variables as follows:
\begin{equation}
R(t) = \frac{\frac{1}{T-t}\sum_{\tau=1}^{T-t} \eI[s_\tau = s_{\tau + t}] - \sum_{s}\hat{P}(s)^2}{1 - \sum_{s}\hat{P}(s)^2}
\end{equation}
where $\eI$ is the indication function and $\hat{P}$ is the empirical distribution of the $10^5$ samples. It is easy to observe that $R(t=0)=1$ and if the samples are independently distributed $R(t)=0,\forall t>0$ up to a small error due to the finite sample size. We run herding $100$ times with different model parameters and show the mean and standard deviation of the auto-correlation in \Fig{fig:neg_autocorr}. We can see that the auto-correlation is negative for neighboring samples, and converges to 0 as the time lag increases. This effect exists even if we use a local optimization algorithm when a global optimum is hard or expensive to be obtained. This type of ``self-avoidance" is also shared with other sampling methods such as over-relaxation \citep{young1954iterative}, fast-weights PCD \citep{TielemanHinton09} and adaptive MCMC \citep{salakhutdinov27learning}.

\begin{figure}[htb!]
  \centering
  \includegraphics[width=0.5\textwidth]{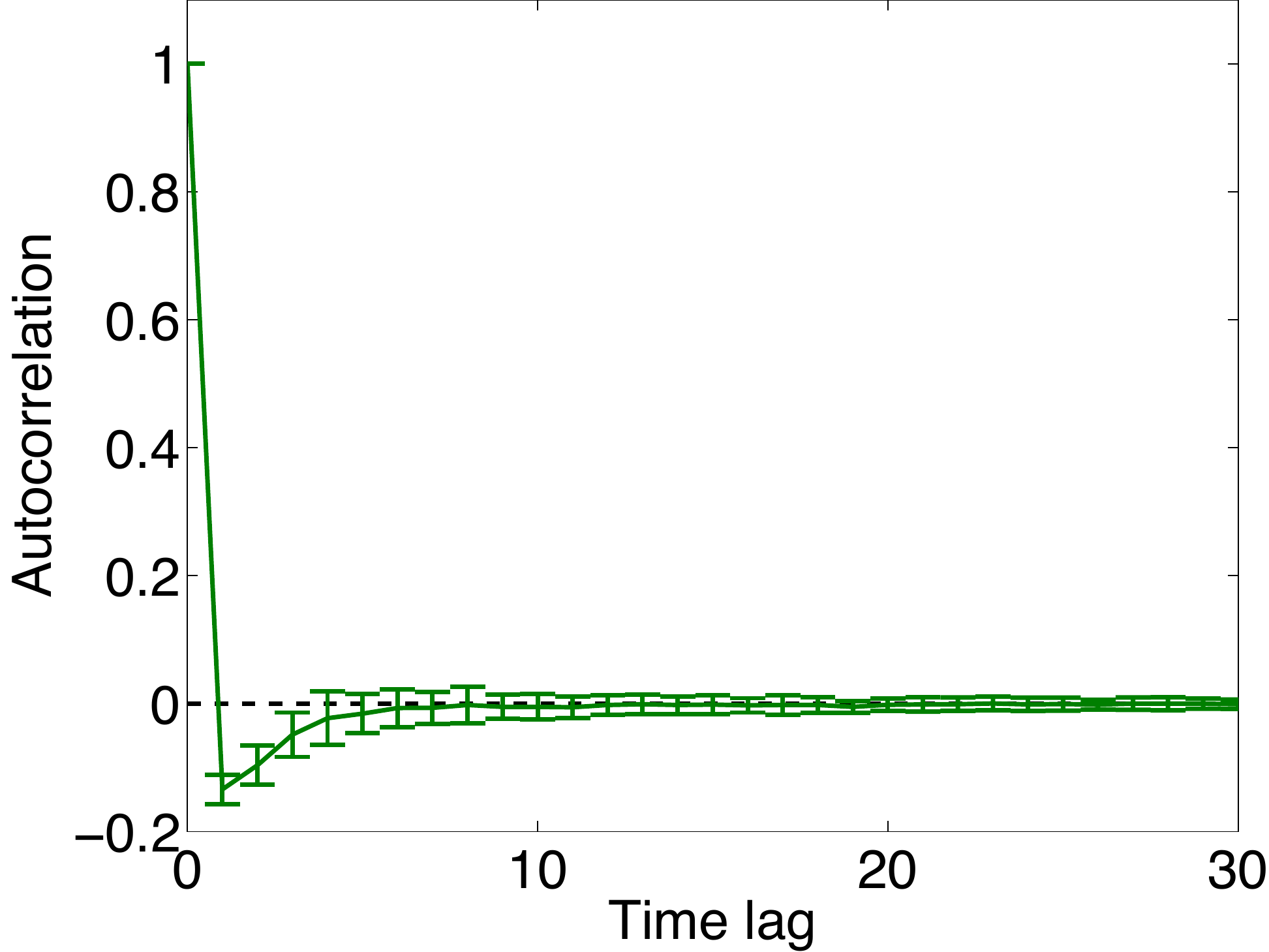}
  \caption{Negative auto-correlation of herding samples from a synthetic MRF.}
  \label{fig:neg_autocorr}
\end{figure}

\subsection{Herding as a Greedy Moment Matching Algorithm} \label{sec:greedy_alg}
As herding does not obtain the MLE, the distribution of the generated samples does not provide a solution to the maximum entropy problem either. However, we observe that the moment matching constraints in Equation \ref{eqn:maxent} are still respected, that is, when we compute the sampling average of the feature vector it will converge to the input moments. Furthermore, the negative auto-correlation in the sample sequence helps to achieve a convergence rate that is faster than what one would get from independently drawing samples or running MCMC at the MLE. Before providing any quantitative results, it would be easier for us to understand herding intuitively by taking a ``dual view" of its dynamics where we remove weights $\bw$ in favor of the states $\bx$ \citep{ChenSmolaWelling10}.

Notice that the expression of $\bw_T$ can be expanded recursively using the update Equation \ref{eqn:herding_2}:
\begin{equation}
\bw_{T} = \bw_0 + T \bar\bphi - \sum_{t=1}^T \bphi(\bs_t) \label{eqn:herding_2_expanded}
\end{equation}
Plugging \ref{eqn:herding_2_expanded} into Equation \ref{eqn:herding_1} results in
\begin{equation}
\bs_{T+1}=\arg\max_{\bx} \langle \bw_0, \bphi(\bx)\rangle + T\langle\bar\bphi, \bphi(\bx)\rangle - \sum_{t=1}^T \langle \bphi(\bs_t), \bphi(\bx)\rangle \label{eqn:herding_1_dual}
\end{equation}

For ease of intuitive understanding of herding, we temporarily make the assumptions (which are not necessary for the propositions to hold in the next subsection):
\begin{enumerate}
\item $\bw_0 = \bar\bphi$
\item $\|\bphi(\bx)\|_2=R, \forall \bx\in\mathcal{X}$
\end{enumerate}
The second assumption is easily achieved, e.g. by renormalizing $\bphi(\bx) \leftarrow \frac{\bphi(\bx)}{\|\bphi(\bx)\|}$ or by choosing a suitable feature map $\bphi$ in the first place. Given the first assumption, Equation \ref{eqn:herding_1_dual} becomes
\begin{equation}
\bs_{T+1}=\arg\max_{\bx} \langle\bar\bphi, \bphi(\bx)\rangle - \frac{1}{T+1} \sum_{t=1}^T \langle \bphi(\bs_t), \bphi(\bx)\rangle \label{eqn:herding_1_dual_2}
\end{equation}

Combining the second assumption one can show that the herding update equation \ref{eqn:herding_1_dual_2} is equivalent to greedily minimizing the squared error $\mathcal{E}_T^2$ defined as
\begin{equation}
\mathcal{E}_T^2 \defeq \left\|\bar\bphi - \frac{1}{T}\sum_{t=1}^T \bphi(\bs_t)\right\|^2 \label{eqn:error_l2}
\end{equation}

We therefore see that herding will generate pseudo-samples that greedily minimize the distance between the input moments and the sampling average of the feature vector at every iteration (conditioned on past samples). Note that the error function is unfortunately not submodular and the greedy procedure does not imply that the total collection of samples at iteration $T$ is jointly optimal (see \cite{Huszar12} for a detailed discussion). We also note that herding is an ``infinite memory process" on $\bs_t$ (as opposed to a Markov process) illustrated in \Fig{fig:dynamic_system_infinite_mem} because new samples depend on the entire history of samples generated thus far.

\begin{figure}[tb!]
  \centering
  \includegraphics[width=0.5\textwidth]{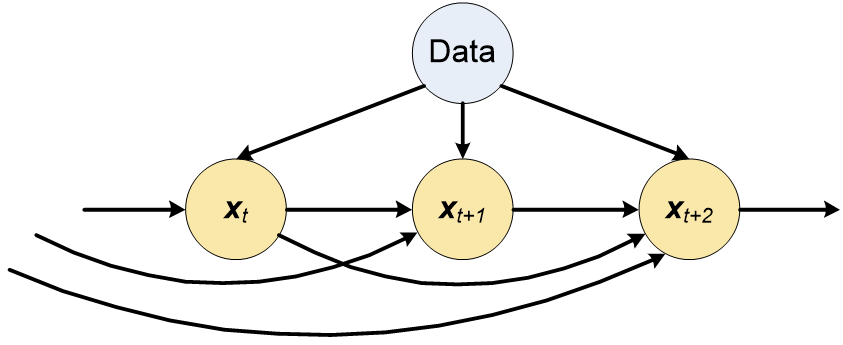}
  \caption{Herding as an infinite memory process on samples.}
  \label{fig:dynamic_system_infinite_mem}
\end{figure}

\subsection{Moment Matching Property} \label{sec:moment_matching}
With the dual view in the previous subsection, the distance between the moments and their sampling average in Equation \ref{eqn:error_l2} can be considered as the objective function for the herding algorithm to minimize. We discuss in this subsection under what condition and at what speed the moment constraints will be eventually satisfied.

\begin{proposition}[Proposition 1 in \cite{Welling09A}] \label{prop:sublinear_weight}
$\forall \al$, if~~$\lim_{\tau\rightarrow \infty}\dfrac{1}{\tau}w_{\al\tau}=0$, then $\lim_{\tau\rightarrow \infty} \dfrac{1}{\tau} \sum_{t=1}^\tau \phi_\al(\bs_t) = \bar\phi_\al$.
\end{proposition}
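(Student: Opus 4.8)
The plan is to prove this by directly unrolling the herding weight update \eqref{eqn:herding_2} and then dividing by $\tau$. Starting from $\bw_t = \bw_{t-1} + \eta(\bar\bphi - \bphi(\bs_t))$, a one-line induction on $t$ (already recorded in \eqref{eqn:herding_2_expanded} for the case $\eta=1$) gives
\be
\bw_\tau = \bw_0 + \eta\,\tau\,\bar\bphi - \eta\sum_{t=1}^\tau \bphi(\bs_t) .
\ee
Reading off the $\al$-th coordinate and rearranging produces the exact identity
\be
\frac{1}{\tau}\sum_{t=1}^\tau \phi_\al(\bs_t) = \bar\phi_\al + \frac{w_{\al 0}}{\eta\,\tau} - \frac{w_{\al\tau}}{\eta\,\tau},
\ee
which is valid for every $\tau\ge 1$ and every index $\al$, with no assumptions on how the states $\bs_t$ are chosen.

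The second step is to take $\tau\to\infty$ in this identity. The term $w_{\al 0}/(\eta\tau)$ has a fixed numerator, hence tends to $0$. The term $w_{\al\tau}/(\eta\tau)$ equals $\eta^{-1}\bigl(\tfrac{1}{\tau}w_{\al\tau}\bigr)$, which tends to $0$ precisely by the hypothesis $\lim_{\tau\to\infty}\tfrac{1}{\tau}w_{\al\tau}=0$. Therefore the right-hand side converges to $\bar\phi_\al$, which is exactly the claimed conclusion. So the argument is essentially just the telescoping computation followed by an elementary limit; no analytic machinery beyond the definition of a limit is required, and the proposition holds for \emph{any} sequence of states, not only the herding maximizers in \eqref{eqn:herding_1}.

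The real content — and the only genuine obstacle — is not in this proof but in checking that the hypothesis $\tfrac{1}{\tau}w_{\al\tau}\to 0$ actually holds for herding; this does not follow from the update rule alone and must come from an independent bound on $\{\bw_\tau\}$. That is supplied by the boundedness result obtained from the perceptron cycling theorem in \Sec{sec:pct}: if $\|\bw_\tau\|$ stays bounded then in fact $\tfrac{1}{\tau}\|\bw_\tau\| = \cO(1/\tau)$, which not only verifies the hypothesis but, plugged back into the identity above, upgrades the conclusion to an $\cO(1/\tau)$ convergence rate for the moment discrepancy. Thus for the proposition as stated the work is pure bookkeeping with the telescoped sum; the substantive difficulty is deferred to establishing boundedness of the weights.
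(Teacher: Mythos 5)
Your proof is correct and takes essentially the same route as the paper: both unroll the update \eq{eqn:herding_2} into the telescoped identity \eq{eqn:proof_converge} and then let $\tau\to\infty$, using the hypothesis to kill the $w_{\al\tau}/\tau$ term. Your closing remarks (the argument works for any state sequence, and the real work is the boundedness of the weights established later via the recurrence/PCT results) likewise mirror the paper's own discussion following the proposition.
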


\begin{proof}
Following Equation \ref{eqn:herding_2_expanded}, we have
\be
\frac{1}{\tau}w_{\al \tau} - \frac{1}{\tau}w_{\al 0} = \bar\phi_\al - \frac{1}{\tau}\sum_{t=1}^\tau \phi_\al(\bs_t) \label{eqn:proof_converge}
\ee
Using the premise that the weights grow slower than linearly and observing that $w_{\al 0}$ is constant we see that the left hand term vanishes in the limit $\tau\rightarrow\infty$ which proves the result.
\end{proof}

What this says is that under the very general assumption that the weights do not grow linearly to infinity (note that due to the finite learning rate they can not grow faster than linear either), the moment constraints will be satisfied by the samples collected from the combined learning/sampling procedure. In fact, we will show later that the weights are restricted in a bounded region, which leads to a convergence rate of $\mathcal{O}(1/\tau)$ as stated below.
\begin{proposition} \label{prop:bounded_weight}
$\forall \al$, if there exists a constant $R$ such that $|w_{\al,t}|\leq R, \forall t$, then
$$\left|\dfrac{1}{\tau}\sum_{t=1}^\tau \phi_\al(\bs_t) - \bar\phi_\al\right| \leq \dfrac{2R}{\tau}.$$
\end{proposition}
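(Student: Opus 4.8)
The plan is to reuse, almost verbatim, the algebraic identity already established in the proof of \prop{prop:sublinear_weight}. Expanding the herding update recursively as in Equation~\ref{eqn:herding_2_expanded} and dividing by $\tau$ gives
\be
\frac{1}{\tau}\sum_{t=1}^\tau \phi_\al(\bs_t) - \bar\phi_\al = \frac{w_{\al 0} - w_{\al \tau}}{\tau},
\ee
which is just Equation~\ref{eqn:proof_converge} rearranged. The difference from \prop{prop:sublinear_weight} is that, rather than merely asking the right-hand side to tend to $0$, I want an explicit rate, so I keep the finite-$\tau$ expression intact instead of passing to the limit.

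The second step is a one-line triangle-inequality estimate. Under the hypothesis $|w_{\al,t}|\le R$ for all $t$ --- in particular for $t=0$ and $t=\tau$ --- we obtain
\be
\left|\frac{1}{\tau}\sum_{t=1}^\tau \phi_\al(\bs_t) - \bar\phi_\al\right| = \frac{|w_{\al 0} - w_{\al \tau}|}{\tau} \le \frac{|w_{\al 0}| + |w_{\al \tau}|}{\tau} \le \frac{2R}{\tau},
\ee
which is exactly the claimed bound. No further work is needed for the implication itself.

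In all honesty there is no real obstacle inside this proposition: the $\mathcal{O}(1/\tau)$ rate is an immediate corollary of the telescoped recursion together with uniform boundedness of the weight trajectory. The genuine content --- and where the effort actually goes --- is in \emph{verifying} the boundedness hypothesis $|w_{\al,t}|\le R$, i.e.\ in showing that the herding dynamics confines $\bw_t$ to a bounded region; that is the part deferred to the perceptron-cycling-theorem discussion in \Sec{sec:pct}, and that is where I would concentrate. One small point worth flagging when writing this up cleanly is that the constant $R$ in the statement should be read as (an upper bound for) $\sup_{\al,t}|w_{\al,t}|$, \emph{including} the initialization, so that the estimate is valid for every $\tau\ge 1$; if the initial weights happen to lie outside the eventual invariant region one instead gets $(|w_{\al 0}| + R)/\tau$, still $\mathcal{O}(1/\tau)$, merely with a larger constant governing the transient.
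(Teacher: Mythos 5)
Your proof is correct and is exactly the argument the paper intends: it invokes Equation~\eq{eqn:proof_converge} (the telescoped recursion) and bounds $|w_{\al 0}-w_{\al\tau}|/\tau$ by $2R/\tau$ via the triangle inequality, which is all the paper's one-line ``follows immediately'' proof amounts to. Your closing remark that the real content lies in establishing the boundedness hypothesis (deferred to the recurrence result and the PCT) is also consistent with how the paper organizes the material.
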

The proof follows immediately Equation \ref{eqn:proof_converge}.

Note that if we want to estimate the expected feature of a trained MRF by a Monte Carlo method, the optimal standard deviation of the approximation error with independent and identically distributed (i.i.d.) random samples decays as $\mathcal{O}(\frac{1}{\sqrt{\tau}})$, where $\tau$ is the number of samples. (For positively autocorrelated MCMC methods this rate could be even slower.) Samples from herding therefore achieve a faster convergence rate in estimating moments than i.i.d. samples.

\subsubsection*{Recurrence of the Weight Sequence}
It is important to ensure that the herding dynamics does not diverge to infinity. \cite{Welling09A} discovered an important property of herding, known as recurrence, that the sequence of the weights is confined in a ball in the parameter space. This property satisfies the premise of both Proposition 2.1 and 2.2. It was stated in a corollary of Proposition \ref{prop:recurrence}:
\begin{proposition}[Proposition 2 in \cite{Welling09A}]\label{prop:recurrence}
$\exists \cR$ such that a herding update performed outside this radius will always decrease the norm $\|\bw\|_2$.
\end{proposition}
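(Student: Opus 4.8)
\emph{Proof idea.} The plan is to follow a single herding update and bound $\|\bw_t\|_2^2$ from above in terms of $\|\bw_{t-1}\|_2^2$. Substituting $\bw_t = \bw_{t-1} + \eta(\bar\bphi - \bphi(\bs_t))$ and expanding the square,
\[
\|\bw_t\|_2^2 = \|\bw_{t-1}\|_2^2 + 2\eta\,\bw_{t-1}^T\big(\bar\bphi - \bphi(\bs_t)\big) + \eta^2\,\|\bar\bphi - \bphi(\bs_t)\|_2^2 .
\]
The key observation is that the cross term equals $2\eta\,\ell_0(\bw_{t-1})$: because $\bs_t$ maximizes $\bw_{t-1}^T\bphi(\bx)$ by Equation \ref{eqn:herding_1}, we have $\bw_{t-1}^T(\bar\bphi - \bphi(\bs_t)) = \bw_{t-1}^T\bar\bphi - \max_\bx \bw_{t-1}^T\bphi(\bx) = \ell_0(\bw_{t-1})$. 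Thus a herding step strictly shrinks $\|\bw\|_2$ exactly when $2\eta\,\ell_0(\bw_{t-1}) + \eta^2\|\bar\bphi - \bphi(\bs_t)\|_2^2 < 0$, and it remains to show this inequality holds once $\|\bw_{t-1}\|_2$ is large enough.

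I would then bound the two surviving terms separately. The quadratic term is uniformly bounded: $\bar\bphi$ is a convex combination of feature vectors, so if $\|\bphi(\bx)\|_2 \le R_\phi$ for all $\bx \in \mathcal{X}$ (automatic when $\mathcal{X}$ is finite, and in any case after the feature normalization of \Sec{sec:greedy_alg}), then $\|\bar\bphi - \bphi(\bs_t)\|_2 \le 2R_\phi$, whence $\eta^2\|\bar\bphi - \bphi(\bs_t)\|_2^2 \le 4\eta^2 R_\phi^2$. For the cross term I would invoke the scale-free property of the Tipi function, $\ell_0(\bw) = \|\bw\|_2\,\ell_0(\bw/\|\bw\|_2)$: with $\gamma = -\max\{\ell_0(\bu) : \|\bu\|_2 = 1\}$ this gives $\ell_0(\bw) \le -\gamma\,\|\bw\|_2$ for all $\bw$. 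By continuity of $\ell_0$ the maximum over the compact unit sphere is attained, and by the strict-negativity property of the Tipi function ($\ell_0(\bw) < 0$ for every $\bw \neq 0$) we have $\gamma > 0$.

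Combining the two estimates, $\|\bw_t\|_2^2 \le \|\bw_{t-1}\|_2^2 - 2\eta\gamma\,\|\bw_{t-1}\|_2 + 4\eta^2 R_\phi^2$, which is strictly smaller than $\|\bw_{t-1}\|_2^2$ whenever $\|\bw_{t-1}\|_2 > 2\eta R_\phi^2/\gamma$; taking $\cR = 2\eta R_\phi^2/\gamma$ finishes the argument. The only genuinely delicate step is the claim $\gamma > 0$: it fails precisely when some nonzero direction is orthogonal to every feature difference $\bphi(\bx) - \bphi(\bx')$, i.e.\ when the feature vectors fail to affinely span the weight space (equivalently, $\bar\bphi$ does not lie in the relative interior of the marginal polytope). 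I would dispose of this the standard way — restrict the whole argument to the subspace spanned by the feature differences, noting that the orthogonal component of $\bw_t$ is never altered by the update, so ``recurrence'' is only meaningful there — which is exactly the non-degeneracy assumption already implicit in the Tipi-function properties invoked above. Everything else is a one-line completion-of-the-square estimate.
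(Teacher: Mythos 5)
Your proposal is correct and is essentially the paper's own argument: expand $\|\bw_t\|_2^2$, recognize the cross term as $2\eta\,\ell_0(\bw_{t-1})$ via $\ell_0(\bw)=\bw^T\nabla_\bw\ell_0(\bw)$, bound the quadratic term by a constant, and use compactness of the unit sphere together with strict negativity and scale-freeness of $\ell_0$ to make the linear term dominate for large $\|\bw\|_2$. The only difference is cosmetic — you make explicit the non-degeneracy needed for $\gamma>0$, which the paper buries in its standing assumption that $\bphi$ is not constant on the support of the data distribution.
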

\begin{corollary}[Corollary in \cite{Welling09A}]\label{cor:recurrence}
$\exists \cR'$ such that a herding algorithm initialized inside a ball with that radius will never generate weights $\bw$ with norm $\|\bw\|_2 > \cR'$.
\end{corollary}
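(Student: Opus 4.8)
The plan is to obtain Corollary~\ref{cor:recurrence} from Proposition~\ref{prop:recurrence} by a short forward-invariance induction, once I control how far a single herding step can move $\bw$. Since the substance lies in Proposition~\ref{prop:recurrence}, let me first indicate how I would prove \emph{that}, using only the properties of the Tipi function $\ell_0$ already established; the corollary then follows immediately.

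For the proposition I would fix $\eta=1$ and set $\rho \defeq \sup_{\bx\in\cX}\twonorm{\bphi(\bx)}<\infty$, so that $\twonorm{\bar\bphi}\le\rho$ (a convex combination) and hence $\twonorm{\bar\bphi-\bphi(\bx)}\le 2\rho$ for all $\bx$. Expanding the square in the update \ref{eqn:herding_2},
\be
\twonorm{\bw_t}^2 = \twonorm{\bw_{t-1}}^2 + 2\,\bw_{t-1}^T(\bar\bphi-\bphi(\bs_t)) + \twonorm{\bar\bphi-\bphi(\bs_t)}^2 .
\ee
The key observation is that, because $\bs_t = \argmax_\bx \bw_{t-1}^T\bphi(\bx)$ by \ref{eqn:herding_1}, the cross term is \emph{exactly} the Tipi function of \ref{eqn:llh_0} evaluated at $\bw_{t-1}$, i.e.\ $\bw_{t-1}^T(\bar\bphi-\bphi(\bs_t)) = \ell_0(\bw_{t-1})\le 0$. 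I would then upgrade ``$\le 0$'' to a quantitative estimate using two properties of $\ell_0$: it is \emph{scale free}, so $\ell_0(\bw)=\twonorm{\bw}\,\ell_0(\bw/\twonorm{\bw})$; and, being continuous and (under the non-degeneracy assumption that $\bphi$ is not constant on the support of the data, i.e.\ $\bar\bphi$ lies in the relative interior of $\mathrm{conv}\{\bphi(\bx):\bx\in\cX\}$) strictly negative on the compact unit sphere, it satisfies $\max_{\twonorm{\bw}=1}\ell_0(\bw) = -c$ for some $c>0$. Hence $\ell_0(\bw)\le -c\,\twonorm{\bw}$, which plugged into the display gives $\twonorm{\bw_t}^2 \le \twonorm{\bw_{t-1}}^2 - 2c\,\twonorm{\bw_{t-1}} + 4\rho^2$, strictly smaller than $\twonorm{\bw_{t-1}}^2$ as soon as $\twonorm{\bw_{t-1}} > \cR \defeq 2\rho^2/c$. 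That is Proposition~\ref{prop:recurrence} (with $\cR=2\eta\rho^2/c$ for a general learning rate).

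For the corollary I would take $\cR' \defeq \cR + 2\rho$ and prove by induction on $t$ that $\twonorm{\bw_0}\le\cR'$ implies $\twonorm{\bw_t}\le\cR'$ for all $t$. The inductive step splits into two cases: if $\twonorm{\bw_{t-1}}\le\cR$ then a single update enlarges the norm by at most $\twonorm{\bar\bphi-\bphi(\bs_t)}\le 2\rho$, so $\twonorm{\bw_t}\le\cR+2\rho=\cR'$; and if $\cR<\twonorm{\bw_{t-1}}\le\cR'$ then Proposition~\ref{prop:recurrence} forces $\twonorm{\bw_t}<\twonorm{\bw_{t-1}}\le\cR'$. Hence the ball of radius $\cR'$ is forward invariant, which is the claim; in particular this certifies the hypothesis $|w_{\al,t}|\le\cR'$ of Proposition~\ref{prop:bounded_weight} and the $\cO(1/\tau)$ moment-matching rate.

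The hard part is the strict positivity of $c$ — that $\ell_0$ is bounded \emph{away} from $0$ on the unit sphere, not merely $\le 0$. This is exactly where the non-degeneracy hypothesis enters: if $\bar\bphi$ sits on a proper face of the marginal polytope there are unit directions $\bw$ with $\ell_0(\bw)=0$, and along them the descent estimate collapses; the fix is to run the whole argument inside the subspace spanned by the realizable fluctuations $\{\bphi(\bx)-\bar\bphi : \bx\in\cX\}$, on which $\ell_0$ is strictly negative off the origin, using that the updates never push $\bw$ out of that subspace anyway. Everything else — finiteness of $\rho$, continuity of $\ell_0$ — is routine and automatic for finite $\cX$ (or bounded, continuous features on a compact $\cX$).
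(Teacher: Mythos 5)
Your argument is correct and follows essentially the same route as the paper: the quantitative descent estimate for Proposition~\ref{prop:recurrence} (expanding $\twonorm{\bw_t}^2$, identifying the cross term with $\ell_0(\bw_{t-1})$, and using scale-freeness plus strict negativity of $\ell_0$ on the unit sphere to beat the bounded second-order term) is exactly the paper's proof, with your $2\rho$ playing the role of its gradient bound $\cB$. The forward-invariance induction with $\cR'=\cR+2\rho$ is the standard derivation the paper leaves implicit, and your remark about restricting to the span of $\{\bphi(\bx)-\bar\bphi\}$ when $\bar\bphi$ lies on a face of the marginal polytope correctly patches a degeneracy the paper only handles by assumption.
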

However, there was a gap in the proof of Proposition 2 in \cite{Welling09A}. We give the corrected proof below:
\begin{proof}[Proof of Proposition \ref{prop:recurrence}]
Write the herding update equation \ref{eqn:herding_2} as $\bw_{t}=\bw_{t-1} + \nabla_{\bw}\ell_0(\bw_{t-1})$ (set $\eta=1$). Expanding the squared norm of $\bw_t$ leads to
\begin{align}
&\|\bw_t\|_2^2 = \|\bw_{t-1}\|_2^2 + 2\bw_{t-1}^T \nabla_{\bw}\ell_0(\bw_{t-1}) + \|\nabla_{\bw}\ell_0(\bw_{t-1})\|_2^2\nn\\
&\Longrightarrow \quad \de\|\bw\|_2^2 < 2\ell_0(\bw_{t-1}) + \cB^2 \label{eqn:recurrence_dw}
\end{align}
where we define $\de\|\bw\|_2^2 = \|\bw_t\|_2^2 - \|\bw_{t-1}\|_2^2$. $\cB$ is an upper bound of $\{\|\nabla_{\bw}\ell_0(\bw)\|_2:\bw\in\cR^{|\bw|}\}$ introduced in Lemma 1 of \cite{Welling09A}. That exists as long as the norm of the feature vector $\bphi(\bx)$ is bounded in $\cX$. We also use the fact that $\ell_0(\bw) = \bw^T \nabla_{\bw}\ell_0(\bw)$.

Denote the unit hypersphere as $U=\{\bw|\|\bw\|_2^2=1\}$. Since $\ell_0$ is continuous on $U$ and $U$ is a bounded closed set, $\ell_0$ can achieve its supremum on $U$, that is, we can find a maximum point $w^*$ on $U$ where $\ell_0(\bw^*)\geq \ell_0(\bw), \forall \bw\in U$.

Combining this with the fact that $\ell_0 < 0$ outside the origin, we know the maximum of $\ell_0$ on $U$ is negative. Now taking into account the fact that $\cB$ is constant (i.e. does not scale with $\bw$), there exists some constant $\cR$ for which $\cR \ell_0(\bw^*) < -\cB^2/2$. Together with the scaling property of $\ell_0$, $\ell_0(\bt \bw) = \bt \ell_0(\bw)$, we can prove that for any $\bw$ with a norm larger than $\cR$,  $\ell_0$ is smaller then $-\cB^2/2$:
\begin{equation}
\ell_0(\bw)=\|\bw\|_2\ell_0(\bw/\|\bw\|_2)\leq \cR \ell_0(\bw^*) < -\cB^2/2,\quad \forall \|\bw\|_2>R
\end{equation}
The proof is concluded by plugging the inequality above in Equation \ref{eqn:recurrence_dw}.
\end{proof}

Corollary \ref{cor:recurrence} proves the existence of a bound for $\|\bw\|_2$ and thereby the constant $R$ in Proposition \ref{prop:bounded_weight}. \cite{harvey2014near} further studied the value of $R$ and proposed a variant of herding that obtained a near-optimal value for $R=O(\sqrt{d}\log^{2.5}\|\cX\|)$ w.r.t.\ the dimensionality of the feature vector $d$ and the size of a finite state space $\cX$. The proposed algorithm has a polynomial time complexity in $d$ and $\|\cX\|$.

\subsubsection*{The Remaining Degrees of Freedom}

Both the herding and the MaxEnt methods match the moments of the training data. But how does herding control the remaining degrees of freedom that are otherwise controlled by maximizing the entropy in the MaxEnt method? This is unfortunately still an open problem. Apart from some heuristics there is currently no principled way to enforce high entropy. In practice however, in discrete state spaces we usually observe that the sampling distribution from herding renders high entropy. We illustrate the behavior of herding in the example of simulating an Ising model in the next paragraph.

An Ising model is an MRF defined on a lattice of binary nodes, $G=(E, V)$, with biases and pairwise features. The probability distribution is expressed as
\be
P(\bx) = \frac{1}{Z}\exp\left(\bt\left(\sum_{(i,j)\in E}J_{i,j}x_i x_j + \sum_{i\in V} h_i x_i\right)\right), x_i \in \{-1, 1\}, \forall i\in V
\ee
where $h_i$ is the bias parameter, $J_{i,j}$ is the pairwise parameter and $\bt \ge 0$ is the inverse temperature variable. When $h_i=0$, $J_{i,j}=1$ for all nodes and edges, and $\bt$ is set at the inverse critical temperature, the Ising model is said to be at a critical phase where regular sampling algorithms fail due to long range correlations among variables. A special algorithm, the Swendsen-Wang algorithm \citep{swendsen1987nonuniversal}, was designed to draw samples efficiently in this case. In order to run herding on the Ising model, we need to know the average features, $\bar{x}_i$ ($0$ in this case) and $\overline{x_i x_j}$ instead of the MRF parameters. So we first run the Swendsen-Wang algorithm to obtain an estimate of the expected cross terms, $\overline{x_i x_j}$, which are constant across all edges, and then run herding with weights for every node $w_i$ and edge $w_{i,j}$. The update equations are:
\begin{align}
& \bs_t = \argmax_\bx \sum_{(i,j)\in E} w_{(i,j),t-1}x_i x_j + \sum_{i\in V} w_{i,t-1} x_i \\
& w_{(i,j),t} = w_{(i,j),t-1} + \overline{x_i x_j} - s_{i,t} s_{j,t} \\
& w_{i,t} = w_{i,t-1} - s_{i,t}
\end{align}
As finding the global optimum is an NP-hard problem we find a local maximum for $\bs_t$ by coordinate descent%
\footnote{In Section \ref{sec:generalize_herding} we show that the moment matching property still holds with a local search as long as the found state is better than the average.}%
. \Fig{fig:ising} shows a sample from an Ising model on an $100\times 100$ lattice at the critical temperature. We do not observe qualitative difference between the samples generated by the Ising model (MaxEnt) and herding, which suggests that the sample distribution of herding may be very close to the distribution of the MRF. Furthermore, \Fig{fig:ising_hist} shows the distribution of the size of connected components in the samples. It is known that this distribution should obey a power law at the critical temperature. We find that samples from both methods exhibit the power law distribution with an almost identical exponent.

\begin{figure}[tb!]
  \centering
  \subfloat[Generated by Swendsen-Wang] {
    \includegraphics[width=0.48\textwidth]{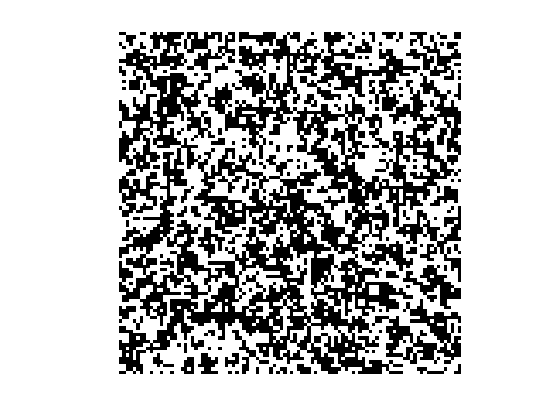}
  }
  \hfill
  \subfloat[Generated by Herding] {
    \includegraphics[width=0.48\textwidth]{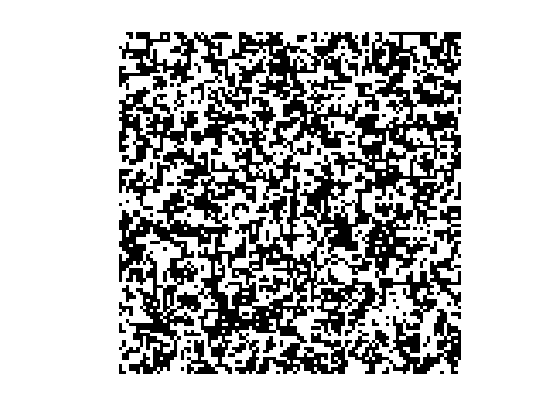}
  }
  \caption{Sample from an Ising model on an $100\times 100$ lattice at the critical temperature.}
  \label{fig:ising}
\end{figure}

\begin{figure}[tb!]
  \centering
  \subfloat[Generated by Swendsen-Wang] {
    \includegraphics[width=0.48\textwidth]{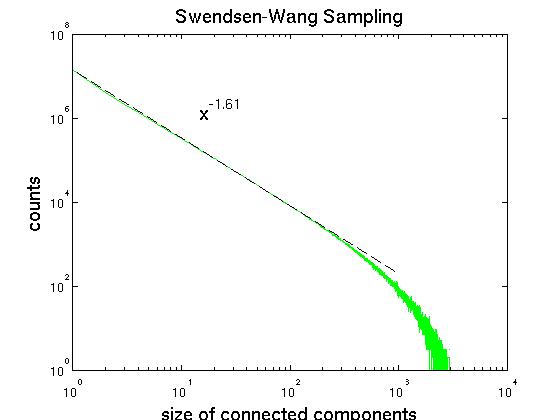}
  }
  \hfill
  \subfloat[Generated by Herding] {
    \includegraphics[width=0.48\textwidth]{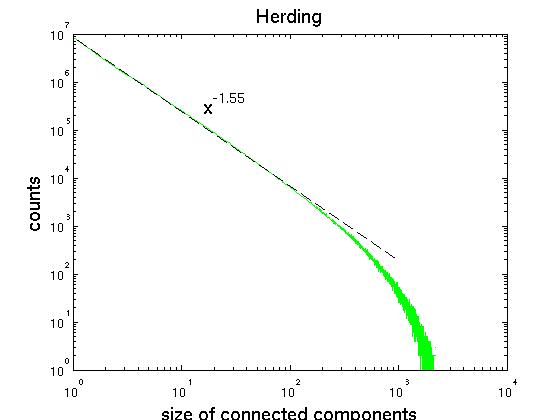}
  }
  \caption{Histogram of the size of connected components in the samples of the Ising model at the critical temperature.}
  \label{fig:ising_hist}
\end{figure}

\subsection{Learning Using Weak Chaos} \label{sec:weak_chaos}

There are two theoretical frameworks for statistical inference: the frequentist and the Bayesian paradigm. A frequentist assumes a true objective value for some parameter and tries to estimate its value from samples. Except for the simplest models, estimation usually involves an iterative procedure where the value of the parameter is estimated with increasing precision. In information theoretic terms, this means that more and more information from the data is accumulated in more decimal places of the estimate. With a finite data-set, this process should stop at some scale because there is not enough information in the data that can be transferred into the decimal places of the parameter. If we continue anyway, we will overfit to the dataset at hand. In a Bayesian setting we entertain a posterior distribution over parameters, the spread, or more technically speaking, entropy, of which determines the amount of information it encodes. In Bayesian estimation, the spread automatically adapts itself to the amount of available information in the data. In both cases, the learning process itself can be viewed as a dynamical system. For a frequentist this means a convergent series of parameter estimates indexed by the learning iteration $\bw_1, \bw_2, \dots$. For a Bayesian running a MCMC procedure this means a stochastic process converging to some equilibrium distribution. Herding introduces a third possibility by encoding all the information in a deterministic nonlinear dynamical system. We focus on studying the weakly chaotic behavior of the herding dynamics in this subsection. The sequence of weights never converges but traces out a quasi-periodic trajectory on an attractor set which is often found to be of fractal dimension. In the language of iterated maps, $\bw_{t+1} = F(\bw_t)$, a (frequentist) optimization of some objective results in an attractor set that is a single point, Bayesian posterior inference results in a (posterior) probability distribution while herding will result in a (possibly fractal) attractor set which seems harder to meaningfully interpret as a probability distribution. 

\subsubsection*{Example: Herding a Single Neuron}
We first study an example of the herding dynamics in its simplest form and show its equivalence to some well-studied theories in mathematics. Consider a single (artificial) neuron, which can take on two distinct states: either it fires ($x=1$) or it does not fire ($x=0$). Assume that we want to simulate the activity of a neuron with an irrational firing rate, $\pi\in[0,1]$, that is, the average firing frequency approaches $\lim_{T\to \infty}\frac{1}{T}\sum_{t=1}^Ts_t=\pi$. We can achieve that by applying the herding algorithm with a one-dimensional feature $\phi(x)=x$ and feeding the input moment with the desired rate $\bar{\phi}= \pi$. Applying the update equations \ref{eqn:herding_1}-\ref{eqn:herding_2} we get the following dynamics:
\begin{align}
& s_t = \eI(w_{t-1} > 0) \label{eqn:herding_neuron_1}\\
& w_t = w_{t-1} + \pi - s_t \label{eqn:herding_neuron_2}
\end{align}
where $\eI[\cdot]$ is the indicator function. With the moment matching property we can show immediately that the firing rate converges to the desired value $\pi$ for any initial value of $w$. 
The update equations are illustrated in \Fig{fig:univariate_1}. This dynamics is a simple type of interval translation mapping (ITM) problem in mathematics \citep{boshernitzan1995interval}. In a general ITM problem, the invariant set of the dynamics often has a fractal dimension. But for this simple case, the invariant set is the entire interval $(\pi-1,\pi]$ if $\pi$ is an irrational number and a finite set if it is rational. As a neuron model, one can think of $w_t$ as a ``synaptic strength." At each iteration the synaptic strength increases by an amount $\pi$. When the synaptic strength rises above $0$, the neuron fires. If it fires its synaptic strength is depressed by a factor $1$. The value of $w_0$ only has some effect on the transient behavior of the resulting sequence $s_1,s_2,\dots$.

\begin{figure}[tb]
 \centering
 \includegraphics[width=.46\textwidth] {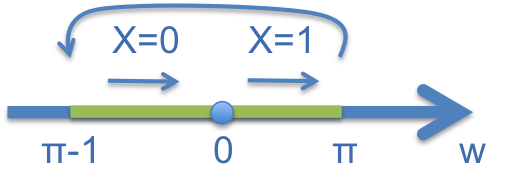}
 \caption{Herding dynamics for a single binary variable. At every iteration the weight is first increased by $\pi$. If $w$ was originally positive, it is then depressed by $1$.}
 \label{fig:univariate_1}
\end{figure}

It is perhaps interesting to note that by setting $\pi=\varphi$ with $\varphi$ the golden mean $\varphi=\ha(\sqrt{5}-1)$ and initializing the weights at $w_0=2\varphi-1$,  we exactly generate the ``Rabbit Sequence'': a well studied Sturmian sequence which is intimately related with Fibonacci numbers\footnote{Imagine two types of rabbits: young rabbits ($0$) and adult rabbits ($1$). At each new generation the young rabbits grow up ($0\ra 1$) and old rabbits produce offspring ($1\ra 10$). Recursively applying these rules we produce the rabbit sequence: $0\ra 1\ra 10 \ra 101 \ra 10110\ra 10110101$ etc. The total number of terms of these sequences and incidentally also the total number of $1$'s (lagged by one iteration) constitutes the Fibonacci sequence: $1,1,2,3,5,8,...$.}). In \Fig{fig:Fibonacci} we plot the weights (a) and the states (b) resulting from herding with the ``Fibonacci neuron" model. For a proof, please see \cite{WellingChen10}.
\begin{figure}[htb]
  \centering
    \includegraphics[width=0.48\textwidth]{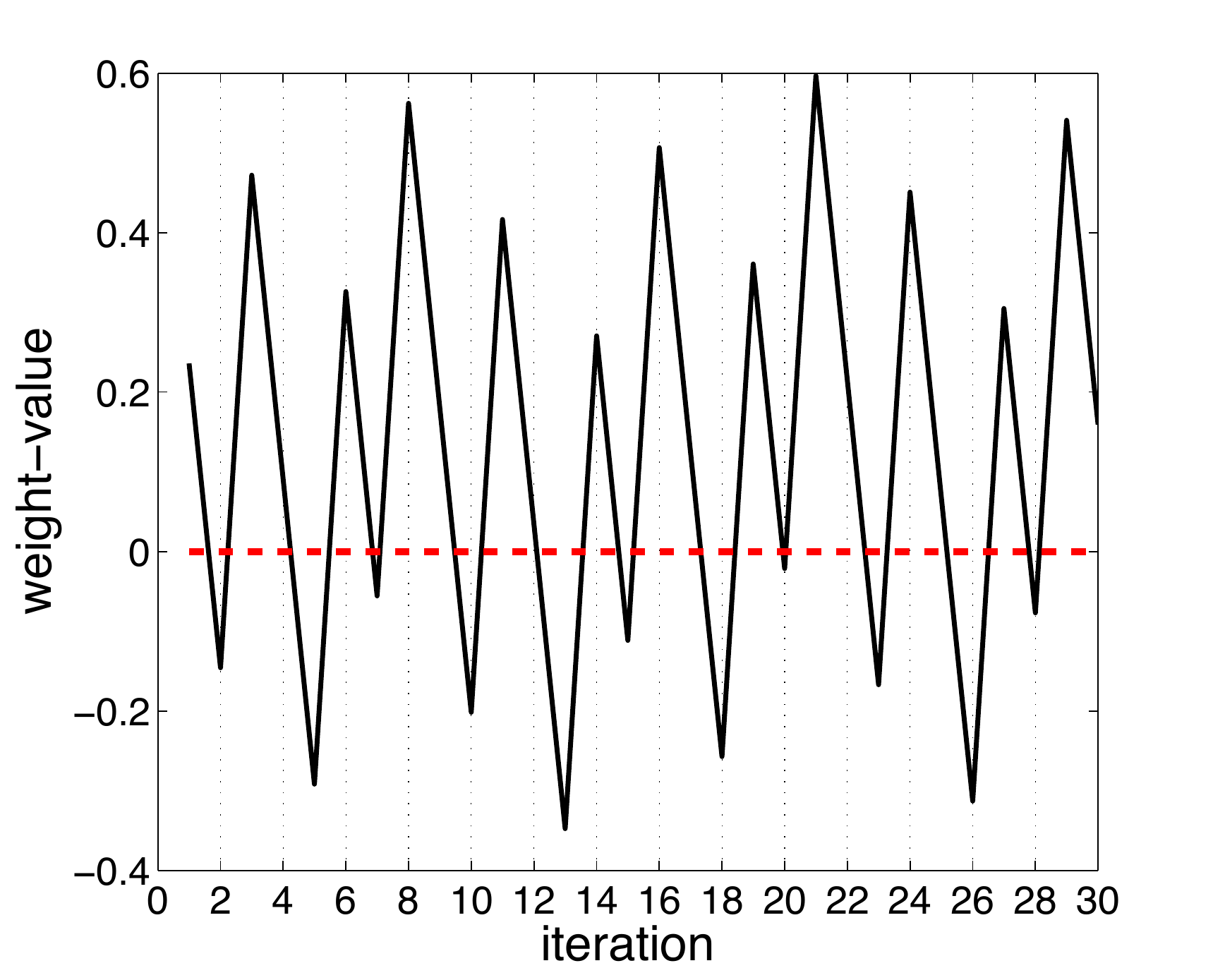}
  \hfill
    \includegraphics[width=0.48\textwidth]{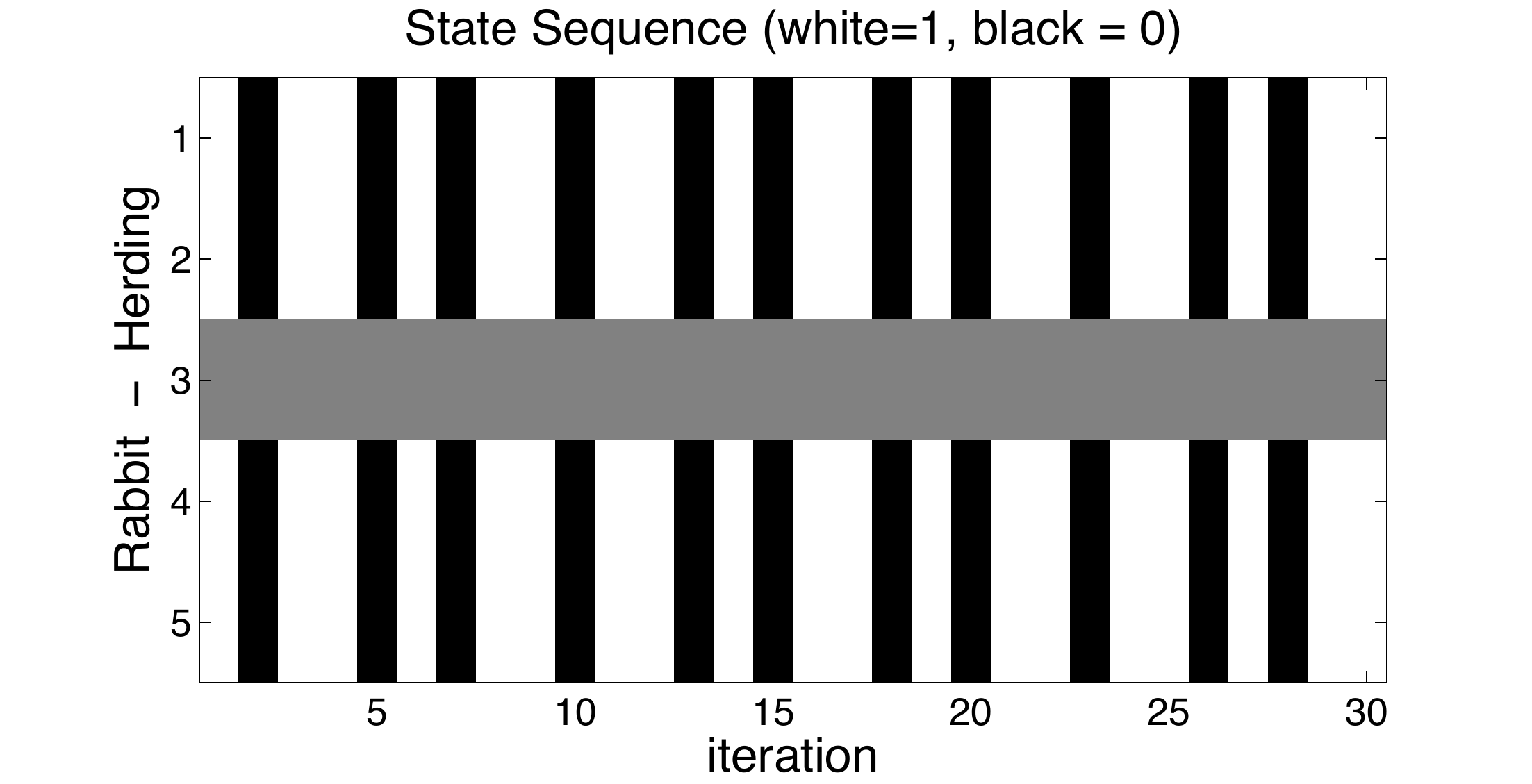}
  \caption{Sequence of weights and states generated by the ``Fibonacci neuron'' based on herding dynamics. Left: Sequence of weight values. Note that the state results by checking if the weight value is larger than $0$ (in which case $s_t=1$) or smaller than $0$ (in which case $s_t=0$). By initializing the weights at $w_0=2\varphi-1$ and using $\pi=\varphi$, with $\varphi$ the golden mean,  we obtain the Rabbit sequence (see main text). Right: Top stripes show the first $30$ iterates of the sequence obtained with herding. For comparison we also show the Rabbit sequence below it (white indicates $1$ and black indicates $0$). Note that these two sequences are identical.}
  \label{fig:Fibonacci}
\end{figure}

When initializing $w_0=0$, one may think of the synaptic strength as an error potential that keeps track of the total error so far. One can further show that the sequence of states is a discrete low discrepancy sequence \citep{angel2009discrete} in the following sense:
\begin{proposition} \label{prop:single_node} If $w$ is the weight of the herding dynamics for a single binary variable $x$ with probability $P(x=1)=\pi$, and $w_\tau \in (\pi-1, \pi]$ at some step $\tau\geq 0$, then $w_t\in(\pi-1,\pi],\forall t\geq \tau$. Moreover, for $T\in\mathbb{N}$, we have:
\be
\left|\sum_{t=\tau+1}^{\tau+T} \mathbb{I}[s_t=1] - T\pi \right| \leq 1,\quad 
\left|\sum_{t=\tau+1}^{\tau+T} \mathbb{I}[s_t=0] - T(1 - \pi) \right| \leq 1 \label{eqn:prop_single_node}
\ee
\end{proposition}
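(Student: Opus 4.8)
The statement has two parts, and I would prove them in the order given. The first part is a forward-invariance claim for the interval $(\pi-1,\pi]$ under the one-step herding map, and the second is a discrepancy bound that follows from the same telescoping identity used in Proposition~\ref{prop:sublinear_weight} together with that invariance. The only place requiring care is a short two-case argument in the first part, where the hypothesis $\pi\in[0,1]$ and the half-open shape of the interval (matching the strict threshold $w>0$ in the update) both get used.

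\textbf{Step 1 (invariance of $(\pi-1,\pi]$).} Consider the one-step map $w\mapsto w+\pi-\mathbb{I}(w>0)$ coming from Equations~\ref{eqn:herding_neuron_1}--\ref{eqn:herding_neuron_2}, and suppose $w_{t-1}\in(\pi-1,\pi]$. If $w_{t-1}>0$, then $s_t=1$ and $w_t=w_{t-1}+\pi-1\in(\pi-1,\,2\pi-1]$; since $\pi\le 1$ we have $2\pi-1\le\pi$, so $w_t\in(\pi-1,\pi]$. If $w_{t-1}\le 0$, then $s_t=0$ and $w_t=w_{t-1}+\pi\in(2\pi-1,\,\pi]$; since $\pi\ge 0$ we have $2\pi-1\ge\pi-1$, so again $w_t\in(\pi-1,\pi]$. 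Thus $(\pi-1,\pi]$ is mapped into itself, and by induction on $t$, starting from $w_\tau\in(\pi-1,\pi]$, we get $w_t\in(\pi-1,\pi]$ for all $t\ge\tau$.

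\textbf{Step 2 (discrepancy bounds).} Unrolling Equation~\ref{eqn:herding_neuron_2} from step $\tau$ to step $\tau+T$, exactly as in Equation~\ref{eqn:herding_2_expanded}, gives $w_{\tau+T}=w_\tau+T\pi-\sum_{t=\tau+1}^{\tau+T}s_t$, and since $s_t=\mathbb{I}[s_t=1]$,
\[
\sum_{t=\tau+1}^{\tau+T}\mathbb{I}[s_t=1]-T\pi \;=\; w_\tau-w_{\tau+T}.
\]
By Step~1 both $w_\tau$ and $w_{\tau+T}$ lie in $(\pi-1,\pi]$, hence $w_\tau-w_{\tau+T}\in(-1,1)$, giving the first inequality (in fact with strict inequality, which implies $\le 1$). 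For the second inequality, substitute $\mathbb{I}[s_t=0]=1-\mathbb{I}[s_t=1]$ to obtain $\sum_{t=\tau+1}^{\tau+T}\mathbb{I}[s_t=0]-T(1-\pi)=w_{\tau+T}-w_\tau$, and bound it the same way.

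\textbf{Anticipated obstacle.} There is essentially no deep difficulty; the main thing to watch is the endpoint bookkeeping in Step~1 — keeping the open/closed ends of $(\pi-1,\pi]$ consistent with the convention $s_t=\mathbb{I}(w_{t-1}>0)$, and invoking $\pi\le 1$ and $\pi\ge 0$ in precisely the two sub-cases where they are needed. Everything after that is the one-line telescoping identity plus the invariance already established.
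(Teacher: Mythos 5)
Your proof is correct and follows essentially the same route as the paper's: establish that $(\pi-1,\pi]$ is invariant under the one-step map (your explicit two-case analysis is just a spelled-out version of the paper's computation $\cT(\pi-1,0]\cup\cT(0,\pi]=(\pi-1,\pi]$), then telescope Equation~\ref{eqn:herding_neuron_2} to identify the discrepancy with $w_\tau-w_{\tau+T}$ and bound it by the interval length. No gaps; the endpoint bookkeeping you flag is handled correctly.
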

\begin{proof}
We first show that $(\pi-1, \pi]$ is the invariant interval for herding dynamics. Denote the mapping of the weight in Equation \ref{eqn:herding_neuron_1} and \ref{eqn:herding_neuron_2} as $\cT$. Then we can see that the interval $(\pi-1, \pi]$ is mapped to itself as
\be
\cT(\pi-1, \pi] = \cT(\pi-1, 0] \cup \cT(0, \pi] = (2\pi-1, \pi] \cup (\pi-1, 2\pi-1] = (\pi-1, \pi]
\ee
Consequently when $w_\tau$ falls inside the invariant interval, we have $w_t\in(\pi-1,\pi],\forall t\geq \tau$. Now summing up both sides of Equation \ref{eqn:herding_neuron_2} over $t$ immediately gives us the first inequality in \ref{eqn:prop_single_node} as:
\begin{equation}
T \pi - \sum_{t=\tau+1}^{\tau+T} \mathbb{I}[s_t=1] = w_{\tau+T}-w_\tau \in [-1, 1]. \label{eqn:prop_single_node_expand}
\end{equation}
The second inequality follows by observing that $\mathbb{I}[s_t=0] = 1 - \mathbb{I}[s_t=1]$.
\end{proof}

As a corollary of Proposition \ref{prop:single_node}, when we initialize $w_0=\pi - 1/2$, we can improve the bound of the discrepancy by a half.
\begin{corollary} \label{cor:single_node_half_bound} If $w$ is the weight of the herding dynamics in Proposition \ref{prop:single_node} and it is initialized at $w_0=\pi - 1/2$, then for $T\in\mathbb{N}$, we have:
\be
\left|\sum_{t=\tau+1}^{\tau+T} \mathbb{I}[s_t=1] - T\pi \right| \leq \ha,\quad 
\left|\sum_{t=\tau+1}^{\tau+T} \mathbb{I}[s_t=0] - T(1 - \pi) \right| \leq \ha \label{eqn:cor_single_node}
\ee
\end{corollary}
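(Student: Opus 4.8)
The plan is to deduce the sharper bound directly from \prop{prop:single_node}, using only the observation that the prescribed initial value $w_0 = \pi - \ha$ sits exactly at the midpoint of the invariant interval $(\pi-1,\pi]$. The factor-of-two improvement is then nothing more than the statement that the orbit never strays from the centre of that interval by more than half its length; no new dynamical input is needed.

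Concretely, I would argue as follows. First, since $w_0 = \pi - \ha \in (\pi-1,\pi]$ for every $\pi$, the invariance part of \prop{prop:single_node} applies with reference step $\tau = 0$ and yields $w_t \in (\pi-1,\pi]$ for all $t \geq 0$ (more generally the same argument works at any step $\tau$ with $w_\tau = \pi - \ha$, which the stated initialization guarantees for $\tau = 0$). Second, I would reuse the telescoping identity \eq{eqn:prop_single_node_expand} obtained by summing \eq{eqn:herding_neuron_2}, which in the present situation reads
\be
T\pi - \sum_{t=\tau+1}^{\tau+T}\eI[s_t=1] = w_{\tau+T} - w_\tau = w_{\tau+T} - \left(\pi - \ha\right).
\ee
Since $w_{\tau+T}\in(\pi-1,\pi]$, the right-hand side lies in $\left(-\ha,\ha\right]$, which gives the first inequality $\bigl|\sum_{t=\tau+1}^{\tau+T}\eI[s_t=1] - T\pi\bigr| \leq \ha$. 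The second inequality follows at once from $\eI[s_t=0] = 1 - \eI[s_t=1]$, so that $\sum_{t=\tau+1}^{\tau+T}\eI[s_t=0] - T(1-\pi) = -\bigl(\sum_{t=\tau+1}^{\tau+T}\eI[s_t=1] - T\pi\bigr)$ has the same absolute value.

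The main thing to watch is not a calculational obstacle but an interpretive one: the bound is \emph{not} uniform over all reference steps $\tau$. For instance with $\pi = 3/10$ one has $w_0 = -1/5$, $w_1 = 1/10$, $w_2 = -3/5$, so taking $\tau = 1$ and $T = 1$ gives $|\eI[s_2=1] - \pi| = 7/10 > \ha$. Hence $\tau$ must be read as a step at which the weight equals $\pi - \ha$, which under the stated initialization means $\tau = 0$. It is also worth recording that the constant $\ha$ cannot be improved: for $\pi = \ha$ the initialization $w_0 = 0$ produces the $2$-periodic weight sequence $0, \ha, 0, \ha, \dots$, which attains the bound.
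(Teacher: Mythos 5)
Your proof is correct and takes essentially the same route as the paper, which likewise just plugs $\tau=0$ and $w_0=\pi-\ha$ into the telescoping identity \eq{eqn:prop_single_node_expand} and uses the invariance of $(\pi-1,\pi]$ to place $w_T-(\pi-\ha)$ in $(-\ha,\ha]$. Your added remark that the bound holds only for reference steps at which the weight equals $\pi-\ha$ (so $\tau=0$ under the stated initialization), together with the $\pi=3/10$ counterexample and the sharpness example at $\pi=\ha$, is a sound and worthwhile clarification of the statement's intended reading, but it does not change the argument.
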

The proof immediately follows Equation \ref{eqn:prop_single_node_expand} by plugging $\tau=0$ and $w_0=\pi - 1/2$. In fact, setting $w_0=\pi - 1/2$ corresponds to the condition in the greedy algorithm interpretation in \Sec{sec:greedy_alg}. One can see this by constructing an equivalent herding dynamics with a feature of constant norm as:
\begin{equation}
\phi'(x) = \left\{\begin{array}{rl}
1 & \textrm{if~} x = 1\\
-1 & \textrm{if~} x = 0
\end{array} \right.
\end{equation}
When initializing the weight at the moment $w'_0 = \bar{\phi'} = 2\pi - 1$, one can verify that this dynamics generates the same sample sequence as the original one and their weights are the same up to a constant factor of $2$, i.e. $w'_t = 2 w_t, \forall t\geq 0$. The new dynamics satisfies the two assumptions in \Sec{sec:greedy_alg} and therefore the sample sequences in both dynamical systems greedily minimize the error of the empirical probability (up to a constant factor):
\begin{equation}
\left|\frac{1}{T}\sum_{t=1}^T \phi'(x'_t) - (2\pi-1) \right| = 2\left|\frac{1}{T}\sum_{t=1}^T \eI[x_t=1] - \pi \right|
\end{equation}
This greedy algorithm actually achieves the optimal bound one can get with herding dynamics in the 1-neuron model, which is $1/2$.

\subsubsection*{Example: Herding a Discrete State Variable}
The application of herding to a binary variable can be extended naturally to a discrete state variables. Let $x$ be a variable that can take one of the $D$ states, $\{0, 1, \dots, D-1\}$. Given any distribution over these $D$ states in the set $\bpi \in \mathbb{R}^D, \sum_{d=0}^{D-1} \pi_d = 1$, we can run herding to simulate the activity of the discrete variable. The feature function, $\bphi(x)$, is defined as the 1-of-D encoding of the discrete state, that is, a vector of $D$ binary numbers, in which all the numbers are 0 except for the element indexed by the value of $x$. For example, for a variable with 4 states, the feature function of $\bphi(x=3)$ is $[0, 0, 1, 0]$. It is easy to observe that the expected value of the feature vector under the distribution $\bpi$ is exactly equal to $\bpi$.  Now, let us apply the herding update equations with the feature map $\bphi$ and input moment $\bpi$:
\begin{align}
s_t &= \arg\max_x \bw_{t-1}^T \bphi(x) = \arg\max_x w_{x,t-1} \label{eqn:herding_multinomial_1}\\
\bw_t &= \bw_{t-1} + \bpi - \bphi(s_t) \label{eqn:herding_multinomial_2}
\end{align}
The weight variables act similarly to the synaptic strength analogy in the neuron model example. At every iteration, the state with the highest potential gets activated, and then the corresponding weight is depressed after activation. Applying Proposition \ref{prop:bounded_weight}, we know that the empirical distribution of the samples converges to the input distribution at a faster rate than one would get from random sampling:
\begin{equation}
\left|\frac{1}{T}\sum_{t=1}^T \bphi(s_t) - \bpi \right| = \mathcal{O}\left(\frac{1}{T}\right)
\end{equation}

The dynamics of the weight vector is more complex than the case of a binary variable in the previous subsection. However, there are still some interesting observations one can make about the trajectory of the weights which we explain in the appendix.

\subsubsection*{Weak Chaos in the Herding Dynamics}
Now let us consider herding in a general setting with $D$ states and each state is associated with a $K$ dimensional feature vector. The update equation for the weights \ref{eqn:herding_2} can be viewed as a series of translations in the parameter space, $\bw\rightarrow \bw + \rho(\bx)$, where each discrete state $\bx\in\cX$ corresponds to one translation vector (i.e. $\rho(\bx) = \bar\bphi - \bphi(\bx)$). See \Fig{fig:cones} for an example with $D=6$ and $K=2$. The parameter space is partitioned into cones emanating from the origin, each corresponding to a state according to Equation \ref{eqn:herding_1}. If the current location of the weights is inside cone $\bx$, then one applies the translation corresponding to that cone and moves along $\rho(\bx)$ to the next point. This system is an example of what is known as a piecewise translation (or piecewise isometry more generally) \citep{goetz2000dynamics}.

\begin{figure}[tb]
  \centering
  \begin{minipage}[t]{0.48\textwidth}
    \centering
    \includegraphics[width=\textwidth]{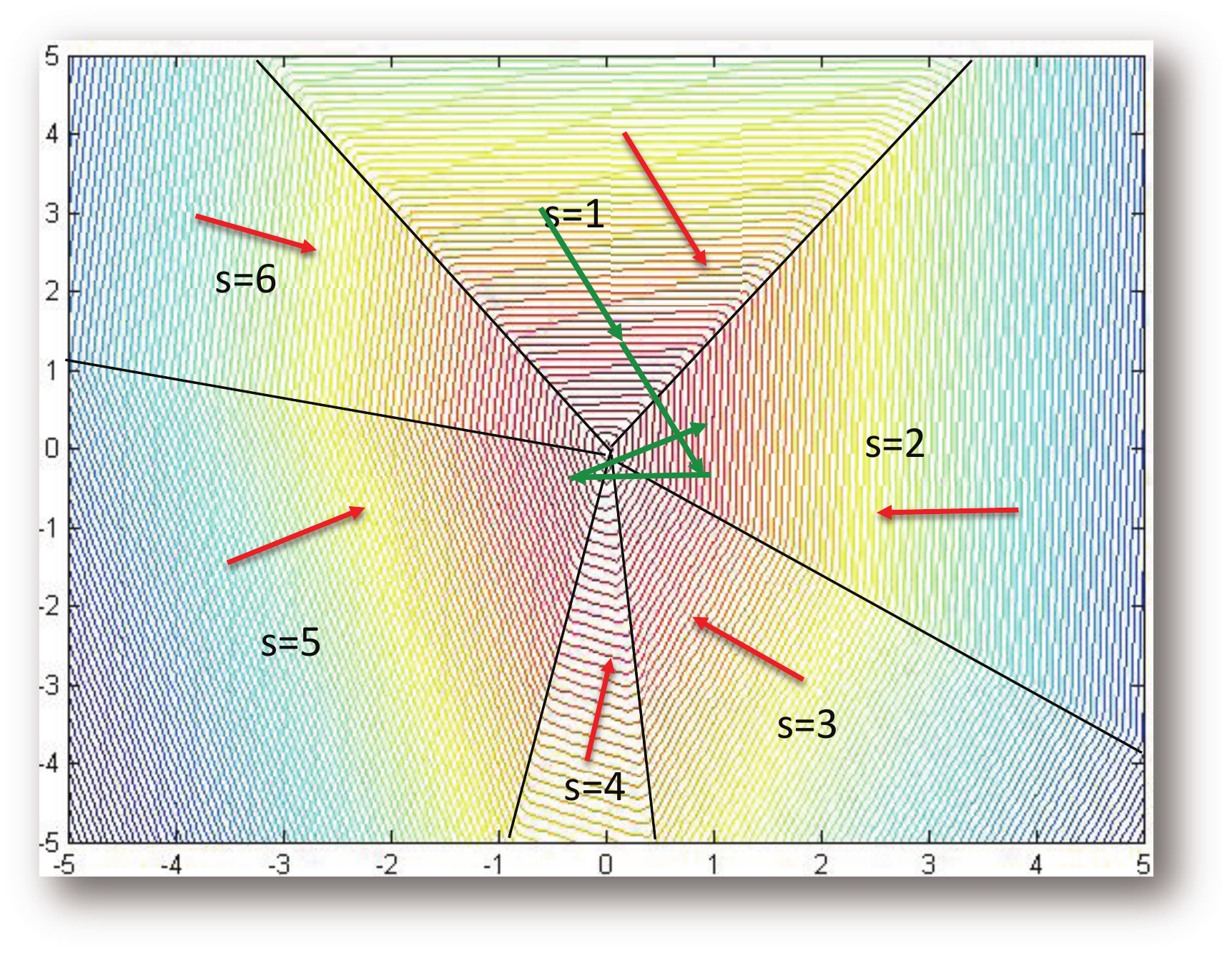}
    \caption{Cones in parameter space $\{w_1, w_2\}$ that correspond to the discrete states $s_1,...,s_6$. Arrows indicate the translation vectors associated with the cones.}
    \label{fig:cones}
  \end{minipage}
  \hfill
  \begin{minipage}[t]{0.48\textwidth}
    \centering
    \includegraphics[width=\textwidth]{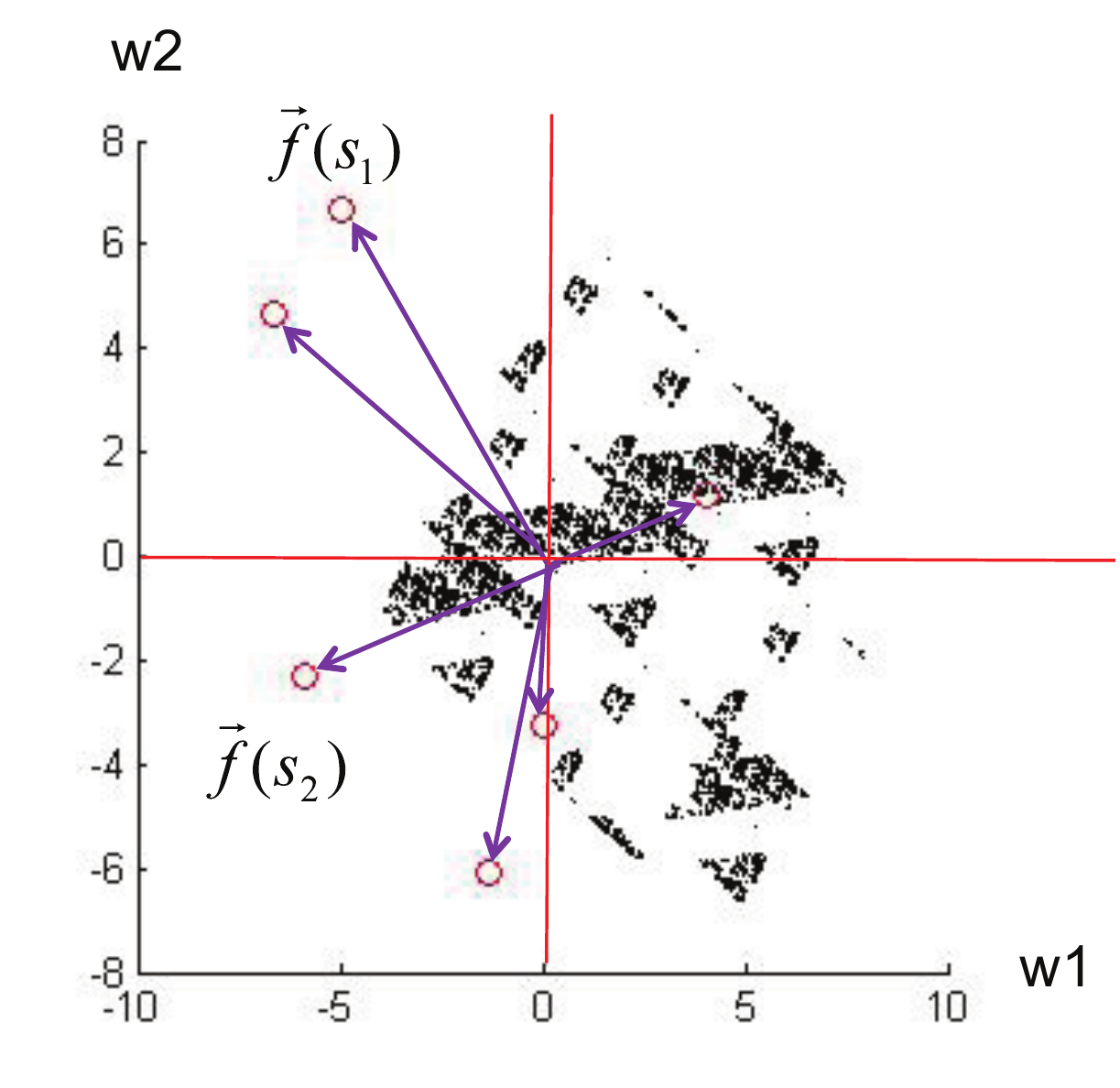}
    \caption{Fractal attractor set for herding with two parameters. The circles represent the feature-vectors evaluated at the states $s_1,...,s_6$. Hausdorff dimension for this example is between $0$ and $1$.}
    \label{fig:attractor}
  \end{minipage}  
\end{figure}


It is clear that this system has zero Lyapunov exponents\footnote{The Lyapunov exponent of a dynamical system is a quantity that characterizes the rate of separation of infinitesimally close trajectories. Quantitatively, two trajectories in phase space with initial separation $|\delta Z(0)|$ diverge (provided that the divergence can be treated within the linearized approximation) at a rate given by $|\delta Z(t)|\approx e^{\lambda t}|\delta Z(0)|$ where $\lambda$ is the Lyapunov exponent.} everywhere (except perhaps on the boundaries between cones but since this is a measure zero set we will ignore these). As the evolution of the weights will remain bounded inside some finite ball the evolution will converge to some attractor set. Moreover, the dynamics is non-periodic in the typical case (more formally, the translation vectors must form an incommensurate (possibly over-complete) basis set; for a proof see Appendix B of \cite{WellingChen10}). It can often be observed that this attractor has fractal dimension (see \Fig{fig:attractor} for an example). All these facts point to the idea that herding is on the edge between full chaos (with positive Lyapunov exponents) and regular periodic behavior (with negative Lyapunov exponents). In fact, herding is an example of what is called ``weak chaos", which is usually defined through its (topological) entropy discussed below. Finally, as we have illustrated in \Fig{fig:bif}, one can construct a sequence of iterated maps of which herding is the limit and which exhibits period doubling. This is yet another characteristic of systems that are classified as ``edge of chaos". Whether the attractor set is of fractal dimension in general remains an open question. For the case of single neuron model, the attractor is the entire interval $(\pi-1,\pi]$ if $\pi$ is irrational but for systems with more states it remains unknown.

We will now estimate the entropy production rate of herding. This will inform us further of the properties of this system and how it processes information. From \Fig{fig:cones} we see that the sequence $s_1,s_2,...$ can be interpreted as the symbolic system of the continuous dynamical system defined for the parameters $\bw$. A sequence of symbols (states) is sometimes referred to as an ``itinerary.'' Every time $\bw$ falls inside a cone we record its label which equals the state $\bx$. The topological entropy for the symbolic system can be defined by counting the total number of subsequences of length $T$, which we will call $M(T)$. One may think of this as a dynamical language where the subsequences are called ``words" and the topological entropy is thus related to the number of words of length $T$. More precisely, the topological entropy is defined as,
\be
h = \lim_{T\ra\infty} h(T) = \lim_{T\ra\infty} \frac{\log M(T)}{T}
\ee
It was rigorously proven in \cite{goetz2000dynamics} that $M(T)$ grows polynomially in $T$ for general piecewise isometries, which implies that the topological entropy vanishes for herding. It is however interesting to study the growth of $M(T)$ as a function of $T$ to get a sense of how chaotic its dynamics is.

For the simplest model of a single neruon with $\pi$ being an irrational number, it turns out $M(T) = T+1$, which is the absolute bare minimum for sequences that are not eventually periodic. It implies that our neuron model generates Sturmian sequences for irrational values of $\pi$ which are precisely defined to be the non-eventually periodic sequences of minimal complexity \citep{LuWang05}. (For a proof, please see \cite{WellingChen10}.)

To count the number of subsequences of length $T$ for a general model, we can study the $T$-step herding map that results from applying herding $T$ steps at a time. The original cones are now further subdivided into smaller convex polygons, each one labeled with the sequence $s_1,s_2,...,s_T$ that the points inside the polygon will follow during the following $T$ steps. Thus as we increase $T$, the number of these polygons will increase and it is exactly the number of those polygons which partition our parameter space that is equal to the number of possible subsequences. We first claim that every polygon, however small, will break up into smaller sub-pieces after a finite amount of time. This is proven in \cite{WellingChen10}. In fact, we expect that in a typical herding system every pair of points will break up as well, which, if true, would infer that the diameter of the polygons must shrink. A partition with this property is called a \emph{generating partition}. Based on some preliminary analysis and numerical simulations, we expect that the growth of $M(T)$ in the typical case (a.k.a. with an incommensurate translation basis, see Appendix B of \cite{WellingChen10}) is a polynomial function of the time, $M(T) \sim t^K$, where $K$ is the number of dimensions (which is equal to the number of herding parameters). Since it has been rigorously proven that any piecewise isometry has a growth rate that must have an exponent less or equal than $K$ \citep{goetz2000dynamics}, this would mean that herding achieves the highest possible entropy within this class of systems with $H(T)=T h(T)$ for a sequence of length $T$ (for $T$ large enough) as:
\be
H(T) = K\log(T) \label{eqn:entropy_herding}
\ee

This result should be understood in comparison with regular and random sequences. In a regular (constant or periodic) sequence, the number of subsequences is constant with respect to the length, i.e. $H(T) = \mathrm{const}$. In contrast, the dominant part of the Kolmogorov-Sinai entropy of a random sequence (considering, e.g., a stochastic process) or a fully chaotic sequence grows linearly in time $T$, i.e. $H_{\mathrm{ext}}(T)=hT$ due to the injected random noise.

\section{Generalized Herding}\label{sec:extensions}

The moment matching property in Proposition \ref{prop:sublinear_weight} and \ref{prop:bounded_weight} requires only a mild condition on the $L_2$ norm of the dynamic weights. That grants us with great flexibility in modifying the original algorithm for more practical implementation as well as a larger spectrum of applications. \cite{GelfandMaatenChenWelling10} provided a general condition on the recurrence of the weight sequence, from which we discuss how to generalize the herding algorithm in this section with two specific examples. \cite{chen2014herdingbookchapter} described another extension of herding that violated the condition but it achieved the minimum matching distance instead in a constrained problem.

\subsection{A General Condition for Recurrence - The Perceptron Cycling Theorem} \label{sec:pct}
The moment matching property of herding relies on the recurrence of the weight sequence (Corollary \ref{cor:recurrence}) whose proof again relies on the premise that the maximization is carried out exactly in the herding update equation \ref{eqn:herding_1}. However, the number of model states is usually exponentially large (e.g. $|\cX| = J^m$ when $\bx$ is a vector of $m$ discrete variables each with $J$ values) and it is intractable to find a global maximum in practice. A local maximizer has to be employed instead. One wonders if the features averaged over samples will still converge to the input moments when the samples are suboptimal states? In this subsection we give a general and verifiable condition for the recurrence of the weight sequence based on the perceptron cycling theorem \citep{minsky1969perceptrons}, which consequently suggests that the moment matching property may still hold at the rate of $\cO(1/T)$ even with a relaxed herding algorithm.


The invention of the perceptron \citep{rosenblatt1958perceptron} goes back to the very beginning of AI more than half a century ago. Rosenblatt's very simple, neurally plausible learning rule made it an attractive algorithm for learning relations in data: for every input $\bx_i$, make a linear prediction about its label: $y_{i_t}^* = \mathrm{sign}(\bw_{t-1}^T \bx_{i_t})$ and update the weights as,
\be
\bw_t = \bw_{t-1} + \bx_{i_t} (y_{i_t} - y_{i_t}^*). \label{eqn:perceptron}
\ee
A critical evaluation by \cite{minsky1969perceptrons} revealed the perceptron's limited representational power. This fact is reflected in the behavior of Rosenblatt's learning rule: if the data is linearly separable, then the learning rule converges to the correct solution in a number of iterations that can be bounded by $(R/\ga)^2$, where $R$ represents the norm of the largest input vector and $\ga$ represents the margin between the decision boundary and the closest data-case. However, ``for data sets that are not linearly separable, the perceptron learning algorithm will never converge" (quoted from \cite{bishop2006pattern}).

While the above result is true, the theorem in question has something much more powerful to say. The ``perceptron cycling theorem" (PCT) \citep{minsky1969perceptrons} states that for the inseparable case the weights remain bounded and do not diverge to infinity. The PCT was initially introduced in \cite{minsky1969perceptrons} but had a gap in the proof that was fixed in \cite{block1970boundedness}. 

\begin{theorem}[Boundedness Theorem]
Consider a sequence of vectors $\{\bw_t\}$, $\bw_t \in \mathbb{R}^D$, $t = 0, 1, \dots$ generated by the iterative procedure of \Algo{alg:pct_sequence}.
\begin{algorithm}
\caption{Algorithm to generate the sequence $\{\bw_t\}$.}
\label{alg:pct_sequence}
\begin{algorithmic}
\State $V$ is a finite set of vectors in $\mathbb{R}^D$.
\State $\bw_0$ is initialized arbitrarily in $\mathbb{R}^D$.
\For{$t = 0 \to T$ ($T$ could be $\infty$)}
\State $\bw_{t+1} = \bw_t + \bv_t$, where $\bv_t\in V$ satisfies $\bw_t^T \bv_t \leq 0$
\EndFor
\end{algorithmic}
\end{algorithm}

Then, $\|\bw_t\| \leq \|\bw_0\| + M, \forall t\geq 0$ where $M$ is a constant depending on $V$ but not on $\bw_0$.
\end{theorem}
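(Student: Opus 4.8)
The plan is to prove the bound by a Lyapunov argument on $\|\bw_t\|^2$, squeezing all possible information out of the single hypothesis $\bw_t^T\bv_t\le 0$ together with the \emph{finiteness} of $V$. First I would record the elementary per-step estimate: with $\cB\defeq\max_{\bv\in V}\|\bv\|$,
\begin{equation}
\|\bw_{t+1}\|^2=\|\bw_t\|^2+2\,\bw_t^T\bv_t+\|\bv_t\|^2\le\|\bw_t\|^2+\cB^2 ,
\end{equation}
so naive iteration gives only $\|\bw_t\|\le(\|\bw_0\|^2+t\cB^2)^{1/2}$, an $\cO(\sqrt t)$ bound that is far too weak; the entire content of the theorem is in upgrading this to a bound independent of $t$, and that upgrade has to come from $V$ being finite, not merely bounded. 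Two harmless reductions help: every increment $\bv_t$ lies in $S\defeq\mathrm{span}(V)$, so the component of $\bw_t$ orthogonal to $S$ is frozen at its initial value and the constraint only ever sees the $S$-component --- hence one may assume $\mathrm{span}(V)=\mathbb{R}^D$, the passage back to the original problem inflating the additive constant only in a controlled way; and the argument applies unchanged to finite and to infinite sequences, so one may reason as if $T=\infty$.

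The core is a \emph{recurrence lemma}: there is a radius $\cR$, depending only on $V$, such that whenever $\|\bw_t\|>\cR$ the trajectory returns to $\|\bw_{t+j}\|\le\cR$ after finitely many steps, the step count being at most proportional to $\|\bw_t\|$ and the iterates in between exceeding $\|\bw_t\|$ by at most a constant $\cC(V)$ (the proportionality constant and $\cC$ rescale with $V$ the right way, which is why ``proportional to $\|\bw_t\|$'' is consistent with a $V$-only statement: one step moves $\|\bw_t\|$ by at most $\cB$, so reaching a fixed radius inherently takes $\Theta(\|\bw_t\|/\cB)$ steps). Granting the lemma, one organizes the time axis into excursions above and below $\cR$; the per-step estimate controls the height of each excursion, and one reads off $\|\bw_t\|\le\|\bw_0\|+M$ with $M=\cR+\cB+\cC$.

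To prove the recurrence lemma I would use that the whole construction is invariant under rescaling $V$, so it is enough to understand the motion when $\bw_t$ is near the unit sphere: the admissible increments --- those $\bv\in V$ with non-positive inner product against the current, nearly-unit, direction $\bd=\bw_t/\|\bw_t\|$ --- are drawn from a fixed finite set, and since $\bw$ moves slowly relative to its size the question localizes to the behaviour of the direction $\bd$ on the sphere. There I would run a case analysis on the polyhedral dual cone $C^{\star}\defeq\{\bd:\bd^T\bv\ge 0\ \forall\,\bv\in V\}$ together with an induction on $D$. If $C^{\star}=\{\mathbf 0\}$, the continuous map $\bd\mapsto\min_{\bv\in V}\bd^T\bv$ is strictly negative on the compact unit sphere, hence bounded away from $0$: a uniformly inward increment is always available and a short-window argument forces the norm to decrease. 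If $C^{\star}$ is nontrivial, the directions that could resist the inward drift all lie on its proper faces; the increments admissible along such a face form a sub-collection of $V$ spanning a strictly smaller subspace, and the induced motion is controlled by the inductive hypothesis in lower dimension (after re-applying the span reduction). Assembling the cases gives the return within a controlled excursion.

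The step I expect to be the main obstacle --- and the reason the theorem is subtler than it looks --- is the \emph{neutral case} $\bw_t^T\bv_t=0$. Such a step does not decrease $\|\bw_t\|^2$ at all; it \emph{increases} it by $\|\bv_t\|^2$, and the procedure is entitled to pick a neutral (or nearly neutral) increment whenever one exists. One must show that neutral choices cannot be chained together so as to keep the iterates far from the origin forever, and it is precisely here that the original argument of \cite{minsky1969perceptrons} was incomplete and \cite{block1970boundedness} supplied the repair; the finiteness of $V$ --- through the discreteness of the admissible-increment set at each direction and the polyhedral structure of $C^{\star}$ --- is exactly what is needed to close this gap.
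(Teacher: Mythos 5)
First, a point of reference: the chapter does not actually prove this theorem; it states it and defers to \cite{minsky1969perceptrons} and \cite{block1970boundedness}, so there is no in-paper proof to compare against. Judged on its own terms, your proposal has a genuine gap at its core: the recurrence lemma is false. Take $D=2$, $V=\{(1,0),(-1,0),(0,1),(0,-1)\}$, and start at $\bw_0=(a,0)$ for large $a$. At $(a,0)$ the increments $(0,1)$, $(0,-1)$ and $(-1,0)$ all have non-positive inner product with $\bw_t$ and are admissible; at $(a,1)$ the increments $(0,-1)$ and $(-1,0)$ are admissible. The procedure may therefore alternate $(0,1),(0,-1),(0,1),\dots$ forever, cycling between $(a,0)$ and $(a,1)$. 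Here $C^{\star}=\{\mathbf 0\}$, a uniformly inward increment (namely $(-1,0)$, with $\bw_t^T\bv = -a$) is available at every step, and yet the norm never drops below $a$. This shows two things at once: (i) the trajectory need not return to any ball whose radius depends only on $V$ --- which is precisely why the theorem is stated as $\|\bw_t\|\leq\|\bw_0\|+M$ rather than $\|\bw_t\|\leq M'$ --- and (ii) your treatment of the case $C^{\star}=\{\mathbf 0\}$ conflates ``a norm-decreasing increment is available'' with ``the procedure must choose it.'' The hypothesis $\bw_t^T\bv_t\leq 0$ permits an adversarial selection of neutral increments indefinitely, so no short-window argument can force descent, and the excursion decomposition built on the recurrence lemma collapses.

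I suspect the recurrence intuition was imported from the herding-specific setting: Proposition 2.3 and Corollary 2.4 of the chapter do establish return to a fixed ball, but only because exact maximization makes the drift $\bw_t^T\bv_t=\ell_0(\bw_t)$ strictly negative and, by the scale-free property, proportional to $-\|\bw_t\|$, which eventually dominates the $+\cB^2$ term in the squared-norm expansion. Under the PCT's weaker hypothesis the drift can be identically zero, and the finiteness of $V$ must be exploited differently: the argument of \cite{block1970boundedness} bounds $\|\bw_t\|-\|\bw_0\|$ directly, by an induction on the dimension spanned by $V$ and a decomposition of $\bw_t$ relative to the cone generated by $V$, without ever asserting a return to a fixed neighborhood of the origin. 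Your instincts that the neutral case $\bw_t^T\bv_t=0$ is the crux, and that the polyhedral structure of $C^{\star}$ together with induction on dimension is the right toolbox, are both sound; but the lemma on which you hang the whole proof is not true, so the argument as proposed does not go through.
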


The theorem still holds when $V$ is a finite set in a Hilbert space. The PCT leads to the boundedness of the perceptron weights where we identify $\bv_t = \bx_{i_{t+1}}(y_{i_{t+1}}-y_{i_{t+1}}^*)$, a finite set $V = \{\bx_i(y_i-y_i^*)|y_i=\pm 1,y_i^*=\pm 1, i=1,\dots,N\}$ and observe
\be
\bw_t^T \bv_t = \bw_t^T \bx_{i_{t+1}}(y_{i_{t+1}} - y_{i_{t+1}}^*) = |\bw_t^T \bx_{i_{t+1}}|(\mathrm{sign}(\bw_t^T \bx_{i_{t+1}})y_{i_{t+1}} - 1) \leq 0
\ee
When the data is linearly separable, Rosenblatt's learning rule will find a $\bw$ such that $\bw^T \bv_i=0, \forall i$ and the sequence of $\bw_t$ converges. Otherwise, there always exists some $\bv_i$ such that $\bw^T \bv_i<0$ and PCT guarantees the weights are bounded.

The same theorem also applies to the herding algorithm by identifying $\bv_t = \bar\bphi-\bphi(\bs_{t+1})$ with $\bs_{t+1}$ defined in Equation \ref{eqn:herding_1}, a finite set $V=\{\bar\bphi-\bphi(\bx)|\bx\in\mathcal{X}\}$, and observing that  
\be
\bw_t^T \bv_t = \bw_t^T \bar\bphi- \bw_t^T \bphi(\bs_{t+1}) \leq 0 \label{eqn:pct_inequality}
\ee
It is now easy to see that, in general, herding does not converge because under very mild conditions we can always find an $\bs_{t+1}$ such that $\bw_t^T \bv_t < 0$. More importantly, the boundedness theorem (or PCT) provides a general condition for the recurrence property and hence the moment matching property of herding. Inequality \ref{eqn:pct_inequality} is easy to be verified at running time and does not require $\bs_{t+1}$ to be the global optimum.

\subsection{Generalizing the Herding Algorithm} \label{sec:generalize_herding}

PCT ensures that the average features from the samples will match the moments at a fast convergence rate as long as the algorithm we are running satisfies the following conditions:
\begin{enumerate}
\item The set $V$ is finite,
\item $\bw_t^T \bv_t = \bw_t^T \bar\bphi - \bw_t^T \bphi(\bs_t) \leq 0, \forall t$, 
\end{enumerate}
This set of mild conditions allows us to generalize the original herding algorithm easily.

Firstly, the PCT provides a theoretical justification for using a local search algorithm that performs partial maximization. For example, we may start the local search from the state we ended up in during the previous iteration (a so-called persistent chain \citep{Younes89,Neal92,Yuille04,Tieleman08}). Or, one may consider contrastive divergence-like algorithms \citep{Hinton02}, in which the sampling or mean field approximation is replaced by a maximization. In this case, maximizations are initialized on all data-cases and the weights are updated by the difference between the average over the data-cases minus the average over the $\{\bs_i\}$ found after (partial) maximization. In this case, the set V is given by: $V = \{\bar\bphi - \frac{1}{D} \sum_{i=1}^D \bphi(\bs_i)| \bs_i \in \mathcal{X}, \forall i\}$. For obvious reasons, it is now guaranteed that $\bw_t^T \bv_t \leq 0$.

Secondly, we often use mini-batches of size $d < D$ in practice instead of the full data set. In this case, the cardinality of the set $V$ is enlarged to, e.g., $|V| = C(d,D)J^m$, with $C(d,D)$ representing the ``$d$ choose $D$" ways to compute the sample mean $\bar\bphi_{(d)}$ based on a subset of $d$ data-cases. The negative term remains unaltered. Since the PCT still applies: $\left\| \frac{1}{\tau} \sum_{t=1}^\tau \bar\bphi_{(d),t} - \frac{1}{\tau}\sum_{t=1}^\tau \bphi(\bs_t)\right\|_2 = \mathcal{O}(1/\tau)$. Depending on how the mini-batches are picked, convergence onto the overall mean $\bar\bphi$ can be either $
\mathcal{O}(1/\sqrt{\tau})$ (random sampling with replacement) or $\mathcal{O}(1/\tau)$ (sampling without replacement which has picked all data-cases after $\lceil D/d \rceil$ rounds).

Besides changing the way we compute the positive and negative terms in $\bv_t$, generalizing the definition of {\em features} will allow us to learn a much wider scope of models beyond the fully visible MRFs as discussed in the following sections.

\subsection{Herding Partially Observed Random Field Models}\label{sec:herding_pomrf}
The original herding algorithm only works for fully visible MRFs because in order to compute the average feature vector of the training data we have to observe the state of all the variables in a model. In this subsection, we generalize herding to partially observed MRFs (POMRFs) by dynamically imputing the value of latent variables in the training data during the run of herding. This extension allows herding to be applied to models with a higher representative capacity.

Consider a MRF with discrete random variables $(\bx, \bz)$ where $\bx$ will be observed and $\bz$ will remain hidden. A set of feature functions is defined on $\bx$ and $\bz$, $\{\phi_\al(\bx, \bz)\}$, each associated with a weight $w_\al$. Given these quantities we can write the following Gibbs distribution,
\be
P(\bx, \bz; \bw) = \frac{1}{Z(\bw)}\exp\left(\sum_\al w_\al \phi_\al(\bx, \bz)\right)
\ee
The log-likelihood function with a dataset $\mathcal{D}=\{\bx_i\}_{i=1}^D$ is defined as
\be
\ell(\bw) = \frac{1}{D}\sum_{i=1}^D \log\left(\sum_{\bz_i}\exp\left(\bw^T \bphi(\bx_i,\bz_i)\right)\right) - \log Z(\bw) \label{eqn:llh_pomrf}
\ee

Analogous to the duality relationship between MLE and MaxEnt for fully observed MRFs, we can write the log-likelihood of a POMRF as
\begin{align}
&\ell =\max_{\{Q_i\}}\min_{R} \frac{1}{D}\sum_{i=1}^D \mathcal{H}(Q_i) - \mathcal{H}(R) \\
&+ \sum_{\al} w_\al \left(\frac{1}{D}\sum_{i=1}^D \mathbb{E}_{Q_i(\bz_i)}[\phi_\al(\bx_i, \bz_i)] - \mathbb{E}_{R(\bx,\bz)}[\phi_\al(\bx, \bz)]\right) \label{eqn:maxent_pomrf}
\end{align}
where $\{Q_i\}$ are variational distributions on $\bz$, and $R$ is a variational distribution on $(\bx, \bz)$. The dual form of MLE turns out as a minimax problem on $\frac{1}{D}\sum_{i=1}^D \mathcal{H}(Q_i) - \mathcal{H}(R)$ with a set of constraints
\begin{equation}
\frac{1}{D}\sum_{i=1}^D \mathbb{E}_{Q_i(\bz_i)}[\phi_\al(\bx_i, \bz_i)] = \mathbb{E}_{R(\bx,\bz)}[\phi_\al(\bx, \bz)]
\end{equation}
We want to achieve high entropy for the distributions $\{Q_i\}$ and $R$, and meanwhile the average feature vector on the training set with hidden variables marginalized out should match the expected feature w.r.t. to the joint distribution of the model. The weights $\bw_\al$ act as Lagrange multipliers enforcing those constraints.

Similar to the derivation of herding for fully observed MRFs, we now introduce a temperature in Equation \ref{eqn:llh_pomrf} by replacing $\bw$ with $\bw/T$. Taking the limit $T\rightarrow 0$ of $\ell_T\defeq T\ell$, we see that the entropy terms vanish. For a given value of $\bw$ and in the absence of entropy, the optimal distribution $\{Q_i\}$ and $R$ are delta-peaks and their averages should be replace with maximizations, resulting in the objective,
\be
\ell_0(\bw) = \frac{1}{D}\sum_{i=1}^D \max_{\bz_i} \bw^T \bphi(\bx_i,\bz_i) - \max_{\bs}\bw^T \bphi(\bs)
\ee
where we denote $\bs=(\bx, \bz)$.

Taking a gradient descent update on $\ell_0$ with a fixed learning rate ($\eta=1$) defines the herding algorithm on POMRFs \citep{Welling09B}:
\begin{align}
\bz_{it}^* &= \arg\max_{\bz_i} \bw_{t-1}^T \bphi(\bx_i, \bz_i), \forall i \label{eqn:herding_pomrf_1}\\
\bs_t^* &= \arg\max_{\bs} \bw_{t-1}^T \bphi(\bs) \label{eqn:herding_pomrf_2}\\
\bw_t &= \bw_{t-1} + \left[\frac{1}{D}\sum_{i=1}^D \bphi(\bx_i, \bz_{it}^*)\right] - \bphi(\bs_t^*) \label{eqn:herding_pomrf_3}
\end{align}
We use a superscript ``$^*$" to denote states obtained by maximization. These equations are similar to herding for the fully observed case, but different in the sense that we need to impute the unobserved variables $\bz_i$ for every data-case separately through maximization. The weight update also consist of a positive ``driving term," which is now a changing average over data-cases, and a negative term, which is identical to the corresponding term in the fully observed case.

\subsubsection*{Moment Matching Property}

We can prove the boundedness of the weights with PCT by identifying\\ $\bv_t = \left[\frac{1}{D}\sum_{i=1}^D \bphi(\bx_i, \bz_{i,t+1}^*)\right] - \bphi(\bs_{t+1}^*)$, a finite set $V=\{\bv_t(\{\bz_i\},\bs)|\bz_i\in \mathcal{X}_\bz, \forall i, \bs\in\mathcal{X}\}$, and observing the inequality
\begin{align}
\bw_t^T\bv_t &= \left[\frac{1}{D}\sum_{i=1}^D \bw_t^T \bphi(\bx_i, \bz_{i,t+1}^*)\right] - \bw_t^T\bphi(\bs_{t+1}^*) \label{eqn:pct_pomrf} \\
&= \left[\frac{1}{D}\sum_{i=1}^D \max_{\bz_i}\bw_t^T \bphi(\bx_i, \bz_{i})\right] - \max_\bs \bw_t^T \bphi(\bs) \leq 0
\end{align}
The last inequality holds because the second term maximizes over more variables than the first term. Again, we do not have to be able to solve the difficult optimization problems of Equation \ref{eqn:herding_pomrf_1} and \ref{eqn:herding_pomrf_2}. Partial progress in the form of a few iterations of coordinate-wise descent is often enough to satisfy the condition in Equation \ref{eqn:pct_pomrf} which can be checked easily. 

Following a similar proof as Proposition \ref{prop:bounded_weight}, we obtain the fast moment matching property of herding on POMRFs:
\begin{proposition}\label{pro:bounded_weight_pomrf}
There exists a constant $R$ such that herding on a partially observed MRF satisfies
\be
\left|\frac{1}{\tau}\sum_{t=1}^\tau \frac{1}{D}\sum_{i=1}^D \phi_\al (\bx_i, \bz_{i t}^*) - \frac{1}{\tau}\sum_{t=1}^\tau \phi_\al(\bs_t^*) \right| \leq \frac{2R}{\tau}, \forall \al
\ee
\end{proposition}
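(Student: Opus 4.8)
The plan is to mirror the proof of Proposition~\ref{prop:bounded_weight}, replacing its appeal to Corollary~\ref{cor:recurrence} by the Boundedness Theorem (PCT). The first step is to verify the two PCT hypotheses for the POMRF herding recursion. With $\bv_t = \big[\tfrac{1}{D}\sum_{i=1}^D \bphi(\bx_i, \bz_{i,t+1}^*)\big] - \bphi(\bs_{t+1}^*)$ and $V = \{\bv_t(\{\bz_i\},\bs) \mid \bz_i \in \mathcal{X}_\bz\ \forall i,\ \bs \in \mathcal{X}\}$, finiteness of $V$ follows because $\mathcal{X}_\bz$ and $\mathcal{X}$ are finite discrete spaces and there are only $D$ data items, so $|V| \le |\mathcal{X}_\bz|^D |\mathcal{X}| < \infty$; the non-positivity condition $\bw_t^T \bv_t \le 0$ is exactly Equation~\ref{eqn:pct_pomrf}, which holds because the MAP term there maximizes over strictly more variables than the imputation term. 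Since the recursion~\ref{eqn:herding_pomrf_3} already has the form $\bw_{t+1} = \bw_t + \bv_t$ with $\bv_t$ indexed as in Equation~\ref{eqn:pct_pomrf}, the Boundedness Theorem applies directly and yields a constant $M$, depending on $V$ but not on $\bw_0$, with $\|\bw_t\|_2 \le \|\bw_0\|_2 + M$ for all $t$. Put $R = \|\bw_0\|_2 + M$; then $|w_{\al,t}| \le \|\bw_t\|_2 \le R$ for every coordinate $\al$ and every $t$.

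The second step is the telescoping identity. Summing the weight update~\ref{eqn:herding_pomrf_3} from $t = 1$ to $\tau$ collapses the left side, giving
\be
\bw_\tau - \bw_0 = \sum_{t=1}^\tau \frac{1}{D}\sum_{i=1}^D \bphi(\bx_i, \bz_{it}^*) - \sum_{t=1}^\tau \bphi(\bs_t^*),
\ee
so that dividing by $\tau$ and reading off coordinate $\al$,
\be
\frac{1}{\tau}\sum_{t=1}^\tau \frac{1}{D}\sum_{i=1}^D \phi_\al(\bx_i, \bz_{it}^*) - \frac{1}{\tau}\sum_{t=1}^\tau \phi_\al(\bs_t^*) = \frac{w_{\al\tau} - w_{\al 0}}{\tau}.
\ee
Taking absolute values and applying the triangle inequality together with the bounds $|w_{\al\tau}|, |w_{\al 0}| \le R$ from the first step yields the claimed estimate $\le \tfrac{2R}{\tau}$.

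I do not expect a genuine obstacle here: the argument is the fully observed proof transplanted to the latent-variable setting. The only points needing attention are the two bookkeeping items in the first step — confirming that $V$ is finite, which is where one uses that the data set is finite and the hidden alphabet $\mathcal{X}_\bz$ is discrete so that only finitely many imputed configurations $\{\bz_i^*\}$ can ever arise, and checking that the time indices of \Algo{alg:pct_sequence} line up with those of the herding iteration so that the hypothesis \ref{eqn:pct_pomrf} is precisely the non-positivity condition for the step actually performed. Everything downstream of that is telescoping and the triangle inequality.
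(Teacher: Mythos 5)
Your proposal is correct and follows essentially the same route as the paper: the paper likewise establishes boundedness of the weights by invoking the PCT with exactly this choice of $\bv_t$ and finite set $V$ (using Equation~\ref{eqn:pct_pomrf} for the non-positivity condition), and then telescopes the update~\ref{eqn:herding_pomrf_3} as in the proof of Proposition~\ref{prop:bounded_weight}. The only difference is that the paper leaves the telescoping step implicit by saying it follows ``a similar proof as Proposition~\ref{prop:bounded_weight},'' whereas you have written it out.
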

Notice that besides a sequence of samples of the full state $\{\bs_t^*\}$ that form the joint distribution in the herding algorithm, we also obtain a sequence of samples of the hidden variables $\{\bz_{it}^*\}$ for every data case $\bx_i$ that forms the conditional distribution of $P(\bz_i|\bx_i)$. Those consistencies in the limit of $\tau\rightarrow \infty$ in Proposition \ref{pro:bounded_weight_pomrf} are in direct analogy to the maximum likelihood problem of Equation \ref{eqn:llh_pomrf} for which the following moment matching conditions hold at the MLE for all $\al$,
\be
\frac{1}{D}\sum_{i=1}^D\mathbb{E}_{P(\bz_i|\bx_i;\bw_{\textrm{MLE}})}[\phi_\al(\bx_i,\bz_i)] = \mathbb{E}_{P(\bx,\bz;\bw_{\textrm{MLE}})}[\phi_\al(\bx,\bz)]
\ee
These consistency conditions alone are not sufficient to guarantee a good model. After all, the dynamics could simply ignore the hidden variables by keeping them constant and still satisfy the matching conditions. In this case the hidden and visible subspaces completely decouple, defeating the purpose of using hidden variables in the first place. Note that the same holds for the MLE consistency conditions alone. However, an MLE solution also strives for high entropy in the hidden states. We observe in practice that the herding dynamics usually also induces high entropy in the distributions for $\bz$ avoiding the decoupling phenomenon described above.

The proof of the boundedness of weights depends on the assumption that we can find the global maximum in Equation \ref{eqn:herding_pomrf_2}, which is an intractable problem. \cite{Welling09B} also proposed a fully tractable herding variant that was guaranteed to satisfy PCT.

\begin{proposition}\label{pro:tractable_pomrf_herding}
Call $\mathcal{A}$ any tractable optimization algorithm to locate a local maximum in the product $\bw^T \bphi(\bx,\bz)$. This algorithm will be used to compute both $\bz_i^*$ and $\bs^*$. Call $\mathcal{E}_\mathcal{A}(\bx_i, \bw) = -\bw^T \bphi(\bx_i, \bz_i^*)$ the energy of data-case $i$ (note that this definition depends on the algorithm $\mathcal{A}$). Assume that given any initialization, $\mathcal{A}$ always return a state with an energy no larger than its initial state. Then the following tractable herding algorithm will remain in a compact region of weight space: Apply the usual herding updates with the difference that the optimization for $\bs^*$ is initialized at the state $(\bx_{i^*}, \bz_{i^*}^*)$ which represents the data-case with lowest energy $\mathcal{E}_\mathcal{A}(\bx_i, \bw)$.
\end{proposition}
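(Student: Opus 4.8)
The plan is to recognize this tractable variant as a special case of the iterative procedure in \Algo{alg:pct_sequence} and to invoke the Boundedness Theorem, exactly as was done for exact herding on POMRFs in \eq{eqn:pct_pomrf}; the only nontrivial point is that the prescribed initialization of the maximization for $\bs^*$ is precisely what keeps the sign condition alive once exact maximization is replaced by $\mathcal{A}$. Concretely, at the step producing $\bw_{t+1}$ from $\bw_t$, running $\mathcal{A}$ yields the imputations $\bz_{i,t+1}^*$ and the full state $\bs_{t+1}^*$, and the update \eq{eqn:herding_pomrf_3} has the form $\bw_{t+1}=\bw_t+\bv_t$ with
\[
\bv_t=\Big[\tfrac1D\sum_{i=1}^D\bphi(\bx_i,\bz_{i,t+1}^*)\Big]-\bphi(\bs_{t+1}^*).
\]
So I would verify the two hypotheses of the theorem for this $\bv_t$.

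First, finiteness of $V$: since $(\bx,\bz)$ ranges over a finite discrete set, the collection $V=\{\tfrac1D\sum_i\bphi(\bx_i,\bz_i)-\bphi(\bs):\bz_i\in\mathcal{X}_\bz,\ \bs\in\mathcal{X}\}$ is finite and contains every $\bv_t$ produced by $\mathcal{A}$, whatever local optima $\mathcal{A}$ happens to land on. Second, the sign condition: I would expand, using the very definition $\mathcal{E}_\mathcal{A}(\bx_i,\bw_t)=-\bw_t^T\bphi(\bx_i,\bz_{i,t+1}^*)$ (with $\bz_{i,t+1}^*$ the imputation actually used at this step),
\[
\bw_t^T\bv_t=-\tfrac1D\sum_{i=1}^D\mathcal{E}_\mathcal{A}(\bx_i,\bw_t)-\bw_t^T\bphi(\bs_{t+1}^*),
\]
and then bound the negative term. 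By construction $\bs_{t+1}^*$ is the output of $\mathcal{A}$ run from the initialization $(\bx_{i^*},\bz_{i^*,t+1}^*)$, with $i^*\in\argmin_i\mathcal{E}_\mathcal{A}(\bx_i,\bw_t)$, whose energy is $\mathcal{E}_\mathcal{A}(\bx_{i^*},\bw_t)$; the monotone-energy assumption on $\mathcal{A}$ then gives
\[
-\bw_t^T\bphi(\bs_{t+1}^*)\le\mathcal{E}_\mathcal{A}(\bx_{i^*},\bw_t)=\min_i\mathcal{E}_\mathcal{A}(\bx_i,\bw_t)\le\tfrac1D\sum_{i=1}^D\mathcal{E}_\mathcal{A}(\bx_i,\bw_t),
\]
and substituting this into the previous display yields $\bw_t^T\bv_t\le 0$. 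With both hypotheses in hand, the Boundedness Theorem gives $\|\bw_t\|\le\|\bw_0\|+M$ for all $t$, i.e. the weight sequence stays in a compact region; the $\mathcal{O}(1/\tau)$ moment-matching consistencies then follow as in \prop{pro:bounded_weight_pomrf}.

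There is no deep obstacle here: the entire content is the chain of inequalities above, and that is also the reason the statement is phrased with this particular initialization. For an arbitrary local maximizer $\bs^*$ the term $-\bw_t^T\bphi(\bs_{t+1}^*)$ need not lie below the average data-case energy, so $\bw_t^T\bv_t\le 0$ can genuinely fail; anchoring the search for $\bs^*$ at the lowest-energy data-case is exactly what forces it through. The one technical point I would state explicitly is that $\mathcal{E}_\mathcal{A}$ must refer to the same algorithm $\mathcal{A}$ (and the same run) used to compute the imputations $\bz_{i,t+1}^*$ entering $\bv_t$, so that $-\bw_t^T\bphi(\bx_{i^*},\bz_{i^*,t+1}^*)$ is literally $\mathcal{E}_\mathcal{A}(\bx_{i^*},\bw_t)$; the definition in the statement already builds this in, so beyond noting it the argument is routine.
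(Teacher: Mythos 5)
Your argument is correct and is essentially the paper's own proof: the same identification of $\bv_t$, the same chain $-\bw_t^T\bphi(\bs^*) \le \mathcal{E}_\mathcal{A}(\bx_{i^*},\bw_t) = \min_i\mathcal{E}_\mathcal{A}(\bx_i,\bw_t) \le \frac{1}{D}\sum_i\mathcal{E}_\mathcal{A}(\bx_i,\bw_t)$, and the same appeal to the PCT. You merely make explicit two points the paper leaves implicit (finiteness of $V$ and that the energies must come from the same run of $\mathcal{A}$), which is fine.
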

\begin{proof}
The proof is trivial using the PCT condition as:
\begin{align}
& \bw_t^T\bv_t = -\left[\frac{1}{D}\sum_{i=1}^D \mathcal{E}_\mathcal{A}(\bx_i, \bw_t)\right] + \mathcal{E}_\mathcal{A}(\bs^*, \bw_t) \\
& \leq -\left[\frac{1}{D}\sum_{i=1}^D \mathcal{E}_\mathcal{A}(\bx_i, \bw_t)\right] + \mathcal{E}_\mathcal{A}(\bx_{i^*}, \bw_t) \leq 0
\end{align}
\end{proof}

\subsection{Herding Discriminative Models}\label{sec:ch}
We have been talking about running herding dynamics in an unsupervised learning setting. The idea of driving a nonlinear dynamical system to match moments can also be applied to discriminative learning by incorporating labels into the feature functions. Recalling the perceptron learning algorithm in \Sec{sec:pct}, the learning rule in Equation \ref{eqn:perceptron} can be reformulated in herding style:
\begin{align}
y_{i_t}^* &= \argmax_{y\in\{-1,1\}} \bw_{t-1}^T (\bx_{i_t}y) \\
\bw_t &= \bw_{t-1} + \bx_{i_t}y_{i_t} - \bx_{i_t}y_{i_t}^*
\end{align}
where we identify the feature functions as $\phi_j(\bx, y)=x_j y, j=1,\dots,m$, use mini-batches of size $1$ at every iteration, and do a partial maximization of the full state $(\bx, y)$ with the covariate $\bx$ clamped at the input $\bx_{i_t}$. The PCT guarantees that the moments (correlation between covariates and labels) $\mathbb{E}_{\mathcal{D}}[\bx y]$ from the training data are matched with $\mathbb{E}_{\mathcal{D}_\bx P(y^*|\bx)}[\bx y^*]$ where $p(y^*|x)$ is the model distribution implied by how the learning process generates $y^*$ with the sequence of weights $\bw_t$. The voted perceptron algorithm \citep{freund1999large} is an algorithm that runs exactly the same update procedure, applies the weights to make a prediction on the test data at every iteration $y_{\mathrm{test},t}^*$, and obtains the final prediction by averaging over iterations $y_{\mathrm{test}}^* = \mathrm{sign}(\frac{1}{\tau}\sum_{t=1}^\tau y_{\mathrm{test},t}^*)$. This amounts to learning and predicting based on the conditional expectation $\mathbb{E}_{P(y^*|\bx)}[y^*=1|\bx_{\mathrm{test}}]$ in the language of herding.

Let us now formulate the {\em conditional herding} algorithm in a more general way \citep{GelfandMaatenChenWelling10}. Denote the complete state of a data-case by $(\bx,\by,\bz)$ where $\bx$ is the visible input variable, $\by$ is the label, and $\bz$ is the hidden variable. Define a set of feature functions $\{\phi_\al(\bx, \by, \bz)\}$ with associated weights $\{w_\al\}$. Given a set of training data-cases, $\mathcal{D}=\{\bx_i, \by_i\}$, and a test set $\mathcal{D}_{\mathrm{test}}=\{\bx_{\mathrm{test},j}\}$, we run the conditional herding algorithm to learn the correlations between the inputs and the labels and make predictions at the same time using the following update equations:
\begin{align}
& \bz_{it}' = \argmax_{\bz_{i}}\bw_{t-1}^T\bphi(\bx_i,\by_i,\bz_i),\forall (\bx_i,\by_i)\in\mathcal{D} \label{eqn:herding_ch_1} \\
& (\by_{it}^*,\bz_{it}^*) = \argmax_{(\by_i,\bz_i)}\bw_{t-1}^T \bphi(\bx_i, \by_i,\bz_i), \forall \bx_i\in\mathcal{D}_\bx \label{eqn:herding_ch_2} \\
\bw_t &= \bw_{t-1} + \left[\frac{1}{D}\sum_{i=1}^D\bphi(\bx_i,\by_i,\bz_{it}')\right] - \left[\frac{1}{D}\sum_{i=1}^D\bphi(\bx_i,\by_{it}^*,\bz_{it}^*)\right] \label{eqn:herding_ch_3} \\
&(\by_{\mathrm{test},j,t}^*,\bz_{\mathrm{test},j,t}^*) = \arg\max_{(\by_j,\bz_j)}\bw_t^T \phi(\bx_{\mathrm{test},j}, \by_j,\bz_j), \forall \bx_{\mathrm{test},j}\in\mathcal{D}_{\mathrm{test}} \label{eqn:herding_ch_4}
\end{align}
In the positive term of Equation \ref{eqn:herding_ch_3}, we maximize over the hidden variables only, and in the negative term we maximize over both hidden variables and the labels. The last equation generates a sequence of labels, $\by_{\mathrm{test},j,t}^*$, that can be considered as samples from the conditional distribution of the test input from which we obtain an estimate of the underlying conditional distribution:
\be
P(\by|\bx_{\mathrm{test},j}) \approx \frac{1}{\tau}\sum_{t=1}^\tau \mathbb{I}(\by_{\mathrm{test},j,t}^* = \by) 
\ee

In general, herding systems perform better when we use normalized features: $\left\Vert \bphi(\bx,\bz,\by) \right\Vert = R,~\forall (\bx,\bz,\by)$. The reason is that herding selects states by maximizing the inner product $\bw^T\bphi$ and features with large norms will therefore become more likely to be selected. In fact, one can show that states inside the convex hull of the $\bphi(\bx,\by,\bz)$ are never selected. For binary ($\pm 1$) variables all states live on the convex hull, but this need not be true in general, especially when we use continuous attributes $\bx$. To remedy this, one can either normalize features or add one additional feature\footnote{If in test data this extra feature becomes imaginary we simply set it to zero.} $\bphi_0(\bx,\by,\bz)=\sqrt{R_\text{max}^2-||\bphi(\bx,\by,\bz)||^2}$, where $R_\text{max}=\mathop{\max}_{\bx,\by,\bz}\|\bphi(\bx,\by,\bz)\|$ with $\bx$ only allowed to vary over the data-cases.

We may want to use mini-batches $\mathcal{D}_t$ instead of the whole training  set for a more practical implementation, and the argument on the validity of using mini-batches in \Sec{sec:generalize_herding} applies here as well. It is easy to observe that Rosenblatts's perceptron learning algorithm is a special case of conditional herding when there are no hidden variables, $\by$ is a single binary variable, the feature function is $\bphi=\bx y$, and we use a mini-batch of size $1$ at every iteration.

Compared to the herding algorithm on partially observed MRFs, the main difference is that we do partial maximization in Equation \ref{eqn:herding_ch_2} with a clamped visible input $\bx$ on every training data-case instead of a joint maximization on the full state. Notice that in this particular variant of herding, the sequence of updates may converge when all the training data-cases are correctly predicted, that is, $\by_{it}^*=\by_{i}, \forall i=1,\dots,D$ at some $t$. For an example, the convergence is guaranteed to happen for the percepton learning algorithm on a linearly separable data set. We adopt the strategy in the voted perceptron algorithm \citep{freund1999large} which stops herding when convergence occurs and uses the sequence of weights up to that point for prediction in order to prevent the converged weights from dominating the averaged prediction on the test data.

Clamping the input variables allows us to achieve the following moment matching property:
\begin{proposition} \label{pro:bounded_weight_crf}
There exists a constant $R$ such that conditional herding with the update equations \ref{eqn:herding_ch_1}-\ref{eqn:herding_ch_3} satisfies
\be
\left|\frac{1}{D}\sum_{i=1}^D \frac{1}{\tau}\sum_{t=1}^\tau\phi_\al(\bx_i,\by_{it}^*,\bz_{it}^*) - \frac{1}{D}\sum_{i=1}^D \frac{1}{\tau}\sum_{t=1}^\tau \phi_\al (\bx_i, \by_i, \bz_{i t}') \right| \leq \frac{2R}{\tau}, \forall \al
\ee
\end{proposition}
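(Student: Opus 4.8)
The plan is to obtain Proposition~\ref{pro:bounded_weight_crf} from the Boundedness Theorem (the PCT) in exactly the way Propositions~\ref{prop:bounded_weight} and~\ref{pro:bounded_weight_pomrf} were obtained: put the conditional‑herding weight recursion into the canonical form $\bw_{t+1}=\bw_t+\bv_t$ with $\bv_t$ drawn from a fixed finite set, verify the sign condition $\bw_t^T\bv_t\le 0$, conclude boundedness of $\{\bw_t\}$, and then unroll the recursion and divide by $\tau$.

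First I would set
\[
\bv_t \;=\; \Big[\tfrac{1}{D}\sum_{i=1}^D \bphi(\bx_i,\by_i,\bz_{i,t+1}')\Big] \;-\; \Big[\tfrac{1}{D}\sum_{i=1}^D \bphi(\bx_i,\by_{i,t+1}^*,\bz_{i,t+1}^*)\Big],
\]
so that Equation~\ref{eqn:herding_ch_3} reads $\bw_{t+1}=\bw_t+\bv_t$. Because the training covariates $\bx_i$ are clamped to the finite set $\mathcal{D}_\bx$ while $\by$ and $\bz$ range over finite discrete alphabets, $\bv_t$ takes only finitely many values as $\{\bz_i'\}$ and $\{(\by_i^*,\bz_i^*)\}$ vary over their domains; call that finite set $V$. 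This is the first hypothesis of the theorem.

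Next comes the sign condition. Since $\bz_{i,t+1}'$ is the exact maximizer in Equation~\ref{eqn:herding_ch_1} with $\by_i$ held at the \emph{observed} label, while $(\by_{i,t+1}^*,\bz_{i,t+1}^*)$ is the exact maximizer in Equation~\ref{eqn:herding_ch_2} over the strictly larger set of pairs $(\by_i,\bz_i)$ at the same clamped $\bx_i$, for every data-case $i$ we have
\[
\bw_t^T\bphi(\bx_i,\by_i,\bz_{i,t+1}') \;=\; \max_{\bz_i}\bw_t^T\bphi(\bx_i,\by_i,\bz_i) \;\le\; \max_{(\by_i,\bz_i)}\bw_t^T\bphi(\bx_i,\by_i,\bz_i) \;=\; \bw_t^T\bphi(\bx_i,\by_{i,t+1}^*,\bz_{i,t+1}^*).
\]
Averaging over $i$ gives $\bw_t^T\bv_t\le 0$, the second hypothesis. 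The Boundedness Theorem then supplies a constant $M$ depending only on $V$ with $\|\bw_t\|_2\le\|\bw_0\|_2+M$ for all $t$; enlarging it to $R:=\|\bw_0\|_2+M$ gives $|w_{\al,t}|\le\|\bw_t\|_2\le R$ for all $\al$ and all $t\ge0$. Finally, summing Equation~\ref{eqn:herding_ch_3} from $t=1$ to $\tau$ and dividing by $\tau$,
\[
\frac{w_{\al,\tau}-w_{\al,0}}{\tau} \;=\; \frac{1}{\tau}\sum_{t=1}^\tau\frac{1}{D}\sum_{i=1}^D\phi_\al(\bx_i,\by_i,\bz_{it}') \;-\; \frac{1}{\tau}\sum_{t=1}^\tau\frac{1}{D}\sum_{i=1}^D\phi_\al(\bx_i,\by_{it}^*,\bz_{it}^*),
\]
so the quantity to be bounded equals $|w_{\al,\tau}-w_{\al,0}|/\tau \le (|w_{\al,\tau}|+|w_{\al,0}|)/\tau \le 2R/\tau$, which is the claim (the two terms inside the absolute value appear with the signs swapped relative to the display, which is harmless).

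The only step needing genuine attention is the sign condition: it relies on the negative-term maximization in Equation~\ref{eqn:herding_ch_2} being over a superset of the positive-term maximization in Equation~\ref{eqn:herding_ch_1}, with the true label always feasible. Consequently, if one replaces the exact maximizations by partial or local search, the proposition survives only provided the search producing $(\by_{it}^*,\bz_{it}^*)$ is warm‑started so that its objective value stays $\ge$ that of the state producing $\bz_{it}'$ — exactly the device used in Proposition~\ref{pro:tractable_pomrf_herding}. A second, minor point is the convergence case flagged in the text: if at some step every training data-case is predicted correctly the weights stop moving, but then the bound is trivial since the numerator $|w_{\al,\tau}-w_{\al,0}|$ stays bounded. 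Everything else is the routine template already used for the fully observed and POMRF cases.
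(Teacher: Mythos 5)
Your proposal is correct and follows exactly the paper's route: identify $\bv_t$ as the difference of the clamped-label and free-label feature averages, note that $V$ is finite because $\mathcal{X}_\by$ and $\mathcal{X}_\bz$ are finite (with $\bx$ clamped to the data), verify $\bw_t^T\bv_t\le 0$ since the negative-term maximization ranges over a superset of the positive-term one, and then invoke the PCT and unroll the weight recursion. The paper states this only in outline (``straightforward by applying PCT \dots for the same reason as herding on POMRFs''), so your version simply fills in the same argument with more detail.
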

The proof is straightforward by applying PCT where we identify
\be
\bv_t = \left[\frac{1}{D}\sum_{i=1}^D\bphi(\bx_i,\by_i,\bz_{it}')\right] - \left[\frac{1}{D}\sum_{i=1}^D\bphi(\bx_i,\by_{it}^*,\bz_{it}^*)\right],
\ee
the finite set $V = \{\bv(\{\bz_i'\},\{\by_i^*\},\{\bz_i^*\})|\bz_i'\in \mathcal{X}_\bz, \by_i^*\in\mathcal{X}_\by,\bz_i^*\in \mathcal{X}_\bz\}$, and observe the inequality $\bw_t^T\bv_t \leq 0$ because of the same reason as herding on POMRFs. Note that we require $V$ to be of a finite cardinality, which in return requires $\mathcal{X}_\by$ and $\mathcal{X}_\bz$ to be finite sets, but there is not any restriction on the domain of the visible input variables $\bx$. Therefore we can run conditional herding with input $\bx$ as continuous variables.

\subsubsection*{Zero Temperature Limit of CRF}

Consider a CRF with the probability distribution defined as
\be
P(\by,\bz|\bx;\bw)=\frac{1}{Z(\bw,\bx)}\exp\left(\sum_\al w_\al \phi_\al(\bx,\by,\bz)\right)
\ee
where $Z(\bw,\bx)$ is the partition function of the conditional distribution. The log-likelihood function for a dataset $\mathcal{D}=\{\bx_i,\by_i\}_{i=1}^D$ is expressed as
\be
\ell(\bw)=\frac{1}{D}\sum_{i=1}^D\left( \log\left(\sum_{\bz_i}\exp\left(\bw^T\bphi(\bx_i,\by_i,\bz_i\right)\right) - \log Z(\bw, \bx_i)\right)
\ee
Let us introduce the temperature $T$ by replacing $\bw$ with $\bw/T$ and take the limit $T\rightarrow 0$ of $\ell_T\defeq T\ell$. We then obtain the familiar piecewise linear Tipi function
\be
\ell_0(\bw) = \frac{1}{D}\sum_{i=1}^D\left(\max_{\bz_i}\bw^T\bphi(\bx_i,\by_i,\bz_i) - \max_{\by_i,\bz_i}\bw^T\bphi(\bx_i,\by_i,\bz_i) \right)
\ee
Running gradient descent updates on $\ell_0(\bw)$ immediately gives us the update equations of conditional herding \ref{eqn:herding_ch_1}-\ref{eqn:herding_ch_3}. 

Similar to the duality relationship between MLE on MRFs and the MaxEnt problem, MLE on CRFs is the dual problem of maximizing the entropy of the conditional distributions while enforcing the following constraints:
\be
\frac{1}{D}\sum_{i=1}^D\mathbb{E}_{P(\bz|\bx_i,\by_i)}\left[\phi_\al(\bx_i,\by_i,\bz)\right] = \frac{1}{D}\sum_{i=1}^D\mathbb{E}_{P(\by,\bz|\bx_i)}\left[\phi_\al(\bx_i,\by,\bz)\right] , \forall \al \label{eqn:herding_ch_constraints}
\ee
When we run conditional herding, those constraints are satisfied with the moment matching property in Proposition \ref{pro:bounded_weight_crf}, but how to encourage high entropy during the herding dynamics is again an open problem. We suggest some heuristics to achieve high entropy in the next experimental section. Note that there is a difference between MLE and conditional herding when making predictions. While the prediction of a CRF with MLE is made with the most probable label value at a point estimate of the parameters, conditional herding resorts to a majority voting strategy as in the voted perceptron algorithm. The regularization effect via averaging over predictions often provides more robust performance as shown later.

\section{Experiments}\label{sec:experiments}
We study the empirical performance of the herding algorithm introduced in \Sec{sec:property} and the extension with hidden variables in \Sec{sec:herding_pomrf} and for discriminative models in \Sec{sec:ch}.

\subsection{Herding with Fully Visible Models}
In the following experiments we will determine the ability of herding to convert information about the average value of features in the training data into estimates of some quantities of interest. In particular the input to herding will be joint probabilities of pairs of variables (denoted H.XX) and sometimes triples of variables (denoted H.XXX) where all variables will be binary valued (which is easily relaxed).

In experiment I we will consider the quantity $P(k)=\mathbb{E}[\eI [\sum_i X_i=k-1]]$ which is the distribution of the total number of $1$'s across all attributes. This quantity involves all variables in the problem and cannot be directly estimated from the input which consists of pairwise information only. This experiment measures the ability of herding to generalize from local information to global quantities of interest. In total $100$K samples were generated and used to estimate $P(k)$. The results were compared with the following two alternatives: 1) sampling $100$K pseudo-samples from the single variable marginals and using them to estimate $P(k)$ (denoted ``MARG''), 2) learning a fully connected, fully visible Boltzmann machine using the pseudo-likelihood method\footnote{This method is close to optimal for this type of problem \citep{parise2005learning}.} (denoted PL), then sampling $200$K samples from that model and using the last $100$K to estimate $P(k)$.

In experiment II we will estimate a discriminant function for classifying one attribute (the label) given the values of other attributes. Our approach was simply to perform online learning of a logistic regression function after each pseudo-sample collected from herding. Again, local pairwise information is turned into a global discriminant function which is then compared with some standard classifiers learned directly from the data. In particular, we compared against Naive Bayes, 5-nearest neighbors, logistic regression and a fully observed, fully connected Boltzmann machine learned with pseudo likelihood on the joint space of attributes and labels. The learned model's conditional distribution of label given the remaining attributes was subsequently used for prediction.

We have used the following datasets in our experiments.

A) The ``Bowling Data'' set\footnote{Downloadable from: \url{http://www.financialmathematics.com/wiki/Code:tenpin/data}}. Each binary attribute represents whether a pin has fallen during two subsequent bowls. There are $10$ pins and $298$ games in total. This data was generated by P. Cotton to make a point about the modelling of company default dependency. Random splits of $150$ train and $148$ test instances were used for the classification experiments.

B) Abalone dataset\footnote{Downloadable from UCI repository}. We converted the dataset into binary values by subtracting the mean from all (8) attributes and labels and setting all obtained values to 0 if smaller than 0 and 1 otherwise. For the classification task we used random subsets of $2000$ examples for training and the remaining $2177$ for testing.

C) ``Newsgroups-small"\footnote{Downloaded from: \url{http://www.cs.toronto.edu/~roweis/data.html}} prepared by S. Roweis. It has $100$ binary attributes and $16,242$ instances and is highly sparse ($4\%$ of the values is 1). Random splits of $10,000$ train and $6,242$ test instances were used for the classification experiments.

D) Digits: $8\times 8$ binarized handwritten digits. We used $1100$ examples from the digit classes 3 and 5 respectively (a total of $2200$ instances). The dataset contains $30\%$ 1's. This dataset was split randomly in $1600$ train and $600$ test instances.

The results for experiment I are shown in \tab{table:welling09A_1} and \Fig{fig:welling09A_1}. Note that the herding algorithms are deterministic and repetition would have resulted in the same values.

\begin{table}[t]
\begin{center}
\begin{tabular}{lllll}
\hline
\textsc{Dataset} & H.XXX & H.XX & PL & MARG \\
\hline
\textsc{Bowling} & 5E-3 & 4.1E-2 & 1.2E-1 & 4.3E-1 \\
\textsc{Abelone} & 8E-4 & 2.5E-3 & 2.2E-2 & 1.8E0 \\
\textsc{Digits} & - & 6.2E-2 & 3.3E-2 & 4E-1 \\
\textsc{News} & - & 2.5E-2 & 1.9E-2 & 5E-1 \\
\hline
\end{tabular}
\caption{\textbf{Abelone/Digits/NewsGroups}: KL divergence between true (data) distribution and the estimates from 1) herding algorithm using all triplets, 2) herding with all pairs, 3) samples from pseudo-likelihood model and 4) samples from single marginals.}
\label{table:welling09A_1}
\end{center}
\end{table}

\begin{figure}[tb]
  \centering
  \includegraphics[width=.6\textwidth]{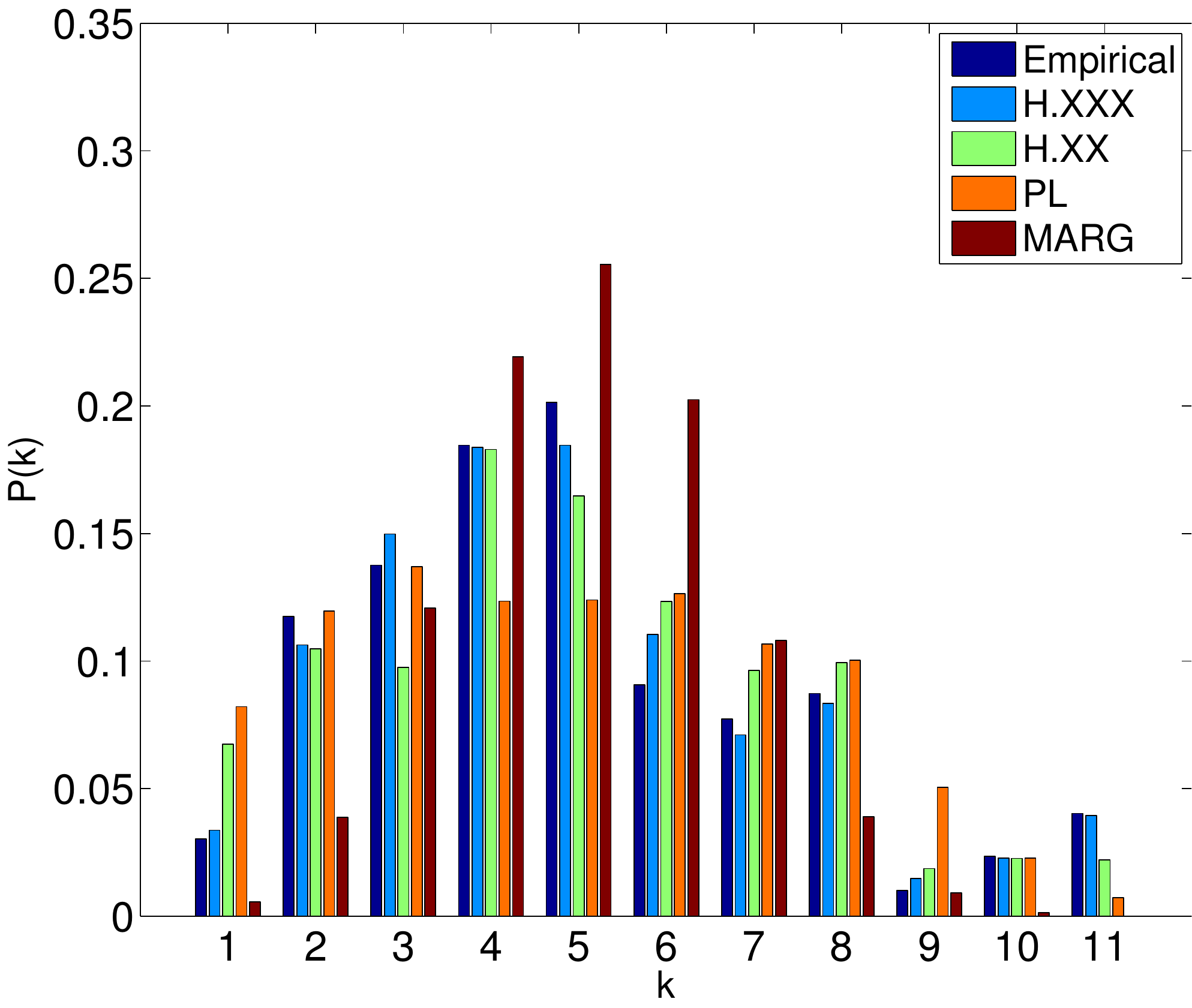}
  \caption{Estimates of $P(k)$ for the Bowling dataset. Each group of 5 bars represent the estimates for 1) ground truth, 2) herding with triples, 3) herding with pairs, 4) pseudo-likelihood, 5) marginals.}
  \label{fig:welling09A_1}
\end{figure}

We observe that herding is successful in turning local average statistics into estimates of global quantities. Providing more information such as joint probabilities over triplets does significantly improve the result (the triplet results for Digits and News took too long to run due to the large number of triplets involved). Also of interest is the fact that for the low dimensional data H.XX outperformed PL but for the high-D datasets the opposite was true while both methods seem to leverage the same second order statistics (even though PL needs the actual data to learn its model).

The results for the classification experiment are shown in \tab{table:welling09A_2}. On all tasks the online learning of a linear logistic regression classifier did just as well as running logistic regression on the original data directly. This implies that the herding algorithm generates the information necessary for classification and that the decision boundary can be learned online during herding. Interestingly, the PL procedure significantly outperformed all standard classifiers as well as herding on the Newsgroup data. This implies that a more sophisticated decision boundary is warranted for this data.

\begin{table}[tb]
\begin{center}
\begin{tabular}{l l l l l l}
\hline
\textsc{Dataset} & H.XXY & PL & 5NN & NB & LR \\
\hline
\textsc{Abelone} & $0.24\pm 0.004$ & $0.24\pm 0.004$ & $0.33\pm 0.1$ & $0.27\pm 0.006$ & $0.24\pm 0.004$ \\
\textsc{Bowling} & $0.23\pm 0.03$ & $0.28\pm 0.06$ & $0.32\pm 0.05$ & $0.23\pm 0.03$ & $0.23\pm 0.03$ \\ 
\textsc{Digits} & $0.05\pm 0.01$ & $0.06\pm 0.01$ & $0.05\pm 0.01$ & $0.09\pm 0.01$ & $0.06\pm 0.02$ \\ 
\textsc{News} & $0.11\pm 0.005$ & $0.04\pm 0.001$ & $0.13\pm 0.006$ & $0.12\pm 0.003$ & $0.11\pm 0.004$ \\ 
\hline
\end{tabular}
\caption{Average classification results averaged over 5 runs.}
\label{table:welling09A_2}
\end{center}
\end{table}

To see if the herding sequence contained the information necessary to estimate such a decision boundary we reran PL on the first 10,000 pseudo-samples generated by herding resulting in an error of 0.04, answering the question in the affirmative. A plot of the herding pseudo-samples as compared to the original data is shown in Figure 1.

\begin{figure}[tb]
  \centering
  \includegraphics[width=.6\textwidth]{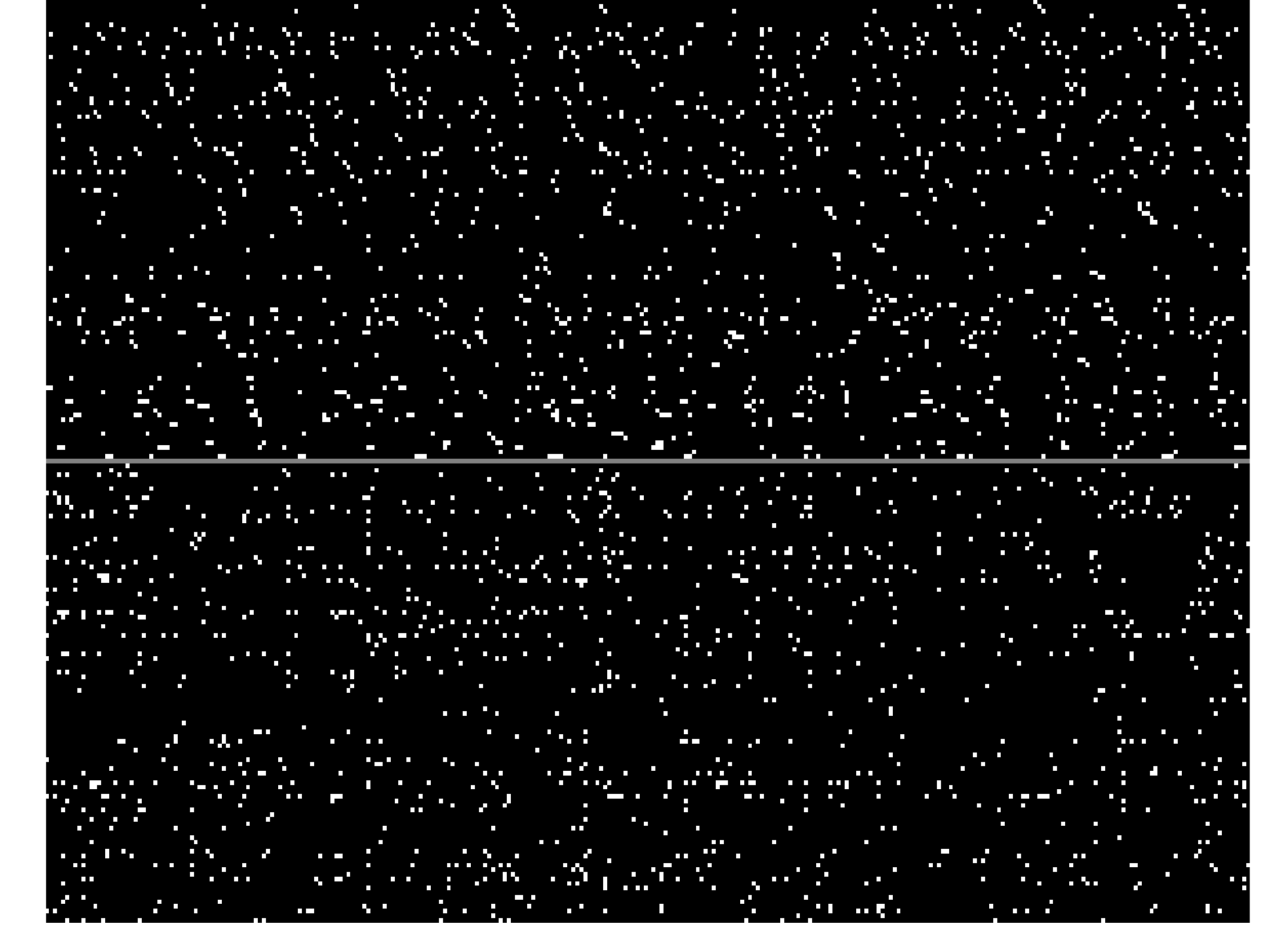}
  \caption{Top half: Sequence of 300 pseudo-samples generated from a herding algorithm for the ``Newsgroup" dataset. White dots indicate the presence of certain word-types in documents (represented as columns). Bottom half: Newsgroup data (in random order). Data and pseudo-samples have the same first and second order statistics.}
  \label{fig:welling09A_2}
\end{figure}

\subsection{Herding with Hidden Variables}
We studied generalized herding on the architecture of a restricted Boltzmann machine \citep{Hinton02}  (RBM). We used features $\bphi(x , z) = \{x_j, z_k, x_j z_k\}$, where $j$ and $k$ are indices of variables, and the $\{-1, +1\}$ representation because we found it worked significantly better than the $\{0, 1\}$ representation. To increase the entropy of the hidden units we left out the growth update for the features $\{z_k\}$ implying that $p(z_k = 1) \approx 0.5$. The intuition is the same as for bagging: we want to create a high diversity of (almost independent) ways to reconstruct the data because it will reduce the variance when making predictions. We observed that high entropy hidden representations automatically emerged when using a large number of hidden units. In contrast, for a small number of hidden units (say $K < 30$) there is a tendency for the system to converge on low entropy representations and the trick delivers some improvement.

We applied herding to the USPS Handwritten Digits dataset\footnote{Downloaded from \url{http://www.cs.toronto.edu/~roweis/data.html}} which consists of 1100 examples of each digit 0 through 9 (totaling $11,000$ examples). Each image has $256$ pixels and each pixel has a value between $[1..256]$ which we turned into a binary representation through the mapping $x_j' = 2\Theta(\frac{x_j}{256}-0.2)-1$ with $\Theta(x > 0) = 1$ and $0$ otherwise. Each digit class was randomly split into $700$ train, $300$ validation and $100$ test examples. As benchmarks we used 1NN using Manhattan distance and multinomial logistic regression, both in pixel space.

We used two versions of herding, one where the maximization over $\bs$ was initialized at the value from the previous time step (H) and one where we initialize at the data-case with the lowest energy (SH - the tractable algorithm). In both cases we ran herding for $2000$ iterations for each class individually. During the second $1000$ iterations we computed the energies for the training data in that class, as well as for all validation and test data across all classes. At each iteration we then used the training energies to standardize the validation and test energies by computing their Z-scores: $\cE_i' = (\cE_i - \mu_{\textrm{trn}})/\sigma_{\textrm{trn}}$ where $\mu_{\textrm{trn}}$ and $\sigma_{\textrm{trn}}$ represent the mean and standard deviation of the energies of the training data at that iteration. The standardized energies for test and validation data were subsequently averaged over herding iterations (using online averaging). Once we have collected these average standardized energies across all digit classes we fit a multinomial logistic regression classifier to the validation data, using the 10 class-specific energies as features.

We also compared these results against models learned with contrastive divergence \citep{Hinton02} (CD) and persistent CD \citep{Tieleman08} (PCD). For both CD and PCD we first applied (P)CD learning for $1000$ iterations in batch mode, using a stepsize of $\eta = 10^{-3}$. A momentum parameter of $0.9$ and 1-step reconstructions were used for CD. No momentum and a single sample in the negative phase was used for PCD. In the second $1000$ iterations we continued learning but also collected standardized validation and test energies as before which we subsequently used for classification. We have also experimented with chains of length $10$ and found that it did not improved the results but became prohibitively inefficient. To improve efficiency we experimented with learning in mini-batches but this degraded the results significantly, presumably because the number of training examples used to standardize the energy scores became less reliable.

\begin{figure}[tb]
  \centering
  \includegraphics[width=.6\textwidth]{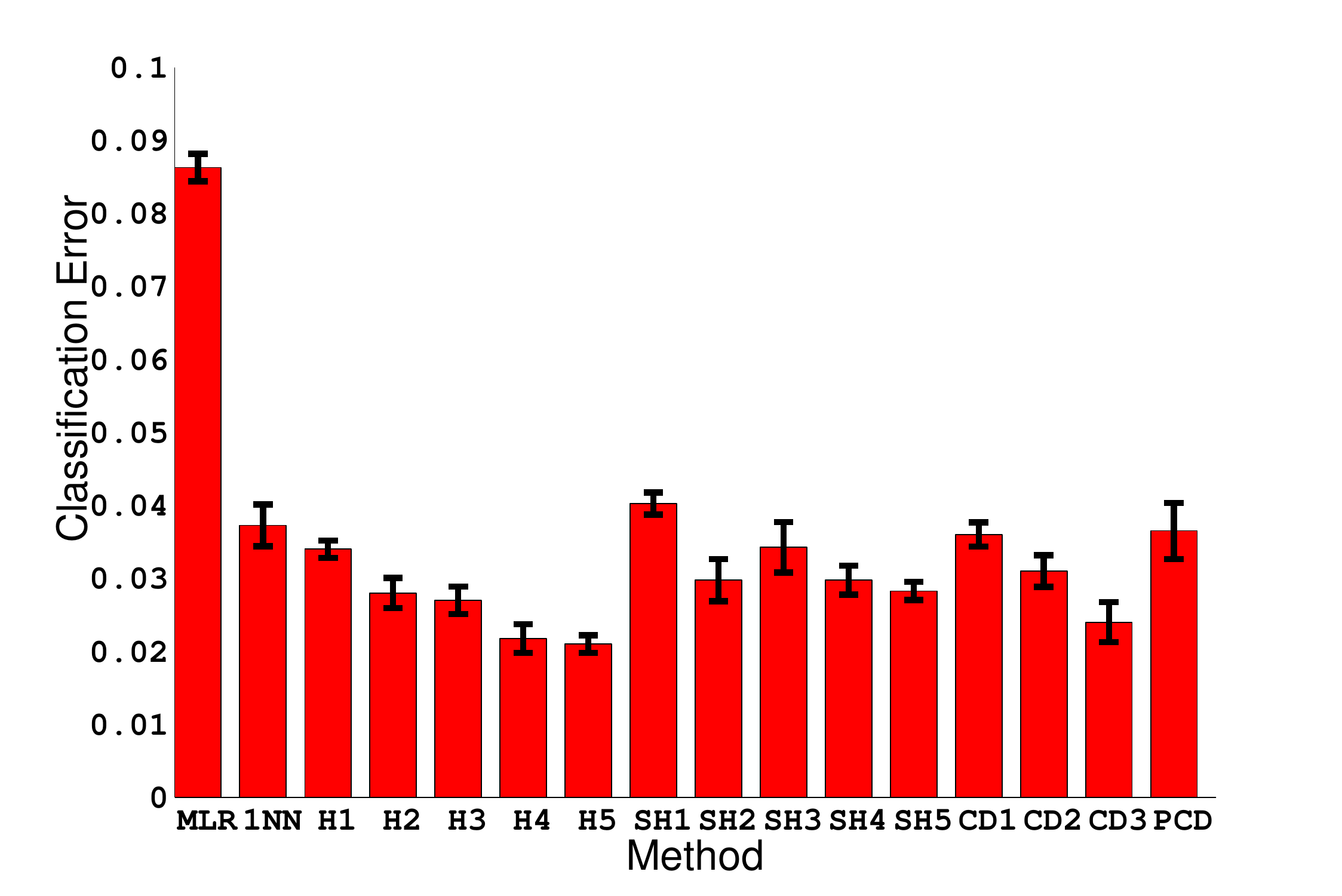}
  \caption{Classification results on USPS digits. 700 digits per class were used for training, 300 for validation and 100 for testing. Shown are average results over 4 different splits and their standard errors. From left to right: MLR (multinomial logistic regression), 1NN (1-nearest neighbor), H1-H5 (herding using local optimization with 50,100,250,500 and 1000 hidden units respectively), SH1-SH5 (safe, tractable herding from section 7 with 50,100,250,500 and 1000 hidden units respectively), CD1-CD3 (contrastive divergence with 50,100,250 hidden units respectively) and PCD (persistent CD with 500 hidden units).}
  \label{fig:welling09B_1}
\end{figure}

The results reported in \Fig{fig:welling09B_1} show the classification results averaged across 4 runs with different splits and for different values of hidden units. Without trying to claim superior performance we merely want to make the case that herding can be leveraged to achieve state-of-the-art performance (note that USPS error rates are higher than MNIST error rates). We also see that the tractable version of herding did not perform as well as the herding using local optimization, which in turn performed equally well as learning a model using CD. Persistent CD did not give very good results presumably because we did not use optimal settings for step-size, weight-decay etc.. It is finally interesting to observe that there does not seem to be any sign of over-fitting for herding. For the model with $1000$ hidden units, the total number of real parameters involved is around 1.5 million which represents more capacity than the 1.5 million binary pixel values in the data.

\subsection{Discriminative Herding}\label{sec:ch_experiments}
We studied the behavior of conditional herding on two artificial and four real-world data sets, comparing its performance to that of the voted perceptron \citep{freund1999large} and that of discriminative RBMs \citep{larochelle2008classification}. All the experiment results in this subsection are accredited to the authors of \cite{GelfandMaatenChenWelling10}.

\begin{figure}[tb]
  \centering
  \includegraphics[width=.5\textwidth]{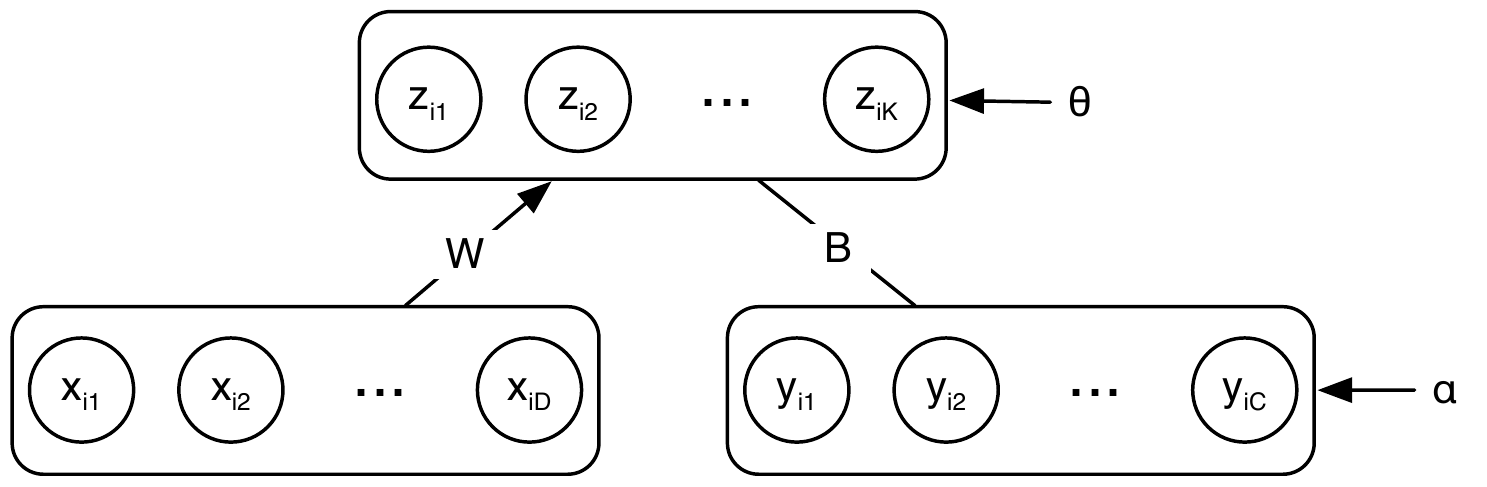}
  \caption{Discriminative Restricted Boltzmann Machine model of distribution $p(\mathbf{y},\mathbf{z} | \mathbf{x})$.}
  \label{fig:ch_RBM_model}
\end{figure}

We studied conditional herding in the discriminative RBM (dRBM) architecture illustrated in \Fig{fig:ch_RBM_model}, that is, we use the following parameterization
\be
\bw^T\bphi(\bx,\by,\bz)=\bx^T \bW \bz + \by^T \bB \bz + \bta^T\bz + \bal^T \by.
\label{eqn:ch_features_explicit}
\ee
where $\bW$, $\bB$, $\bta$ and $\bal$ are the weights, $\bz$ is a binary vector and $\by$ is a binary vector in a 1-of-$K$ scheme.

Per the discussion in \Sec{sec:ch}, we added an additional feature $\bphi_0(\bx)=\sqrt{R_\text{max}^2-||\bx||^2}$ with $R_{\max}=\max_{i}\left\Vert \mathbf{x}_{i}\right\Vert$ in all experiments.

\subsubsection*{Experiments on Artificial Data}
To investigate the characteristics of the voted perceptron (VP),  discriminative RBM (dRBM) and conditional herding (CH), we used the techniques discussed in \Sec{sec:ch} to construct decision boundaries on two artificial data sets: (1) the banana data set; and (2) the Lithuanian data set. We ran VP and CH for $1,000$ epochs using mini-batches of size $100$. The decision boundary for VP and CH is located at the location where the sign of the prediction $\mathbf{y}_{\text{test}}^*$ changes. We used conditional herders with $20$ hidden units. The dRBMs also had $20$ hidden units and were trained by running conjugate gradients until convergence. The weights of the dRBMs were initialized by sampling from a Gaussian distribution with a variance of $10^{-4}$. The decision boundary for the dRBMs is located at the point where both class posteriors are equal, i.e., where $p(y^*_{\text{test}}=-1 | \tilde{\mathbf{x}}_{\text{test}}) = p(y^*_{\text{test}}=+1 | \tilde{\mathbf{x}}_{\text{test}}) = 0.5$.

\begin{figure}[tbp!]
  \centering
  \subfloat[Banana data set.\label{fig:ch_banana_plot}] {
    \includegraphics[width=0.48\textwidth]{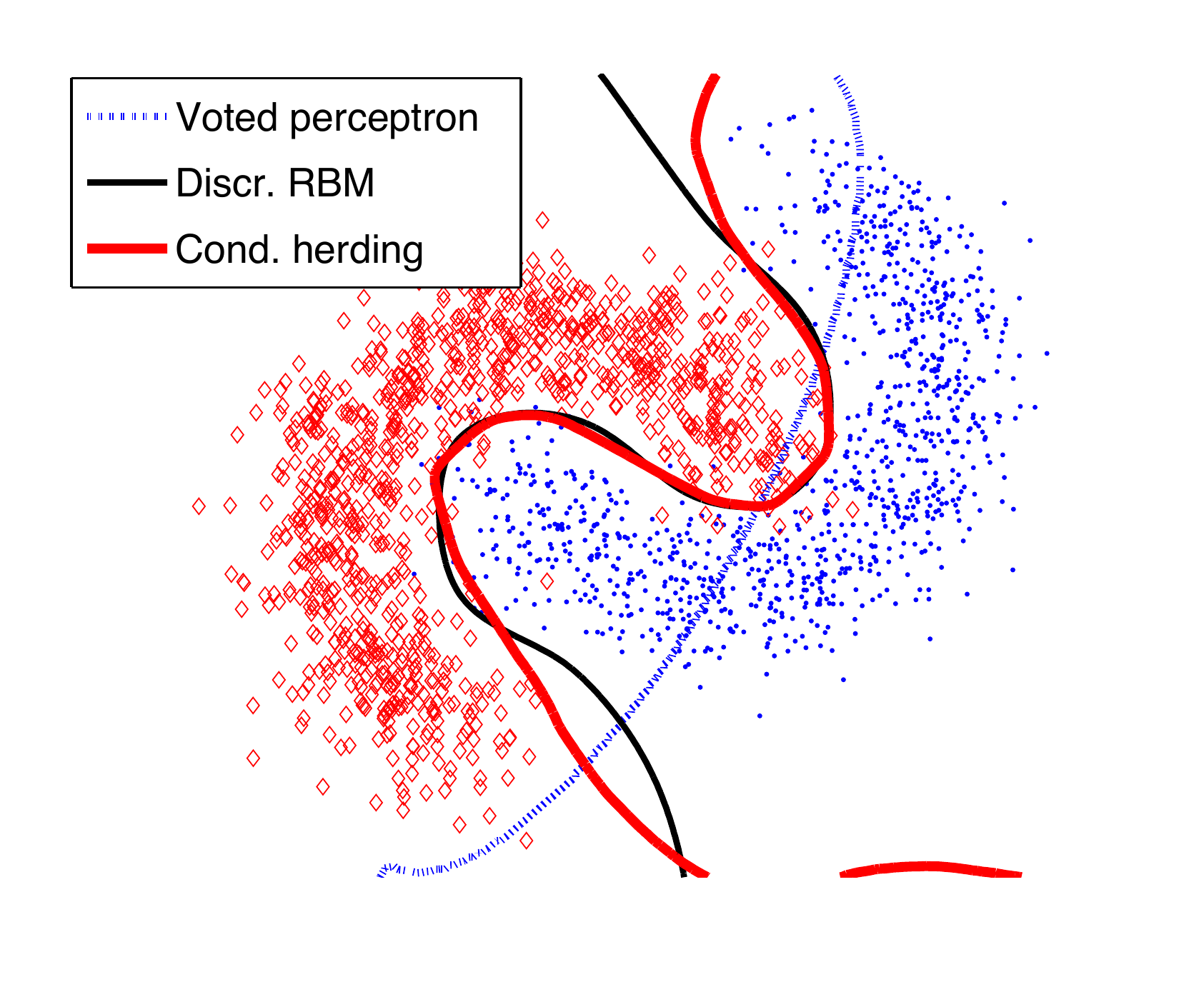}
  }
  \hfill
  \subfloat[Lithuanian data set.\label{fig:ch_lithuanian_plot}] {
    \includegraphics[width=0.48\textwidth]{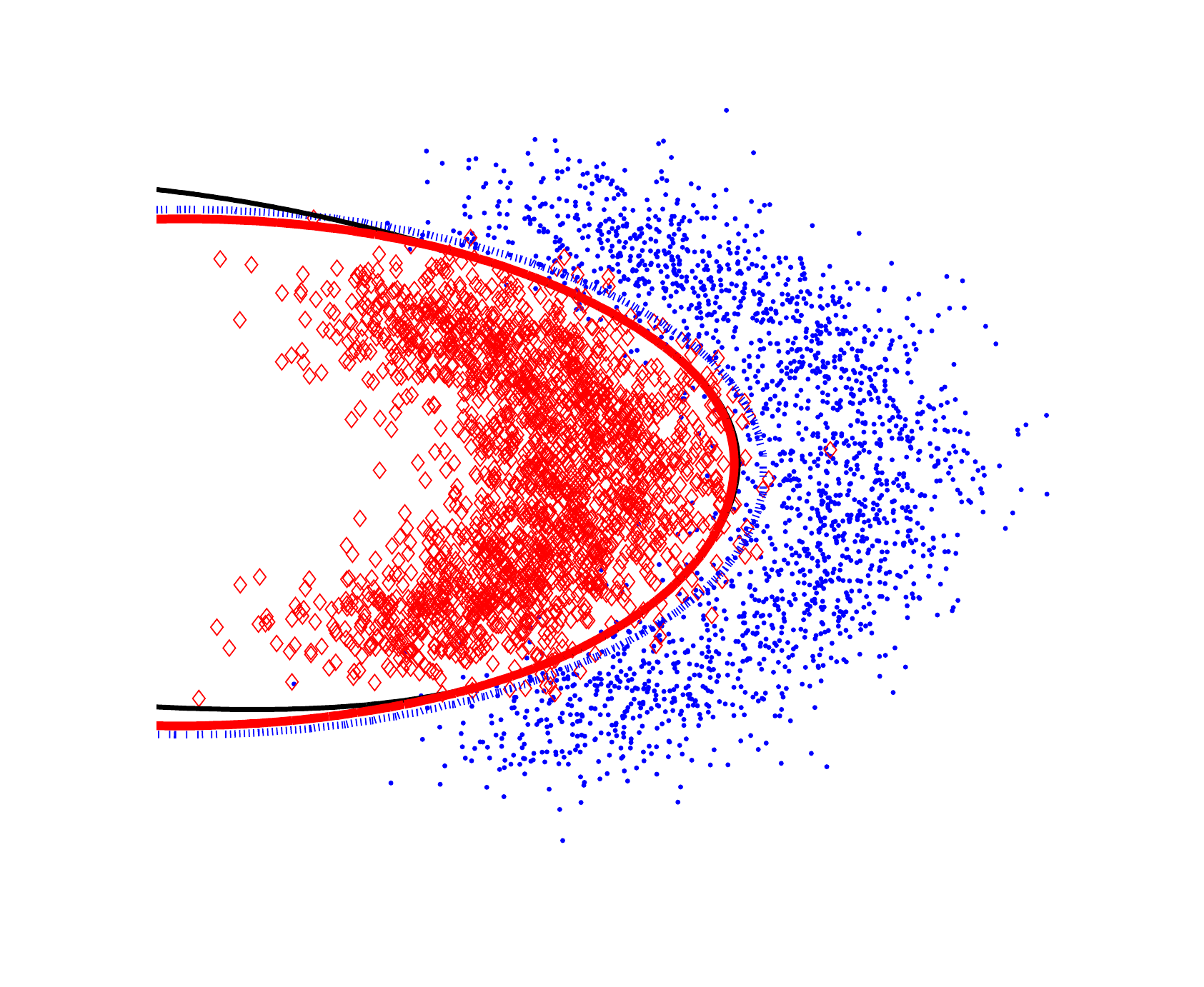}
  }
  \caption{Decision boundaries of VP, CH, and dRBMs on two artificial data sets.}
  \label{fig:ch_artificial_plots}
\end{figure}

Plots of the decision boundary for the artificial data sets are shown in \Fig{fig:ch_artificial_plots}. The results on the banana data set illustrate the representational advantages of hidden units. Since VP selects data points at random to update the weights, on the banana data set, the weight vector of VP tends to oscillate back and forth yielding a nearly linear decision boundary\footnote{On the Lithuanian data set, VP constructs a good boundary by exploiting the added `normalizing' feature.}. This happens because VP can regress on only $2+1=3$ fixed features. In contrast, for CH the simple predictor in the top layer can regress onto $M=20$ hidden features. This prevents the same oscillatory behavior from occurring.

\subsubsection*{Experiments on Real-World Data}
In addition to the experiments on synthetic data, we also performed experiments on four real-world data sets - namely, (1) the USPS data set, (2) the MNIST data set, (3) the UCI Pendigits data set, and (4) the 20-Newsgroups data set. The USPS data set consists of 11,000, $16\times16$ grayscale images of handwritten digits ($1,100$ images of each digit 0 through 9) with no fixed division. The MNIST data set contains $70,000$, $28\times28$ grayscale images of digits, with a fixed division into $60,000$ training and $10,000$ test instances. The UCI Pendigits consists of 16 (integer-valued) features extracted from the movement of a stylus. It contains $10,992$ instances, with a fixed division into $7,494$ training and $3,498$ test instances. The 20-Newsgroups data set contains bag-of-words representations of $18,774$ documents gathered from $20$ different newsgroups. Since the bag-of-words representation comprises of over $60,000$ words, we identified the $5,000$ most frequently occurring words. From this set, we created a data set of $4,900$ binary word-presence features by binarizing the word counts and removing the $100$ most frequently occurring words. The 20-Newsgroups data has a fixed division into $11,269$ training and $7,505$ test instances. On all data sets with real-valued input attributes we used the `normalizing' feature described above.

The data sets used in the experiments are multi-class. We adopted a 1-of-$K$ encoding, where if $\by_{i}$ is the label for data point $\bx_{i}$, then $\by_{i}=\left\{ y_{i,1},...,y_{i,K}\right\} $ is a binary vector such that $y_{i,k}=1$ if the label of the $i^{th}$ data point is $k$ and $y_{i,k}=-1$ otherwise. Performing the maximization in Equation \ref{eqn:herding_ch_2} is difficult when $K>2$. We investigated two different procedures for doing so. In the first procedure, we reduce the multi-class problem to a series of binary decision problems using a one-versus-all scheme. The prediction on a test point is taken as the label with the largest online average. In the second procedure, we make predictions on all $K$ labels jointly. To perform the maximization in Equation \ref{eqn:herding_ch_2}, we explore all states of $\by$ in a one-of-$K$ encoding - i.e. one unit is activated and all others are inactive. This partial maximization is not a problem as long as the ensuing configuration satisfies $\mathbf{w}_{t}^{T}\mathbf{v}_{t}\leq0$~\footnote{Local maxima can also be found by iterating over $y^{*,k}_{\text{test}},z^{*,k}_{\text{test},j}$, but the proposed procedure is more efficient.}. The main difference between the two procedures is that in the second procedure the weights $\bW$ are shared amongst the $K$ classifiers. The primary advantage of the latter procedure is its less computationally demanding than the one-versus-all scheme.

We trained the dRBMs by performing iterations of conjugate gradients (using $3$ line searches) on mini-batches of size $100$ until the error on a small held-out validation set started increasing (i.e., we employed early stopping) or until the negative conditional log-likelihood on the training data stopped coming down. Following~\cite{larochelle2008classification}, we use $L_2$-regularization on the weights of the dRBMs; the regularization parameter was determined based on the generalization error on the same held-out validation set. The weights of the dRBMs were initialized from a Gaussian distribution with variance of $10^{-4}$.

CH used mini-batches of size $100$. For the USPS and Pendigits data sets CH used a burn-in period of $1,000$ updates; on MNIST it was $5,000$ updates; and on 20 Newsgroups it was $20,000$ updates. Herding was stopped when the error on the training set became zero~\footnote{We use a fixed order of the mini-batches, so that if there are $D$ data cases and the batch size is $d$, if the training error is 0 for $\lceil D/d \rceil$ iterations, the error for the whole training set is $0$.}.

The parameters of the conditional herders were initialized by sampling from a Gaussian distribution. Ideally, we would like each of the terms in the energy function in Equation \ref{eqn:ch_features_explicit} to contribute equally during updating. However, since the dimension of the data is typically much greater than the number of classes, the dynamics of the conditional herding system will be largely driven by $\bW$. To negate this effect, we rescaled the standard deviation of the Gaussian by a factor $1/M$ with $M$ the total number of elements of the parameter involved (e.g. $\sg_\bW=\sg/(\dim(\bx)\dim(\bz))$ etc.). We also scale the learning rates $\boeta$ by the same factor so the updates will retain this scale during herding. The relative scale between $\boeta$ and $\sg$ was chosen by cross-validation. Recall that the absolute scale is unimportant (see \Sec{sec:ch} for details).

In addition, during the early stages of herding, we adapted the parameter update for the bias on the hidden units $\bta$ in such a way that the marginal distribution over the hidden units was nearly uniform. This has the advantage that it encourages high entropy in the hidden units, leading to more useful dynamics of the system. In practice, we update $\bta$ as $\bta_{t+1} = \bta_{t} + \frac{\boeta}{D_t} \sum_{i_t} (1-\lambda)\bra \bz_{i_t}\ket -  \bz^*_{i_t}$, where $i_t$ indexes the data points in the mini-batch at time $t$, $D_t$ is the size of the mini-batch, and $\bra\bz_{i_t}\ket$ is the batch mean. $\lambda$ is initialized to $1$ and we gradually half its value every $500$ updates, slowly moving from an entropy-encouraging update to the standard update for the biases of the hidden units.

VP was also run on mini-batches of size $100$ (with a learning rate of $1$). VP was run until the predictor started overfitting on a validation set. No burn-in was considered for VP.

The results of our experiments are shown in \tab{table:ch_results}. In the table, the best performance on each data set using each procedure is typeset in boldface. The results reveal that the addition of hidden units to the voted perceptron leads to significant improvements in terms of generalization error. Furthermore, the results of our experiments indicate that conditional herding performs on par with discriminative RBMs on the MNIST and USPS data sets and better on the 20 Newsgroups data set. The 20 Newsgroups data is high dimensional and sparse and both VP and CH appear to perform quite well in this regime. Techniques to promote sparsity in the hidden layer when training dRBMs exist (see~\cite{larochelle2008classification}), but we did not investigate them here. It is also worth noting that CH is rather resilient to overfitting. This is particularly evident in the low-dimensional UCI Pendigits data set, where the dRBMs start to badly overfit with $500$ hidden units, while the test error for CH remains level. This phenomenon is the benefit of averaging over many different predictors.

\begin{table}[t]
\begin{center}
\renewcommand{\tabcolsep}{0.1cm}
\begin{tabular}{|l||c||c|c||c|c|}\hline
\multicolumn{6}{c}{\textbf{One-Versus-All Procedure}} \\\hline
\multirow{2}{*}{\emph{Data Set}} & \textbf{VP} & \multicolumn{2}{c||}{\textbf{Discriminative RBM}} & \multicolumn{2}{c|}{\textbf{Conditional herding}}\\
 & & 100 & 200 & 100 & 200 \\\hline
\textbf{MNIST}         & \small{7.69\%} & \small{\textbf{3.57\%}} & \small{3.58\%} & \small{3.97\%} & \small{3.99\%} \\\hline
\multirow{2}{*}{\textbf{USPS}} & \small{5.03\%}   & \small{3.97\%} & \small{4.02\%} & \small{3.49\%} & \small{\textbf{3.35\%}} \\
                               & {\small(0.4\%)} & {\small(0.38\%)} & {\small(0.68\%)} & {\small(0.45\%)} & {\small(0.48\%)} \\\hline
\textbf{UCI Pendigits} & \small{10.92\%} & \small{5.32\%} & \small{5.00\%} & \small{3.37\%} & \small{\textbf{3.00\%}} \\\hline
\textbf{20 Newsgroups} & \small{27.75\%} & \small{34.78\%} & \small{34.36\%} & \small{29.78\%} & \small{\textbf{25.96\%}}\\\hline
\end{tabular}

\begin{tabular}{|l||c||c|c|c||c|c|c|}\hline
\multicolumn{8}{c}{\textbf{Joint Procedure}} \\\hline
\multirow{2}{*}{\emph{Data Set}} & \textbf{VP} & \multicolumn{3}{c||}{\textbf{Discriminative RBM}} & \multicolumn{3}{c|}{\textbf{Conditional herding}}\\
 & & 50 & 100 & 500 & 50 & 100 & 500 \\\hline
\textbf{MNIST} & \small{8.84\%} & \small{3.88\%} & \small{2.93\%} & \small{\textbf{1.98\%}} & \small{2.89\%} & \small{2.09\%} & \small{2.09\%} \\\hline
\multirow{2}{*}{\textbf{USPS}} & \small{4.86\%}   & \small{3.13\%} & \small{2.84\%} & \small{4.06\%} & \small{3.36\%} & \small{3.07\%} & \small{\textbf{2.81\%}} \\
                               & {\small(0.52\%)} & {\small(0.73\%)} & {\small(0.59\%)} & {\small(1.09\%)} & {\small(0.48\%)} & {\small(0.52\%)} & {\small(0.50\%)} \\\hline
\textbf{UCI Pendigits} & \small{6.78\%} & \small{3.80\%} & \small{3.23\%} & \small{8.89\%} & \small{3.14\%} & \small{\textbf{2.57\%}} & \small{2.86\%} \\\hline
\textbf{20 Newsgroups} & \small{\textbf{24.89\%}} & \small{ -- } & \small{30.57\%} & \small{30.07\%} & \small{ -- } & \small{25.76\%} & \small{24.93\%} \\\hline
\end{tabular}
\caption{Generalization errors of VP, dRBMs, and CH on 4 real-world data sets. dRBMs and CH results are shown for various numbers of hidden units. The best performance on each data set is typeset in boldface; missing values are shown as `-'. The std. dev. of the error on the $10$-fold cross validation of the USPS data set is reported in parentheses.}
\label{table:ch_results}
\end{center}
\end{table}

\section{Summary}\label{sec:summary}

We introduce the herding algorithm in this chapter as an alternative to the maximum likelihood estimation for Markov random fields. It skips the parameter estimation step and directly converts a set of moments from the training data into a sequence of model parameters accompanied by a sequence of pseudo-samples. By integrating the intractable training and testing steps in the regular machine learning paradigm, herding provides a more efficient way of learning and predicting in MRFs.

We study the statistical properties of herding and show that herding dynamics introduces negative auto-correlation in the sample sequence which helps to speed up the mixing rate of the sampler in the state space. Quantitatively, the negative auto-correlation leads to a fast convergence rate of $\cO(1/T)$ between the sampling statistics and the input moments. That is significantly faster than the rate of $\cO(1/\sqrt{T})$ that an ideal random sampler would obtain for an MRF at MLE. This distinctive property of herding should also be attributed to its weak-chaotic behavior as a deterministic dynamic system, whose characteristics deserve its own interest for future research.

Experiments confirms that the information contained in the pseudo-samples of herding can be used for inference and prediction. It achieves comparable performance with traditional machine learning algorithms including the MRFs, even though the sampling distribution of herding does not guarantee the maximum entropy.

We further provide a general condition, PCT, for the fast moment matching property. That condition allows more practical implementations of herding. We also use it to derive extensions of the herding algorithm for a wider range of applications. As more flexible feature functions defined on both visible and latent variables can now be handled in the generalized algorithm, we apply herding to training partially observed MRFs. Experiments on the USPS dataset show a classification accuracy on par with the state-of-art training algorithms on the same model. Furthermore, we propose a discriminative learning variant of herding for supervised problems by including labelling information in the feature definition. The resulting conditional herding provides an alternative to training CRFs.  Empirical evaluation shows competitive performance of herding compared with standard algorithms.

\section{Conclusion}\label{sec:conclusion}

The view espoused in this chapter is that we can view learning as an iterated map: $\bw_{t+1}=F(\bw_t)$ and that we can study the properties of this map using the tools of nonlinear dynamics systems. The usual learning approaches based on point estimates form a contractive map where all of parameter space is eventually mapped to a point. In Bayesian approaches we seek to find a posterior distribution over parameters and the map should thus converge to a distribution (or measure). For MCMC for instance the map consists of convolving the current distribution with a kernel. Herding offers a third possibility where the attractor is neither a point, nor a measure in the usual sense, but rather a highly complex, possibly fractal set. Interestingly, the more recent approach ``perturb and map''   is related to herding in the sense that it consists of a sequence of perturbations of the parameters followed by an optimization over the state space. However, it is different from herding in the sense the perturbations are generated randomly and IID, while in herding the perturbations are deterministic and dynamic (i.e. depend on the previous parameters).   

The surprising and powerful insight is that we can use a new set of tools from the mathematics literature to study these maps. For instance, it was shown in this chapter that herding dynamics is a special instance of the class of piecewise isometry maps, and should neither be classified as regular nor chaotic, but rather as what is known as ``edge of chaos". We suspect that this type of dynamics has useful properties in the context of learning from data. For instance, it seems related to the fact that the certain empirical moments averages exhibit very fast convergence. This is supported by the observations that 1) piecewise isometries have vanishing topological entropy, 2) exhibit the ``period doubling route to chaos'' and 3) have vanishing Lyapunov exponents. We believe that these type of concepts from the field of nonlinear dynamical systems may one day play an important role in the field of machine learning. 

\appendix{}

\subsection*{Some Results on Herding in Discrete Spaces}\label{app:discrete_variable}

The following proposition shows that the weight vectors move inside a $D-1$ dimensional subspace.
\begin{proposition}
For any herding dynamics with $D$ states and $K$ dimensional feature vectors, the trajectory of the weight vector lies in a subspace of a dimension $K^*\leq \max\{D-1, K\}$. Also, there exists an equivalent herding dynamics with $D$ states and $K^*$ dimensional feature vectors, which generates the same sequence of samples.
\end{proposition}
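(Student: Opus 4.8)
The plan is to read off the structure of the trajectory from the expanded update \eqref{eqn:herding_2_expanded}, $\bw_t = \bw_0 + \sum_{\tau=1}^{t}\rho(\bs_\tau)$ with $\rho(\bx) := \bar\bphi - \bphi(\bx)$. First I would note that $\bar\bphi$, being the expectation of $\bphi$ under the input distribution $\pi$, is a convex combination of the $D$ feature vectors, so $\sum_{\bx}\pi(\bx)\rho(\bx) = \bar\bphi - \bar\bphi = \bzero$ is a \emph{nontrivial} linear relation among the $\rho(\bx)$ (the coefficients sum to $1$, hence are not all zero). Consequently $V_0 := \mathrm{span}\{\rho(\bx) : \bx\in\mathcal{X}\}$, which one checks also equals $\mathrm{span}\{\bphi(\bx) - \bphi(\bx') : \bx,\bx'\in\mathcal{X}\}$, has dimension $K^* \le D-1$, and trivially $K^* \le K$ as well; this certainly gives the bound $K^*\le\max\{D-1,K\}$ of the proposition (in fact $K^*\le\min\{D-1,K\}$). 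Since every partial sum of translation vectors lies in $V_0$, the whole trajectory $\{\bw_t\}$ lies in the affine subspace $\bw_0 + V_0$, of dimension $K^*$.

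For the equivalent low-dimensional dynamics I would proceed in two moves. The first is the observation that the herding recursion \eqref{eqn:herding_1}--\eqref{eqn:herding_2} is invariant under a common shift of all feature vectors by a constant $\bc$ (neither $\argmax_\bx \bw^T\bphi(\bx)$ nor the update $\bw\mapsto\bw+\bar\bphi-\bphi(\bs)$ changes); taking $\bc = \bar\bphi$ yields an equivalent system with features $\bphi'(\bx) := \bphi(\bx) - \bar\bphi = -\rho(\bx)\in V_0$, mean $\bzero$, and the same $\bw_0$, producing the same sample sequence. The second move is to pass to coordinates on $V_0$: fix a linear isometry $U : V_0 \to \R^{K^*}$ and let $P$ denote orthogonal projection onto $V_0$, and define the new instance on the same $D$ states by $\tilde\bphi(\bx) := U\bphi'(\bx)\in\R^{K^*}$, $\tilde{\bar\bphi} := \bzero$, $\tilde\bw_0 := U P\bw_0$. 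Because $\{\bphi'(\bx)\}$ spans $V_0$, the $\tilde\bphi(\bx)$ span $\R^{K^*}$, so this really is a $K^*$-dimensional herding dynamics on $D$ states.

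The remaining step is a short induction showing $\tilde\bw_t = U P\bw_t$ and $\tilde\bs_t = \bs_t$ for all $t$. The base case is the definition of $\tilde\bw_0$; for the inductive step the key identity is that, since $\bphi'(\bx)\in V_0$ and $U^{T}U = \mathrm{id}_{V_0}$, one has $\tilde\bw_{t-1}^{T}\tilde\bphi(\bx) = (P\bw_{t-1})^{T}\bphi'(\bx) = \bw_{t-1}^{T}\bphi'(\bx)$, so the two maximizations select the same state (under a common tie-breaking rule), and the weight update then transports correctly because $P(\bar\bphi - \bphi(\bs_t)) = -P\bphi'(\bs_t) = -\bphi'(\bs_t)$. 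This yields the asserted equivalence.

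I do not expect a genuine obstacle here; it is elementary linear algebra. The only subtlety is that $\bw_0$ itself need not lie in $V_0$, so the reduced dynamics must track $P\bw_t$ rather than $\bw_t$ — but the component of $\bw_t$ orthogonal to $V_0$ is constant (equal to that of $\bw_0$) and never influences any maximization, so it can simply be discarded. One other point worth stating explicitly in the writeup is the hypothesis that $\bar\bphi$ lies in the affine hull of the feature vectors — automatic here since it is an empirical moment — which is exactly what produces the bound $D-1$ rather than $D$.
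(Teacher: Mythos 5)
Your proposal is correct and follows essentially the same route as the paper's proof: identify the span of the feature differences (equivalently of the translation vectors $\rho(\bx)$), observe that the orthogonal component of $\bw_0$ adds only a state-independent constant to the maximization objective, and run an induction showing the reduced dynamics reproduces the same sample sequence. The only cosmetic differences are that you recentre the features at $\bar\bphi$ rather than at $\bphi^{\perp}(x_0)$ and add an explicit isometry onto $\R^{K^*}$ coordinates, and you correctly note the sharper bound $K^*\leq\min\{D-1,K\}$.
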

\begin{proof}
Let $\{\bphi(x_d)\}_{d=0}^{D-1}$ be the set of $D$ state feature vectors. Denote by $\Phi$ the subspace spanned of the set of $D-1$ vectors, $\{\bphi(x_d) - \bphi(x_0)\}_{d=1}^{D-1}$ in $\mathbb{R}^K$, and by $\Phi^\perp$ its complement. The dimension of $\Phi$ is apparently at most $\max\{D-1, K\}$. We want to construct a herding dynamics in $\Phi$ that generates the same sequence of states as the original dynamics.

Decompose the initial weight vector $\bw_0$ and all the feature vectors into $\Phi$ and $\Phi^{\perp}$, denoting the component in $\Phi$ with a superscript $^\parallel$ and in $\Phi^{\perp}$ with $^\perp$. Then $\bphi^{\perp} (x_d) = (\bphi(x_d) - \bphi(x_0) + \bphi (x_0))^{\perp} = \bphi^{\perp}(x_0), \forall d$ as $\bphi(x_d) - \bphi(x_0) \in \Phi$, and $\bphi^{\parallel} (x_d) = \bphi(x_d) - \bphi^{\perp}(x_0), \forall d$. Consequently $\bar{\bphi}^{\parallel} = \bar{\bphi} - \bphi^{\perp}(x_0)$ as $\bar{\bphi}$ is a convex combination of the feature vectors.

Let us consider a new herding dynamics (denoted by a superscript $^*$) with feature vectors $\{\bphi^{\parallel}(x_d)\}_{d=0}^{D-1}$ and the moment $\bar{\bphi}^{\parallel}$. We initialize with a weight vector $\bw_0^* = \bw_0^\parallel$. As $\Phi$ is closed with respect to the herding update in Equation \ref{eqn:herding_2} $\bw_t^*\in \Phi,\forall t\geq 0$. Now we want to show that the set of samples $S_T^*\defeq\{s_t^*\}_{t=1}^T$ is the same as $S_T\defeq\{s_t\}_{t=1}^T$ for any $T\geq 0$.

Obviously this holds at $T=0$ as $\bw_0^* \in \Phi$ and $S_T^*=S_T=\emptyset$. Assume that $S_T^*=S_T$ holds for some $T \geq 0$. Following the recursive representation of $\bw_T$ in Equation \ref{eqn:herding_2_expanded}, we get 
\begin{equation}
\bw^*_T =  \bw_0^* + T \bar\bphi^{\parallel} - \sum_{t=1}^T \bphi^{\parallel}(\bs_t)
=  \bw_0 - \bw_0^{\perp} + T \bar\bphi - \sum_{t=1}^T \bphi(\bs_t)
= \bw_T - \bw_0^\perp \label{eqn:equiv_herding_subspace}
\end{equation}
The sample to be generated at iteration $T+1$ is computed as
\begin{equation}
\bs^*_{T+1} = \arg\max_x (\bw^*_T)^T \bphi^\parallel(x) = \arg\max_x (\bw_T)^T \bphi(x) - (\bw_0^\perp)^T \bphi^{\perp}(x_0) = \bs_{T+1}
\end{equation}
Therefore, $S_{T+1}^*=S_{T+1}$, and consequently $S_T^*=S_T, \forall T\in [0,\infty)$ by induction. As a by-product of Equation \ref{eqn:equiv_herding_subspace}, we observe that the trajectory of the original herding dynamics $\{\bw_t\}$ lies in the $K^*$ dimensional affine subspace, $\bw_0^\perp + \Phi$.
\end{proof}

The proposition above suggests that the number of effective dimensions of the feature vector is upper-bounded by the number of states in the herding system. Also, the orthogonal component in the initial weight vector $\bw_0^\perp$ does not affect the sequence of generated samples. In our example of sampling a $D$-valued discrete distribution with the 1-of-D encoding, the $D$ feature vectors $\{\bphi(x_d)\}_{d=1}^{D-1}$ are linearly independent with each other and hence we achieve the maximum number of feature dimensions $K^*=D-1$. The affine subspace can be easily computed as $\{\bw : \sum_{d=1}^D w_d = 1 \}$. In the rest of this subsection, we will study the characteristics of a relatively more general type of herding dynamics with $D=K+1$ states, whose feature vectors consist of a linearly independent set in the $K$ dimensional feature space.

\begin{figure}[tb]
  \centering
  \includegraphics[width=.7\textwidth]{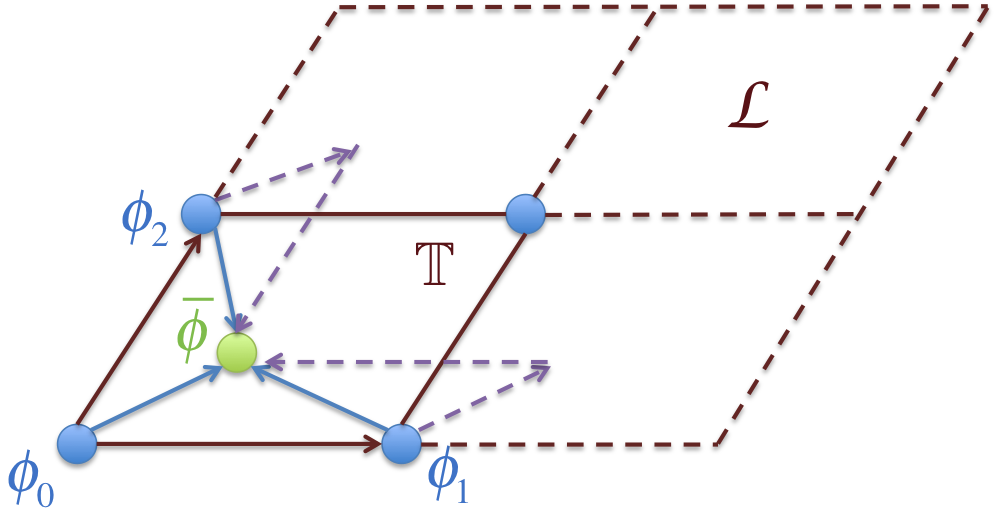}
  \caption{Example of the torus projection on herding dynamics with 3 states and 2-dimensional feature vectors. The red lines show the lattice and the torus (solid only) formed by $\bphi(x_1)-\bphi(x_0)$ and $\bphi(x_2)-\bphi(x_0)$, and the purple dashed arrows show that the herding dynamics corresponds to a constant rotation on the torus $\mathbb{T}^2$.}
  \label{fig:torus}
\end{figure}  
 
Let $\mathcal{L}$ be the lattice formed by the set of vectors $\{\bphi(x_d)-\bphi(x_0)\}_{d=1}^{K}$, and let $\mathbb{T}^K$ be the $K$ dimensional torus $\mathbb{R}^K/\mathcal{L}$. A torus is a circular space with every pair of opposite edges connected with each other. See \Fig{fig:torus} for an example of a 2D torus. Denote by $G:\mathbb{R}^K\rightarrow \mathbb{T}^K$ the canonical projection. For any point $u\in \mathbb{R}^K$, we have the property that $G(u + (\bphi(x_d)-\bphi(x_0))) = G(u), \forall d=0,\dots,K$. Let $\mathcal{T}:\mathbb{R}^K\rightarrow \mathbb{R}^K$ be the mapping of the herding dynamics in the feature space, which takes the form of a translation $\mathcal{T}(\bw) = \bw + \bar{\bphi} - \bphi(x(\bw))$, where $x(\bw)$ is the sample to be generated by Equation \ref{eqn:herding_1}. We can observe that the herding update on $\bw$ corresponds a rotation on the torus:
\begin{align}
G \circ \mathcal{T} (\bw) &= G (\bw + \bar{\bphi} - \bphi(x(\bw))) \nonumber\\
& = G (\bw + (\bar{\bphi} - \bphi(x_0)) - (\bphi(x(\bw)) - \bphi(x_0))) \nonumber\\
& = G(\bw) + (\bar{\bphi} - \bphi(x_0)) , \forall \bw \in \mathbb{R}^K
\end{align}
where the translation operator in $\mathbb{T}^K$ in the last equation refers to a rotation in the torus. This is an interesting property of herding with a maximum number of feature dimensions as it suggests that no matter what sample the dynamics takes, the trajectory of $\bw$ under the torus projection is driven by a constant rotation. Furthermore, if the set of elements in the translation vector $\bar{\bphi} - \bphi(x_0)$ is independent on rational numbers\footnote{Independence of a set of numbers, $x_1,\dots,x_K$, on rational numbers means that there does not exist a set of rational numbers $a_1, \dots, a_K$ that are not all zeros, such that $\sum_{d=1}^K a_d x_d = 0$.}, the trajectory on $\mathbb{T}^K$ fills the entire torus, which leads to a non-fractal attractor set with a finite volume in the original feature space.

{\small
\bibliographystyle{abbrvnat}
\bibliography{welling_chapter}
}



\end{document}